%%%%%%%% ICML 2022 EXAMPLE LATEX SUBMISSION FILE %%%%%%%%%%%%%%%%%

\documentclass[nohyperref]{article}

% Recommended, but optional, packages for figures and better typesetting:
\usepackage{microtype}
\usepackage{graphicx}
\usepackage{subfigure}
\usepackage{booktabs} % for professional tables

% hyperref makes hyperlinks in the resulting PDF.
% If your build breaks (sometimes temporarily if a hyperlink spans a page)
% please comment out the following usepackage line and replace
% \usepackage{icml2022} with \usepackage[nohyperref]{icml2022} above.
\usepackage{hyperref}

\usepackage[utf8]{inputenc} % allow utf-8 input
\usepackage[T1]{fontenc}    % use 8-bit T1 fonts
\usepackage{hyperref}       % hyperlinks
\usepackage{url}            % simple URL typesetting
\usepackage{booktabs}       % professional-quality tables
\usepackage{amsfonts}       % blackboard math symbols
\usepackage{nicefrac}       % compact symbols for 1/2, etc.
\usepackage{microtype}      % microtypography
\usepackage{xcolor}         % colors
\usepackage{amsthm}
\usepackage{dsfont}
\usepackage{wrapfig}
\usepackage{amsmath}
\usepackage{graphicx}
\usepackage{algorithmic}
\usepackage{multirow}
\usepackage[ruled,vlined]{algorithm2e}
\usepackage{float}
\usepackage{environ, wrapfig}
\usepackage{multicol}
\usepackage{lipsum}

\NewEnviron{figuretest}[1]{% 
\stepcounter{figure}
\begin{wrapfigure}{l}{0pt}
#1
\end{wrapfigure}
\noindent{}Figure \thefigure:
\BODY}

\newcommand{\state}{{s}}
\newcommand{\action}{{a}}
\newcommand{\st}{{s}_t}
\newcommand{\at}{{a}_t}
\newcommand{\seq}{{s}_e}
\newcommand{\aeq}{{a}_e}
\newcommand{\stp}{{s}_{t+1}}
\newcommand{\atp}{{a}_{t+1}}
\newcommand{\argmax}{\text{argmax}}
\newcommand{\argmin}{\text{argmin}}

\newcommand{\clf}{W}
\newcommand{\ldm}{G}
\newcommand{\ldmlevelset}{\mathcal{G}_c}
\definecolor{byzantine}{rgb}{0.74, 0.2, 0.64}

\usepackage[toc,page]{appendix}

% Attempt to make hyperref and algorithmic work together better:

% Use the following line for the initial blind version submitted for review:
% \usepackage{icml2022}

% If accepted, instead use the following line for the camera-ready submission:
\usepackage[accepted]{icml2022}

% For theorems and such
\usepackage{amsmath}
\usepackage{amssymb}
\usepackage{mathtools}
\usepackage{amsthm}

% if you use cleveref..
\usepackage[capitalize,noabbrev]{cleveref}

%%%%%%%%%%%%%%%%%%%%%%%%%%%%%%%%
% THEOREMS
%%%%%%%%%%%%%%%%%%%%%%%%%%%%%%%%
\theoremstyle{plain}
\newtheorem{theorem}{Theorem}[section]
\newtheorem{proposition}[theorem]{Proposition}
\newtheorem{lemma}[theorem]{Lemma}
\newtheorem{corollary}[theorem]{Corollary}
\newtheorem{remark}[theorem]{Remark}
\theoremstyle{definition}
\newtheorem{definition}[theorem]{Definition}
\newtheorem{assumption}[theorem]{Assumption}
\newtheorem{example}[theorem]{Example}

%%%%%% Comment Macro %%%%%%
\definecolor{DarkRed}{rgb}{0.7,0.1,0.1}
\definecolor{DarkBlue}{rgb}{0.1,0.1,0.7}

%%% err text macro %%%

% Todonotes is useful during development; simply uncomment the next line
%    and comment out the line below the next line to turn off comments
%\usepackage[disable,textsize=tiny]{todonotes}
\usepackage[textsize=tiny]{todonotes}

% The \icmltitle you define below is probably too long as a header.
% Therefore, a short form for the running title is supplied here:
\icmltitlerunning{Lyapunov Density Models}

\begin{document}

\twocolumn[
\icmltitle{Lyapunov Density Models: \\Constraining Distribution Shift in Learning-Based Control}

% It is OKAY to include author information, even for blind
% submissions: the style file will automatically remove it for you
% unless you've provided the [accepted] option to the icml2022
% package.

% List of affiliations: The first argument should be a (short)
% identifier you will use later to specify author affiliations
% Academic affiliations should list Department, University, City, Region, Country
% Industry affiliations should list Company, City, Region, Country

% You can specify symbols, otherwise they are numbered in order.
% Ideally, you should not use this facility. Affiliations will be numbered
% in order of appearance and this is the preferred way.
% \icmlsetsymbol{equal}{*}

\begin{icmlauthorlist}
\icmlauthor{Katie Kang}{yyy}
\icmlauthor{Paula Gradu}{yyy}
\icmlauthor{Jason Choi}{yyy}
\icmlauthor{Michael Janner}{yyy}
\icmlauthor{Claire Tomlin}{yyy}
\icmlauthor{Sergey Levine}{yyy}
\end{icmlauthorlist}

\icmlaffiliation{yyy}{University of California, Berkeley}
\icmlcorrespondingauthor{Katie Kang}{katiekang@eecs.berkeley.edu }

% You may provide any keywords that you
% find helpful for describing your paper; these are used to populate
% the "keywords" metadata in the PDF but will not be shown in the document
\icmlkeywords{Machine Learning, ICML}

\vskip 0.3in
]

% this must go after the closing bracket ] following \twocolumn[ ...

% This command actually creates the footnote in the first column
% listing the affiliations and the copyright notice.
% The command takes one argument, which is text to display at the start of the footnote.
% The \icmlEqualContribution command is standard text for equal contribution.
% Remove it (just {}) if you do not need this facility.

\printAffiliationsAndNotice{}  % leave blank if no need to mention equal contribution
%\printAffiliationsAndNotice{}%{\icmlEqualContribution} % otherwise use the standard text.

\begin{abstract}
Learned models and policies can generalize effectively when evaluated within the distribution of the training data, but can produce unpredictable and erroneous outputs on out-of-distribution inputs. In order to avoid distribution shift when deploying learning-based control algorithms, we seek a mechanism to constrain the agent to states and actions that resemble those that it was trained on. In control theory, Lyapunov stability and control-invariant sets allow us to make guarantees about controllers that stabilize the system around specific states, while in machine learning, density models allow us to estimate the training data distribution. Can we combine these two concepts, producing learning-based control algorithms that constrain the system to in-distribution states using only in-distribution actions? In this work, we propose to do this by combining concepts from Lyapunov stability and density estimation, introducing Lyapunov density models: a generalization of control Lyapunov functions and density models that provides guarantees on an agent's ability to stay in-distribution over its entire trajectory. 
\end{abstract}

% \vspace{-2em}

\section{Introduction}
Learning-based control algorithms, including model-free and model-based reinforcement learning, have shown the capability to surpass the performance of traditional model-based control on a number of complex, difficult-to-model systems \citep{kalashnikov2018qtopt,akkaya2019solving,li2021rlbipedal}, by leveraging data in place of approximate mathematical models. However, the data-dependent nature of these learning-enabled controllers also leads to new challenges. One such challenge is that learned models can only provide reliable predictions when queried within the training data distribution. If the inputs to a learning-based controller deviate far from its training data, the controller can suffer from \emph{model exploitation}, in which mispredictions from the machine learning model causes the controller to output suboptimal or even catastrophic action commands. 
%Thus, when deploying a learning-based controller, if the inputs that the controller receives at test-time deviate far from its training distribution, the output of the controller will be unpredictable, leading to potentially suboptimal or even catastrophic behavior. 

To mitigate the negative effects of distribution shift, can we design controllers that perform the desired task \emph{while ensuring the system remains in-distribution}?
%can ensure that a system remains in-distribution throughout its trajectory? 
% One approach is to use a density model that characterizes the likelihood of each point under the training distribution, constraining or regularizering the controller to prevent the agent from taking a low-likelihood action or visiting a low-likelihood state
One approach is to learn a density model of the training data, and use it to constrain or regularize the controller to prevent the agent from taking low-likelihood actions or visiting low-likelihood states
\cite{richter2017safe,mcallister2019robustness,fujimoto2019offpolicy,kumar2019stabilizing,wu2019behavior}. However, because density models are not aware of system dynamics, this kind of approach ``greedily'' chooses actions that are in-distribution at the next timestep, 
%however, density models are not aware of system dynamics, 
but does not provide a mechanism for \emph{staying} in-distribution over a long horizon. On the other hand, guaranteed long-horizon constraint satisfaction has been widely studied in control theory. Lyapunov functions map each state to a non-negative scalar which must decrease as the system evolves, providing a mechanism for ensuring the stability of a system over an infinite horizon. However, these notions of stability are formulated relative to fixed points of the system, and unrelated to any data or distribution.

The goal of our work is to provide a mechanism for ensuring that a system controlled by a learning-based policy will remain in-distribution throughout its trajectory. To this end, the main contribution of our work is the Lyapunov density model (LDM), a function that combines the data-aware aspect of density models with the dynamics-aware aspect of Lyapunov functions. Analogous to how control Lyapunov functions are used to guarantee that a controller stabilizes a system, we show how LDMs can be used to design controllers that are guaranteed to keep a system in-distribution, where na\"{i}vely using a density model will fail. We also present an algorithm for learning the LDM and its associated policy function directly from a training set of transitions, without assuming any knowledge of the underlying dynamics or active system interaction. Since learning the LDM is itself a data-dependent process which may suffer from distribution shift, we also provide theoretical analysis of our algorithm, showing that performing control with a LDM outperforms using a density model, even under approximation error. LDMs are a general tool for synthesizing and verifying controllers to stay in-distribution, and can be used in a variety of different learning-based control applications. We present a method of using an LDM in model-based RL as a constraint on the model optimizer, and evaluate this method empirically. The project webpage can be found at: \url{https://sites.google.com/berkeley.edu/ldm/}

%%SL.1.17: Don't forget to fill in the citations...

\section{Related works}
A large body of work in control theory focuses on providing guarantees about the evolution of dynamical systems. To guarantee stability, one could use Lyapunov functions for uncontrolled systems \citep{SastryShankar1999Ns}, and control Lyapunov functions for controlled systems \citep{SONTAG1989117}. To guarantee that an agent remains within a set of safe states, one could use control-barrier functions \citep{ames2017cbf} or Hamilton-Jacobi reachability value functions \citep{Bansal2017}. All of these approaches, in addition to our work, provide conditions for infinite-horizon guarantees based on the control invariance of the level sets of the functions. However, these methods usually assume access to the ground-truth dynamics model, whereas our method only uses data from the system. More importantly, these methods are primarily concerned with satisfying manually-designed constraints based on limitations of the physical system. In contrast, our method derives constraints from the training data distribution in order to avoid visiting out-of-distribution regions where the learning-based controller is prone to be erroneous. 

Some recent works take a data-driven approach to maintaining safety or stability%(including learning a Lyapunov function), 
, without assuming access to a ground-truth dynamics model \cite{berkenkamp2017safe, cheng2019end, dai2021lyapunov, chang2020neural, mehrjou2020learning, mittal2020neural, richards2018lyapunov, fisac2019bridging, robey2020learningcbf, tu2022sample}. Although this line of work shares superficial similarities with our work, such as learning a ``Lyapunov-like'' function % ``barrier-like'' value function
, the two serve orthogonal purposes. %Similar to the methods in the previous paragraph, t
The primary goal of such safe learning methods is to satisfy physical safety constraints, while our goal is to maintain the reliability of learning-based controllers by staying in-distribution. %In fact, one could potentially combine the two lines of work to produce a controller that can both ensure the physical safety of the system and reliable outputs from the learning-based controller. 

A number of prior methods, particularly in model-free and model-based offline RL~\citep{nair2021awac, laroche2019safe, kumar2019stabilizing,fujimoto2019offpolicy,kumar2020conservative,kidambi2020morel,yu2020mopo}, focus on learning a good policy from offline data by avoiding distributional shift. Although we also aim to learn from offline data and avoid distributional shift, our focus is different. Our aim is to learn a general function that can prevent distributional shift for any downstream controller. The Lyapunov density model is not associated with any task, its role is only to provide a constraint such that any downstream policy that satisfies this constraint avoids excessive distributional shift. %This makes it possible to combine our approach with a range of downstream control methods, including imitation learning and model-predictive control. % Furthermore, we provide a theoretical analysis that compares the performance of our proposed dynamics-aware Lyapunov density model with a more standard density-based constraint, which is roughly analogous to approaches used in prior offline RL methods \textcolor{red}{[citations]}.

\section{Problem Formulation} \label{sec:problem_formulation}
Consider a deterministic, continuous-state, discrete-time dynamical system $\stp = f(\st, \at)$ with state space $\mathcal{S}$ and action space $\mathcal{A}$. We start with a dataset of transitions $\{(\st^i, \at^i, \stp^i)\}_{i=1}^N$ generated from a distribution $P(\mathrm{s}, \mathrm{a})$, which is used to train a learning-enabled control system. We will say a state-action tuple $(\st,\at)$ is \emph{in-distribution} if $P(\st, \at) \geq c$, for some density level $c > 0$. We denote the set of in-distribution states and actions as $\mathcal{D}_c := \{(\st, \at) | P(\st, \at) \geq c\}$. We wish to query our learned model or policy, used for performing a downstream task,
only on points in $\mathcal{D}_c$, to mitigate distribution shift at test-time. While being in-distribution or in-support does not \emph{necessarily} guarantee that the learned models or policies will be accurate, this condition is simple and has been used widely in prior work \citep{fujimoto2019offpolicy, laroche2019safe, kumar2019stabilizing, dean2020guaranteeing}. 
% Though there may be other reasonable notions of in-distribution or in-support, given the prevalence of this viewpoint in prior work, we take it as axiomatic that we can expect the reliability of a learning-based model at a state-action pair to be proportional to its density (implying our model has less error in a region the more it is explored/observed in prior data). Therefore, our main goal will be to restrict the system to \emph{always} stay in the set $\mathcal{D}_c$.

% In density modeling, the goal is to approximate a distribution $P(x)$ using a model $P_\theta(x)$, trained with a dataset $\{x_i\}_{i=1}^N$ generated from $P(x)$, where $x$ is some random variable -- in our case, tuples $(\st,\at)$. This problem is often formulated as maximum likelihood estimation, where the aim is to find the best model $P_{\theta^*}$ within a function class such that $\theta^* = \argmax_{\theta \in \Theta} \frac{1}{N}\sum_{i=1}^N \log P_{\theta}(x_i)$. Modern density estimation techniques include energy-based models \citep{hinton2002training} and flow models \textcolor{red}{[citations]}, and other approaches \textcolor{red}{[citations]}.

Because we do not have access to the ground truth training data distribution $P(\mathrm{s}, \mathrm{a})$, we could instead approximate it with a density model $P_\theta(\st, \at)$, using techniques such as energy-based models \citep{hinton2002training} and flow models \cite{durkan2019neural}. However, even if $P_\theta(\st, \at)$ is learned effectively, it is not obvious how to design a controller such that $P_\theta(\st, \at) \geq c$ is satisfied for an agent's entire trajectory. The main challenge lies in the sequential nature of control problems: a controller that satisfies $P_\theta(\st, \at) \geq c$ at the time $t$ could still take the agent to a future state $s_{t+n}$ for which there exists no action $a_{t+n}$ that satisfies $P_\theta(s_{t+n}, a_{t+n}) \geq c$, inevitably driving the agent into a low density region. See Figure~\ref{fig_density_example} for an illustrative example. 

\begin{figure}
\begin{center}
\includegraphics[width=0.8\columnwidth, trim={0 0 0 0},clip]{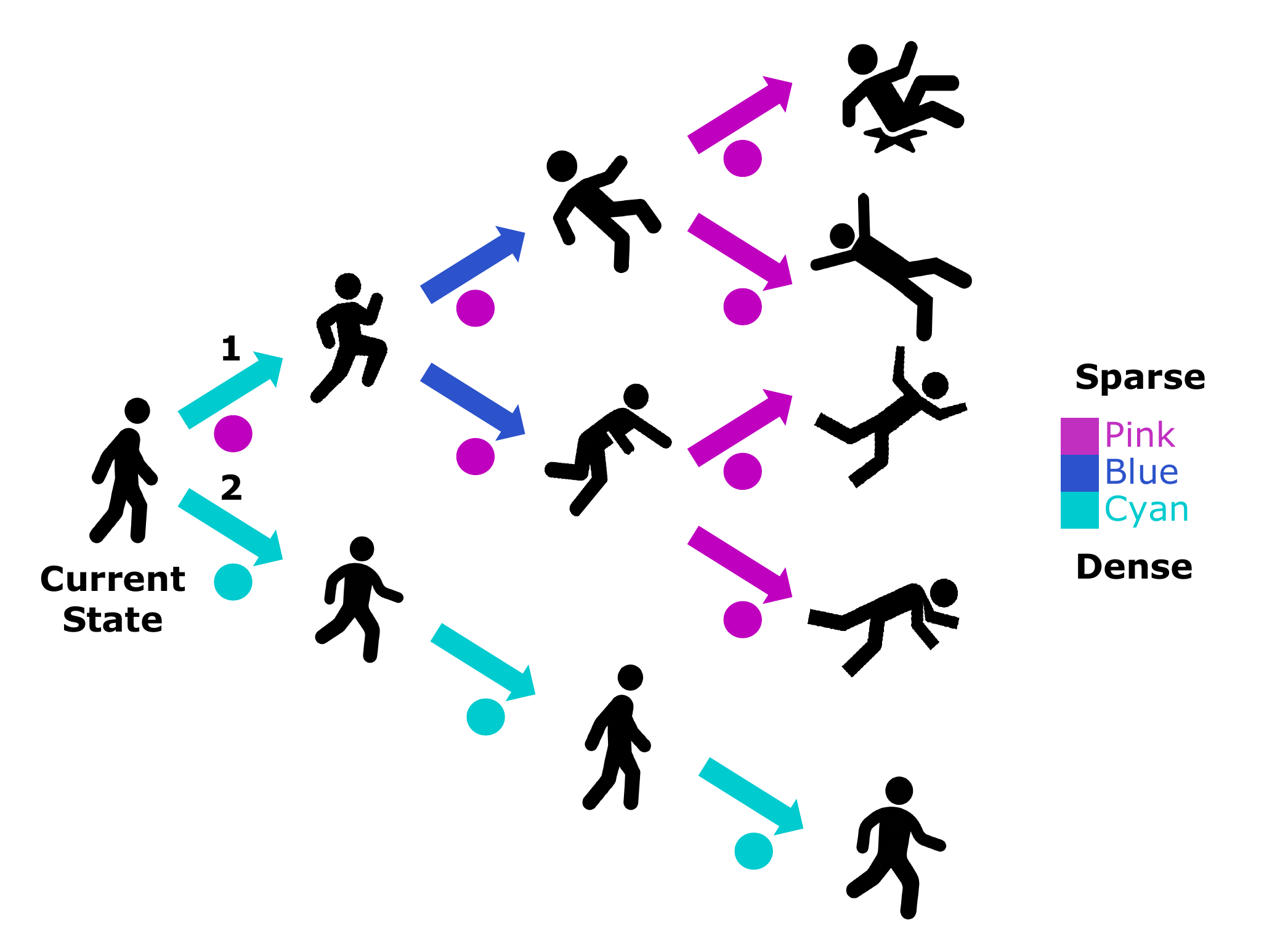}
\end{center}
\vspace{-1.5em}
\caption{Illustrative example of how naively constraining the agent with a density model can lead to failure. Arrows denote possible transitions, and color denotes the likelihood of a state and action under the data distribution. % and the coloring of the arrows denotes the data distribution (darker being higher density). % Here, the human figures denotes states of the system, the arrows denote possible transitions, and the coloring of the arrows denotes the data distribution. 
Suppose our goal was for the agent to remain at a higher density than blue. Using the density model as a constraint on the actions ($P_\theta(\st, \at) \geq c_{blue}$) would allow the agent to take action 1 or 2 from the current state. However, taking action 1 will inevitably cause the agent to go to the pink states, leading to failure. Instead, constraining the agent's actions with the lowest density that the agent will encounter in the \emph{future} (denoted by coloring of dots below arrows) will restrict the agent to only take action 2 at the current state, 
%Using the lowest density that the agent will encounter in the \emph{future} (denoted by coloring of dots below arrows) to constrain the actions will only allow the agent to move left, 
which ensures the agent stays above the blue density threshold throughout its trajectory. In order to \emph{remain} in-distribution, the agent must not only avoid states and actions that are currently unfamiliar, but also those that may lead to unfamiliar states and actions \emph{in the future}.}
% In comparison, using a "long-horizon" version of the density model (denoted by coloring of dots below arrow) as a constraint would successfully keep the agent in distribution.}
\vspace{-1em}
\label{fig_density_example}
\end{figure}

\section{Background on CLFs}
\label{CLF_prelim_subsection}
In order to perform long-horizon reasoning about the data distribution in conjunction with the system dynamics, we turn to a widely studied tool in controls: control Lyapunov functions (CLFs). CLFs are a tool for verifying that a system is stabilizable, meaning that system can be controlled to remain close to an equilibrium point over an infinite horizon. First, we introduce formal definitions for an equilibrium point, stabilizability, and a CLF.% In this work, we only only consider the discrete-time setting, which is more commonly studied in reinforcement learning, though the analysis could also be extended to continuous-time systems. 
% \footnote{While many different notions of stabilizability exist~\citep{Camilli2008,ames2014rapidly}, we focus on the weaker notion of stabilizability given by Def.~\ref{def:sisl} below, since this property is sufficient for our problem. }%Furthermore, CLFs are usually defined for asymptotic stabilizability \cite{SONTAG1989117}, but we generalize the definition to reason about stabilizablity in the sense of Lyapunov.}

\begin{definition}[Equilibrium Point]A state $s_e$ is an equilibrium point if $\exists$ action $a_e\in\mathcal{A}$ such that $f(s_e, a_e)=s_e$.
\end{definition}

\begin{definition}[Stabilizable in the sense of Lyapunov]
\label{def:sisl}
A system is stabilizable (in the sense of Lyapunov) if \;$\forall \epsilon > 0$, $\exists \delta$ such that for all $s_0\in\mathcal{S}$ such that $||s_0 - s_e||\leq \delta$, there exists $\{a_t\}_{t=0}^\infty$ such that the resulting $\{s_t\}_{t=0}^\infty$ satisfies $||s_t - s_e||\leq \epsilon$ $\forall t \geq 0$. \citep[Ch.4.4]{murray2017mathematical}
%%SL.1.17: this citation and other citations in this section seem to be malformed in the pdf
\end{definition}

\begin{definition}[Control Lyapunov Function]

A continuous and radially unbounded function $\clf$: $\mathcal{S} \rightarrow \mathbb{R}$ is a control Lyapunov function if the following conditions hold:
\vspace{-1em}
\begin{enumerate}
\item $\forall {s} \in \mathcal{S}$, $\exists {a} \in \mathcal{A}$ s.t. $\clf({s}) \geq \clf(f({s}, {a}))$,
\vspace{-0.5em}
\item $\forall {s} \neq s_e$, $\clf({s}) > 0$,
\vspace{-0.5em}
\item $\clf(s_e) = 0$.
\end{enumerate}
\label{def:clf}
\end{definition}
\vspace{-1em}

A CLF maps each state to a scalar ``energy level". The first condition of CLFs mandates that the system can always be controlled to remain at or below its current energy level.
This means that for an energy threshold, if an agent is inside the set of states with energies below that threshold, then it can be controlled to remain within that set for all time. This property of CLFs is formalized below. 

\begin{definition}[Control-Invariant Set]
A set of states $\mathcal{C} \subseteq \mathcal{S}$ is called a control-invariant set if\; $\forall {s}_0 \in \mathcal{C}$, $\exists \{\at\}_{t=0}^\infty$ such that\;$\st \in \mathcal{C}$ $\forall t\geq 0$. 
\citep[Ch.10.9]{borrelli2017predictive}
\end{definition}

\begin{theorem}
For a control Lyapunov function $\clf$, the sub-level set $\{{s}\;|\;\clf({s})\leq c\}$ is control-invariant for all $c \geq 0$. 
\label{level-sets-thm}
\end{theorem}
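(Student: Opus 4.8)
The plan is to prove control invariance directly from the defining descent property of a CLF, by constructing for each initial state in the sub-level set an admissible action sequence that never lets the energy rise above $c$. Fix $c \geq 0$ and write $\mathcal{C} := \{s \mid \clf(s) \leq c\}$. Given an arbitrary $s_0 \in \mathcal{C}$, so that $\clf(s_0) \leq c$, I would build the controls $\{\at\}_{t=0}^\infty$ recursively, one action at a time.

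The core step is a single invocation of condition 1 of the CLF definition: at any state $\st$ there exists an action $\at \in \mathcal{A}$ with $\clf(f(\st,\at)) \leq \clf(\st)$. Setting $\stp := f(\st,\at)$, this makes the energy non-increasing along the resulting trajectory. I would then run an induction on $t$ with hypothesis $\clf(\st) \leq c$: the base case is $\clf(s_0) \leq c$ by assumption, and the inductive step chains the CLF inequality with the induction hypothesis to get $\clf(\stp) \leq \clf(\st) \leq c$, so $\stp \in \mathcal{C}$. Hence $\st \in \mathcal{C}$ for all $t \geq 0$, which is exactly the definition of a control-invariant set.

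The argument is essentially monotonicity plus an induction bookkeeping, so there is no deep obstacle; the only point requiring care is that condition 1 must be reapplied at \emph{every} newly reached state $\st$, which is legitimate precisely because the CLF inequality is assumed to hold for all $s \in \mathcal{S}$, not merely on $\mathcal{C}$. Choosing one valid action at each step defines the whole sequence. I would also note in passing that the restriction $c \geq 0$ only serves to make $\mathcal{C}$ non-empty (it contains $s_e$ since $\clf(s_e) = 0$); for $c < 0$ the sub-level set is empty and control invariance holds vacuously. The radial unboundedness and positivity conditions (2 and 3) play no role here, so I would avoid invoking them—only the descent condition is needed.
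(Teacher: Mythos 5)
Your proof is correct and matches the paper's approach: the paper does not spell out a proof for this background theorem, but its proof of the state-action analogue (Theorem~\ref{thm:ldm_base_guarantee} in Appendix~\ref{appendix:sec4_proofs}) uses exactly your argument—apply condition 1 to obtain a non-increasing step, chain the inequalities to stay in the sub-level set, and conclude by induction. Your observations that only the descent condition is needed and that $c \geq 0$ merely guarantees non-emptiness are also consistent with how the paper uses the result.
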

\vspace{-1em}
Combining Theorem \ref{level-sets-thm} with the second and third conditions of CLFs, which mandates that the equilibrium point has the lowest energy, we can use the sublevel sets of the CLF to extract arbitrarily small control-invariant sets which contain the equilibrium point. This ensures that the agent is never able to deviate far from the equilibrium point, thereby guaranteeing stabilizability. 

\begin{theorem}
A system has a CLF if there exists an action sequence $\{a_t\}_{t=0}^\infty$ such that the resulting trajectory $\{s_t\}_{t=0}^\infty$ is stabilizable in the sense of Lyapunov with respect to $x_e$. \citep[Theorem~4.4.4]{murray2017mathematical}
\label{lyapunov_theorem}
\end{theorem}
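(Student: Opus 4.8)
The statement is a converse Lyapunov theorem: from the mere existence of stabilizing control sequences I want to manufacture a single scalar function satisfying the three CLF conditions of Definition \ref{def:clf}. The plan is to construct the candidate CLF explicitly from the dynamics, as the best-achievable worst-case deviation from the equilibrium over the infinite horizon. Concretely, for a state $s$ I would set
\[
\clf(s) := \inf_{\{a_t\}_{t\geq 0}} \; \sup_{t \geq 0} \|s_t - s_e\|, \qquad s_0 = s,\; s_{t+1} = f(s_t, a_t),
\]
where the infimum ranges over all admissible action sequences. Intuitively $\clf(s)$ is the radius of the smallest ball around $s_e$ within which some controller can trap the trajectory forever. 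The supremum (rather than a discounted sum) is the right object here because Definition \ref{def:sisl} asks only for boundedness of trajectories, i.e.\ stability rather than asymptotic stability, which matches the non-strict decrease demanded in Definition \ref{def:clf}.

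The two sign conditions and radial unboundedness are immediate from this construction. Since every trajectory starts at $s_0 = s$, the $t=0$ term forces $\clf(s) \geq \|s - s_e\|$; hence $\clf(s) > 0$ whenever $s \neq s_e$ (condition 2) and $\clf(s) \to \infty$ as $\|s\| \to \infty$ (radial unboundedness). For condition 3, the equilibrium action $a_e$ with $f(s_e, a_e) = s_e$ keeps the trajectory pinned at $s_e$, so $\sup_t \|s_t - s_e\| = 0$ and therefore $\clf(s_e) = 0$.

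The decrease condition (condition 1) is where the dynamics enter, and I would obtain it from a Bellman-type recursion. Splitting off the first step and using that $\|s - s_e\|$ does not depend on the tail of the action sequence yields
\[
\clf(s) = \inf_{a \in \mathcal{A}} \max\{\|s - s_e\|,\; \clf(f(s,a))\}.
\]
From here a short case analysis produces an action with $\clf(f(s,a)) \leq \clf(s)$: if some $a$ drives $f(s,a)$ to a level $\clf(f(s,a)) \leq \|s - s_e\|$, then the outer $\max$ equals $\|s-s_e\|$, so $\clf(s) = \|s - s_e\| \geq \clf(f(s,a))$; otherwise $\clf(f(s,a)) > \|s-s_e\|$ for every $a$, the $\max$ always equals $\clf(f(s,a))$, and $\clf(s) = \inf_a \clf(f(s,a))$, so any (near-)minimizing action suffices. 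This verifies condition 1, so $\clf$ satisfies all three conditions of Definition \ref{def:clf}.

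The main obstacles are analytic rather than conceptual. First, $\clf$ must be finite to be a legitimate real-valued function: this is exactly what stabilizability buys us, since for $s$ within the $\delta$-ball of Definition \ref{def:sisl} there is a control sequence keeping $\|s_t - s_e\| \leq \epsilon$, so $\clf(s) \leq \epsilon < \infty$ (a globally defined CLF correspondingly requires global stabilizability). Second, the decrease argument quietly assumes the infimum over $\mathcal{A}$ is attained (or approachable by a single action), which in general needs compactness of $\mathcal{A}$ together with continuity of $f$ and of $\clf$; without this one obtains only the relaxed inequality $\clf(f(s,a)) \leq \clf(s) + \varepsilon$. Third, and hardest, is establishing \emph{continuity} of $\clf$: inf--sup value functions are typically only semicontinuous, and upgrading to the continuity demanded by Definition \ref{def:clf} is the classical technical core of converse Lyapunov results (cf.\ Massera/Kurzweil-type smoothing arguments), usually requiring regularity of $f$ and a mollification of the raw construction. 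I therefore expect most of the effort to go into the continuity step and into pinning down the precise compactness and regularity hypotheses under which the infimum over actions is attained.
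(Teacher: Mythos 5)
First, a point about the comparison itself: the paper never proves Theorem~\ref{lyapunov_theorem}. It is imported as a known converse Lyapunov result, cited directly to \citep[Theorem~4.4.4]{murray2017mathematical}, so there is no in-paper proof to measure your attempt against. What the paper \emph{does} prove is structurally parallel to your attempt: your candidate $\clf(s) = \inf_{\{a_t\}}\sup_{t\geq 0}\|s_t - s_e\|$ is exactly the maximal-LDM construction \eqref{mdp-formulation} with $\|s_t-s_e\|$ substituted for $-\log P(s_t,a_t)$ (composed with a minimization over the first action, as in Lemma~\ref{property2}), and your Bellman-recursion argument for the decrease condition is the same argument used in the proof of Proposition~\ref{thm:max_ldm} and in Lemma~\ref{lemma:bellman}. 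So the construction is the natural one and is consistent with the paper's own machinery; conditions 2, 3, and radial unboundedness are handled correctly.

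As a proof of Theorem~\ref{lyapunov_theorem}, however, the proposal has genuine gaps---ones you flag honestly but do not close, and they are not peripheral. (i) Condition 1 of Definition~\ref{def:clf} demands an \emph{exact} action: $\exists a$ with $\clf(s)\geq \clf(f(s,a))$. Your case analysis only yields $\clf(f(s,a))\leq \clf(s)+\varepsilon$ unless the infimum over $\mathcal{A}$ is attained, which needs compactness of $\mathcal{A}$ plus continuity of $a\mapsto\clf(f(s,a))$---and continuity of $\clf$ is precisely what you have not established. Note that the paper faces this same issue in Proposition~\ref{thm:max_ldm} and escapes it by \emph{assuming} continuity in the hypothesis; Theorem~\ref{lyapunov_theorem} carries no such hypothesis, so you cannot. (ii) Continuity is not a technical afterthought: Definition~\ref{def:clf} requires it, inf--sup value functions of this type are in general only semicontinuous, and the Massera/Kurzweil-style smoothing you gesture at \emph{is} the substance of converse Lyapunov theorems, not a footnote to them. (iii) The theorem's hypothesis is the \emph{local} stabilizability of Definition~\ref{def:sisl} (trajectories starting in a $\delta$-ball can be kept in an $\epsilon$-ball). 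Away from that ball your $\clf$ may be identically $+\infty$, so it is not a real-valued, continuous, radially unbounded function on $\mathcal{S}$ at all, and nothing in the sketch repairs this---you would need to truncate or rescale the construction, or restrict its domain, to match the definition. In short: the right starting object, with the easy conditions verified, but everything that makes a converse Lyapunov theorem a theorem (continuity, attainment, local-to-global) is deferred---which is presumably why the paper cites the textbook rather than proving the statement.
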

% CT 5.27 is this theorem only in discrete time?  You could imagine it not working in continuous time.

\vspace{-0.1in}
\section{Lyapunov Density Models}
\label{sec4_ldms}
Density models provide us with an estimate of the training data distribution, while CLFs allow us to make guarantees about the long-horizon evolution of a dynamical system. In this section, we derive Lyapunov density models, a union of the two ideas which can be used to guarantee a system's ability to stay within regions of high data density.

In order for a learned model or policy to be queried only on points in $\mathcal{D}_c = \{(\st, \at) | P(\st, \at) \geq c\}$ when evaluated in closed-loop, the agent must only visit states from which it can be controlled to remain in $\mathcal{D}_c$.
%there must exist a sequence of actions $ \{\at\}_{t=1}^\infty$ that induces a trajectory $\{\st\}_{t=1}^\infty$
% $(\{\st\}_{t=1}^\infty, \{\at\}_{t=1}^\infty)$ such that $(\st, \at) \in \mathcal{D}_c$ for all $t \geq 0$. 
This desired property is exhibited by a control-invariant set with a state-dependent action space of $\mathcal{A}(\st) = \{\at |P(\st, \at) \geq c\}$. We define the notion of a control-invariant set in the joint state and action space to simplify notation. 

\begin{definition}[State-Action Control-Invariance]
A set $\mathcal{G} \subseteq \mathcal{S} \times \mathcal{A}$ is a state-action control-invariant set if \;$\forall ({s}_0, {a}_0) \in \mathcal{G}$, $\exists \{\at\}_{t=1}^\infty$
such that $(\st, \at) \in \mathcal{G}$ $\forall t\geq 0$.
\label{gci}
\end{definition}
%%SL.1.17: could we simply move this definition to the preceding section right where we define control invariant set, just to keep these all in one place?

Let $\ldmlevelset$ be a state-action control-invariant set inside $\mathcal{D}_c$. By Definition \ref{gci}, if an agent is in state $s_0$ and takes action $a_0$ such that $(s_0, a_0) \in \ldmlevelset$, then there exists a trajectory $(\{\st\}_{t=1}^\infty, \{\at\}_{t=1}^\infty)$ that can satisfies $P(\st, \at) \geq c$ for all $t=0, .., \infty$, which guarantees that the agent is able to remain in-distribution \emph{for all future time steps}.

How can we construct a state-action control-invariant set $\ldmlevelset$? Recall we can obtain a control-invariant set associated with \emph{each} level of a CLF (Theorem~\ref{level-sets-thm}). We adapt this property to derive a function such that 1) each sublevel-set is a state-action control-invariant set, and 2) each level of the function is associated with a density level of a probability distribution. We call this a Lyapunov density model (LDM).

% By constructing a state-action control-invariant set $\ldmlevelset$, we can verify that an agent's trajectory can stay above a single density threshold $c$. However, $c$ can take on a continuum of values, depending on the desired level of conservatism. How can we verify an agent's ability to stay above \emph{any} density level above 0? To this end, we take inspiration from the Lyapunov theory. Recall that given a CLF, we can obtain a control-invariant set associated with \emph{each} level of the function (Theorem~\ref{level-sets-thm}). We adapt this property to derive a function defined over the state and action space such that 1) each sublevel-set is a state-action control-invariant set, and 2) each level of the function is associated with a density level of a probability distribution. We call this a Lyapunov density model (LDM).

\begin{definition}[Lyapunov Density Models]\label{def:ldm}
A function $\ldm(\st, \at)$: $\mathcal{S} \times \mathcal{A} \rightarrow \mathbb{R}$ continuous in $\st$ and $\at$ is a Lyapunov Density Model for the dynamical system $\stp = f(\st, \at)$ and a density function $P:\mathcal{S}\times\mathcal{A}\rightarrow\mathbb{R}^{+}$ if the following conditions hold:
\vspace{-1em}
\begin{enumerate}
\item $\forall (\st, \at) \in \mathcal{S} \times \mathcal{A}$, $\exists \atp \in \mathcal{A}$ \;such that\;$\ldm(\st, \at) \geq \ldm(f(\st, \at), \atp)$ 
\vspace{-0.5em}
\item $\forall (\st, \at) \in \mathcal{S} \times \mathcal{A}$, $\ldm(\st, \at) \geq -\log(P(\st, \at))$ 
\end{enumerate}
\label{ldm-definition}
\end{definition}
\vspace{-1em}

We formalize the key property of Lyapunov Density Models (LDMs) in Theoremm~\ref{thm:ldm_base_guarantee} below, proven in Appendix~\ref{appendix:sec4_proofs}.
\begin{theorem}\label{thm:ldm_base_guarantee}
Any sub-level set of an LDM $\ldm$, \mbox{$\ldmlevelset := \{(\st, \at): \ldm(\st, \at) \leq -\log(c)\}$}, is a state-action control-invariant set s.t. $\forall (\st, \at) \in \ldmlevelset$, $P(\st, \at) \geq c$.
% with $P(\st, \at) \geq p$ $\forall (\st, \at) \in \mathcal{Q}_p$.
\label{main-theorem}
\end{theorem}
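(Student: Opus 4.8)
The plan is to establish the two assertions of the theorem separately: first the pointwise density bound $P(\st,\at) \geq c$ on $\ldmlevelset$, and then the state-action control-invariance of $\ldmlevelset$. The density bound will follow directly from the second LDM condition, while control-invariance will require an inductive construction of an admissible action sequence using the first LDM condition.

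For the density bound, I would fix any $(\st, \at) \in \ldmlevelset$, so that by definition $\ldm(\st, \at) \leq -\log(c)$. Chaining this with condition 2 of Definition~\ref{ldm-definition} gives $-\log(P(\st, \at)) \leq \ldm(\st, \at) \leq -\log(c)$. Since $-\log$ is strictly decreasing, applying its inverse reverses the inequality and yields $P(\st, \at) \geq c$, which is exactly the claim. This part is immediate and uses none of the system dynamics.

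For control-invariance, I would fix an arbitrary $(s_0, a_0) \in \ldmlevelset$ and construct the action sequence $\{\at\}_{t=1}^\infty$ recursively, proving by induction that $(\st, \at) \in \ldmlevelset$ for all $t \geq 0$. The base case $t=0$ holds by assumption. For the inductive step, suppose $(\st, \at) \in \ldmlevelset$, i.e.\ $\ldm(\st, \at) \leq -\log(c)$. Setting $\stp = f(\st, \at)$ and invoking condition 1 of Definition~\ref{ldm-definition} at the point $(\st, \at)$ produces an action $\atp \in \mathcal{A}$ with $\ldm(\stp, \atp) \leq \ldm(\st, \at) \leq -\log(c)$, hence $(\stp, \atp) \in \ldmlevelset$. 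This choice defines $a_{t+1}$ and closes the induction, yielding a full trajectory that remains in $\ldmlevelset$, as required by Definition~\ref{gci}.

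The argument is essentially a chaining of the one-step ``descent'' guaranteed by condition 1, so I do not anticipate a serious obstacle; the only point requiring care is the bookkeeping between the given initial action $a_0$ and the recursively chosen $a_1, a_2, \dots$, together with the fact that the states $\st$ are not free but pinned down by $\stp = f(\st, \at)$. The key observation that makes the induction go through is that condition 1 is asserted for \emph{every} $(\st, \at) \in \mathcal{S} \times \mathcal{A}$, so it applies at each reachable point along the trajectory regardless of how that point was generated, and the descent it provides is precisely what keeps successive iterates below the threshold $-\log(c)$.
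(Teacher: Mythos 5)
Your proposal is correct and follows essentially the same argument as the paper's own proof: the one-step descent from Condition 1 of Definition~\ref{ldm-definition} combined with the sublevel-set threshold, chained by induction to build the action sequence, with Condition 2 giving the pointwise density bound $P(\st,\at)\geq c$. The only difference is cosmetic ordering (you prove the density bound first, the paper proves it last), so nothing further is needed.
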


An LDM $\ldm(\st, \at)$ is determined by a dynamical system $\stp = f(\st, \at)$ and a data distribution $P(\st, \at)$ \footnote{LDMs are, in fact, a generalization of both Lyapunov functions and density models; for more details, see Appendix \ref{appdx_connections_to_density_models}.}, and maps each state and action pair to a scalar value representing a negative log density level $-\log(c)$. By Theorem~\ref{main-theorem}, we can simultaneously represent a state-action control-invariant set inside $\mathcal{D}_c$ for \emph{all} values of $c$ between $0$ and $\max_{\st, \at} P(\st, \at)$ using the sub-level sets of $\ldm(\st, \at)$, which we denote as $\ldmlevelset$. Since $\ldmlevelset$ is a state-action control-invariant set, as long as the agent only visits states and executes actions inside $\ldmlevelset$, its trajectory is guaranteed to stay within the $c$-thresholded support of the data distribution. Thus, using an LDM, we can verify that an agent can stay in-distribution for any desired density level. 

% An additional property of LDMs is that they can, in fact, exactly recover a Lyapunov function for the system under special cases of density models, or exactly recover its associated density model under special cases of dynamical systems. For a more detailed analysis of the connection between LDMs, density models, and CLFs, see Appendix \ref{appdx_connections_to_density_models}.

% In Fig. \ref{fig_density_example}, we illustrated an example where using a ``long-horizon'' density model as a constraint on the agent's actions could keep the agent in-distribution, where naively using a  density model constraint failed. The ``long-horizon'' density model in the example is in fact an LDM for the described system and data distribution. We can make the following formal statement as well:
% \begin{corollary}[LDM behavior on Example \ref{ex:mini_baseline_failure}]\label{ex:mini_baseline_failure_ldm}Theorem~\ref{thm:ldm_base_guarantee} implies that using an LDM constraint in the Example \ref{ex:mini_baseline_failure} setup yields actions with $P(s_t, a_t) \geq \frac{1}{4} \; \forall t \geq 0$.
% \end{corollary}

% For a given dynamical system and probability distribution, there could be many different functions that satisfy the conditions in Definition~\ref{ldm-definition} that characterize an LDM. 
We define a notion of a maximal LDM to represent the LDM with the largest state-action control-invariant sets for a given dynamical system and probability distribution. In our later discussion on deriving control policies with LDMs, we make use of maximal LDMs in order to impose the least restriction on the downstream task-solving policy.  
% While all valid LDMs can be used to make guarantees about an agent's ability to stay in-distribution, some make weaker guarantees than others. Let $\ldm_1(\st, \at)$ and $\ldm_2(\st, \at)$ be two different valid LDMs for the same dynamical system and data distribution. By Theorem~\ref{main-theorem} we know that the sublevel sets of $\ldm_1(\st, \at)$ and $\ldm_2(\st, \at)$ at $-\log(c)$, $\mathcal{G}_{1, c}$ and $\mathcal{G}_{2, c}$, both guarantee an agent inside the set is able to satisfy $P(\st, \at) \geq c$ for the duration of its trajectory. However, the elements inside $\mathcal{G}_{1, c}$ and $\mathcal{G}_{2, c}$ might be different. The larger of the two sets is able to make the $p$-threshold guarantee for a greater number of state and actions, thereby making a more useful statement than the smaller set. In order to make the strongest guarantees allowable by the dynamical system and data distribution, we define a notion of a maximal LDM to represent the LDM with the largest state-action control-invariant sets.

\begin{definition}[Maximal LDMs] An LDM $\ldm(\st, \at)$ is maximal if its each sublevel set $\ldmlevelset\!=\!\{(\st, \at)|\ldm(\st, \at)\!\leq\!-\log(c)\}$ is the largest state-action control-invariant set contained inside the corresponding sub-level set of the data distribution, $\mathcal{D}_{c}\!=\!\{(\st,\!\at)|P(\st,\!\at)\!\geq\!c\}$.
\label{maximal-ldm-definition}
\end{definition}

% In fact, because the maximal LDM is uniquely determined by a given data distribution and dynamics model, it can be represented in closed form with the following equation:
The maximal LDM can be represented in closed form with the following equation:
\begin{equation}
\ldm(s_0, a_0) = \min_{\{a_t\}_{t=1}^\infty} \max_{t \geq 0} -\log(P(s_t, a_t)).
\label{mdp-formulation}
\end{equation}

\begin{proposition}\label{thm:max_ldm}
If $\ldm(s, a)$ defined by (\ref{mdp-formulation}) is continuous, then it is a maximal LDM for $f(s, a)$ and $P(s,a)$.
\label{tabular-ldm-thm}
\end{proposition}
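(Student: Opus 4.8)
The plan is to verify, in order, that the function $G$ defined by (\ref{mdp-formulation}) first satisfies both conditions of Definition~\ref{def:ldm} (so that it is an LDM), and then that its sublevel sets are maximal in the sense of Definition~\ref{maximal-ldm-definition}. Writing $g(s,a) := -\log P(s,a)$ for the per-step cost, the workhorse for the first part is a Bellman-style recursion. Peeling the $t=0$ term off the running maximum — this term does not depend on the future actions $\{a_t\}_{t\ge 1}$, so the minimization passes through it — and using $s_1 = f(s_0,a_0)$, I expect to obtain
\[
G(s_0,a_0) = \max\!\Big( g(s_0,a_0),\ \min_{a_1 \in \mathcal{A}} G\big(f(s_0,a_0),a_1\big) \Big),
\]
where the inner term collapses to $\min_{a_1}G(f(s_0,a_0),a_1)$ because minimizing the tail maximum over $t\ge 1$ jointly over $a_1$ and $\{a_t\}_{t\ge 2}$ is, by definition, $\min_{a_1} G(s_1,a_1)$.

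Condition~2 of Definition~\ref{def:ldm} then falls out immediately: the $t=0$ branch of the max gives $G(s_0,a_0) \ge g(s_0,a_0) = -\log P(s_0,a_0)$. Condition~1 follows from the other branch, $G(s_0,a_0) \ge \min_{a_1} G(f(s_0,a_0),a_1)$, \emph{provided} this minimum is attained at some $a_1^*$; then $G(s_0,a_0) \ge G(f(s_0,a_0),a_1^*)$, which is exactly the required descent condition. Attainment is where I expect the only real friction: the $\min$ already written in (\ref{mdp-formulation}) presupposes the relevant infima are achieved, and the cleanest justification is that $\mathcal{A}$ is compact and $G$ is continuous (the latter being the standing hypothesis of the proposition), so $a \mapsto G(f(s_0,a_0),a)$ attains its minimum. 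I would state this compactness assumption explicitly. Without it, an $\epsilon$-argument still shows $\inf_{a_1} G(f(s_0,a_0),a_1) \le G(s_0,a_0)$ by exhibiting, for each $\epsilon>0$, an $\epsilon$-optimal continuation whose first action $a_1$ obeys $G(f(s_0,a_0),a_1) \le G(s_0,a_0)+\epsilon$; continuity and compactness then upgrade this to an honest minimizer.

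With $G$ established as a continuous LDM, maximality splits into the two claims of Definition~\ref{maximal-ldm-definition}. That each sublevel set $\mathcal{G}_c = \{(s,a): G(s,a)\le -\log c\}$ is a state-action control-invariant set contained in $\mathcal{D}_c$ is immediate from Theorem~\ref{main-theorem}, so no additional work is needed there. For the ``largest'' claim, I would take an arbitrary state-action control-invariant set $\mathcal{H} \subseteq \mathcal{D}_c$ and any $(s_0,a_0)\in\mathcal{H}$; by Definition~\ref{gci} there is a continuation $\{a_t\}_{t\ge1}$ keeping every $(s_t,a_t)\in\mathcal{H}\subseteq\mathcal{D}_c$, hence $P(s_t,a_t)\ge c$ and $g(s_t,a_t)\le -\log c$ for all $t$. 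This single feasible action sequence witnesses
\[
G(s_0,a_0) = \min_{\{a_t\}_{t\ge1}} \max_{t\ge 0} g(s_t,a_t) \ \le\ \max_{t\ge0} g(s_t,a_t) \ \le\ -\log c,
\]
so $(s_0,a_0)\in\mathcal{G}_c$ and therefore $\mathcal{H}\subseteq\mathcal{G}_c$. This direction is the cleanest part of the argument, requiring only monotonicity of $-\log$ and the definition of $G$ as an infimum over continuations. Overall, the technical heart is the recursion and the attainment issue in Condition~1, while everything about maximality reduces to elementary inequalities once the recursion is in hand.
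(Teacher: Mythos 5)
Your proposal is correct and follows essentially the same route as the paper's proof: both verify the two LDM conditions via the Bellman-style identity $G(s_0,a_0)=\max\{-\log P(s_0,a_0),\ \min_{a_1} G(f(s_0,a_0),a_1)\}$, invoke Theorem~\ref{main-theorem} for control-invariance of the sublevel sets, and obtain maximality from the min--max definition of $G$ (the paper phrases this last step as a contradiction, which is just the contrapositive of your direct containment argument $\mathcal{H}\subseteq\mathcal{G}_c$). If anything, you are more careful than the paper, which asserts the recursion without derivation and silently assumes the inner minimum is attained — the attainment issue you flag (compactness of $\mathcal{A}$ plus continuity of $G$) is genuinely left unaddressed in the paper's proof.
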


%For an in-depth example of a 2D linear system that highlights the properties of an LDM, see Appendix\ref{appendix:sec4_example}.

% \subsection{Example: Two-dimensional Linear System}

Here, we introduce an illustrative example of LDMs for a 2D linear system, given by
\vspace{-1em}
\begin{equation}
\label{eq:linear}
    s_{t+1} = F s_{t} + G a_t,
% \vspace{-0.5em}
\end{equation}
where $F=e^{A \Delta t}$, $G=A^{-1}(e^{A \Delta t}-I)B$, with
$A = \begin{bmatrix} \beta & \omega \\ -\omega & \beta \end{bmatrix}, B = \begin{bmatrix} 0 \\ 1 \end{bmatrix}$, $s:=\begin{bmatrix}x_1 \\ x_2\end{bmatrix}$ and $\beta, \omega > 0$. %, $\Delta t = 2\pi / 12$, $\omega=1$, and $\beta=0.15$.
The system spirals outwards from the origin when no action is applied, but can be stabilized to the origin with an control input. 

% When no action is is applied, the system spirals outwards from the origin. However, the action can shift the state in the $x_2$--direction, thus, a policy stabilizing to the origin, for instance, the LQR feedback policy $\pi_{LQR}(s)$ with input saturation at $||a||=5$ (Fig. \ref{fig_linear_appendix1}) can be designed.

% a stabilizing policy can be used to stabilize the system by shifting the state in the $x_2$--direction.
% Intuitively, when no action is applied, for each timestep the system \eqref{eq:linear} undergoes a rotation of $\omega \Delta t$ in the clockwise direction and a scaling of $e^{\beta \Delta t}$ in the radial direction.
% The origin is unstable and the state will spirally diverge without any control, however, the action can shift the state in the $x_2$--direction, which can be used to stabilize the system. For linear systems like \eqref{eq:linear}, an LQR-based stabilizing controller can be designed as a feedback policy, $\pi_{LQR}(s)=-K s$. % Since $\beta>0$, the origin is an unstable focus and the state will spirally diverge. The action can shift the state in the $x_2$--direction, which can be used to stabilize the system. Finally, for linear systems like \eqref{eq:linear}, an LQR-based stabilizing controller can be designed by solving a discrete-time Riccati equation, which results in a feedback policy $\pi_{LQR}(s)=-K s$. 

\begin{figure}
\includegraphics[width=\columnwidth, trim={0 0 0 0},clip]{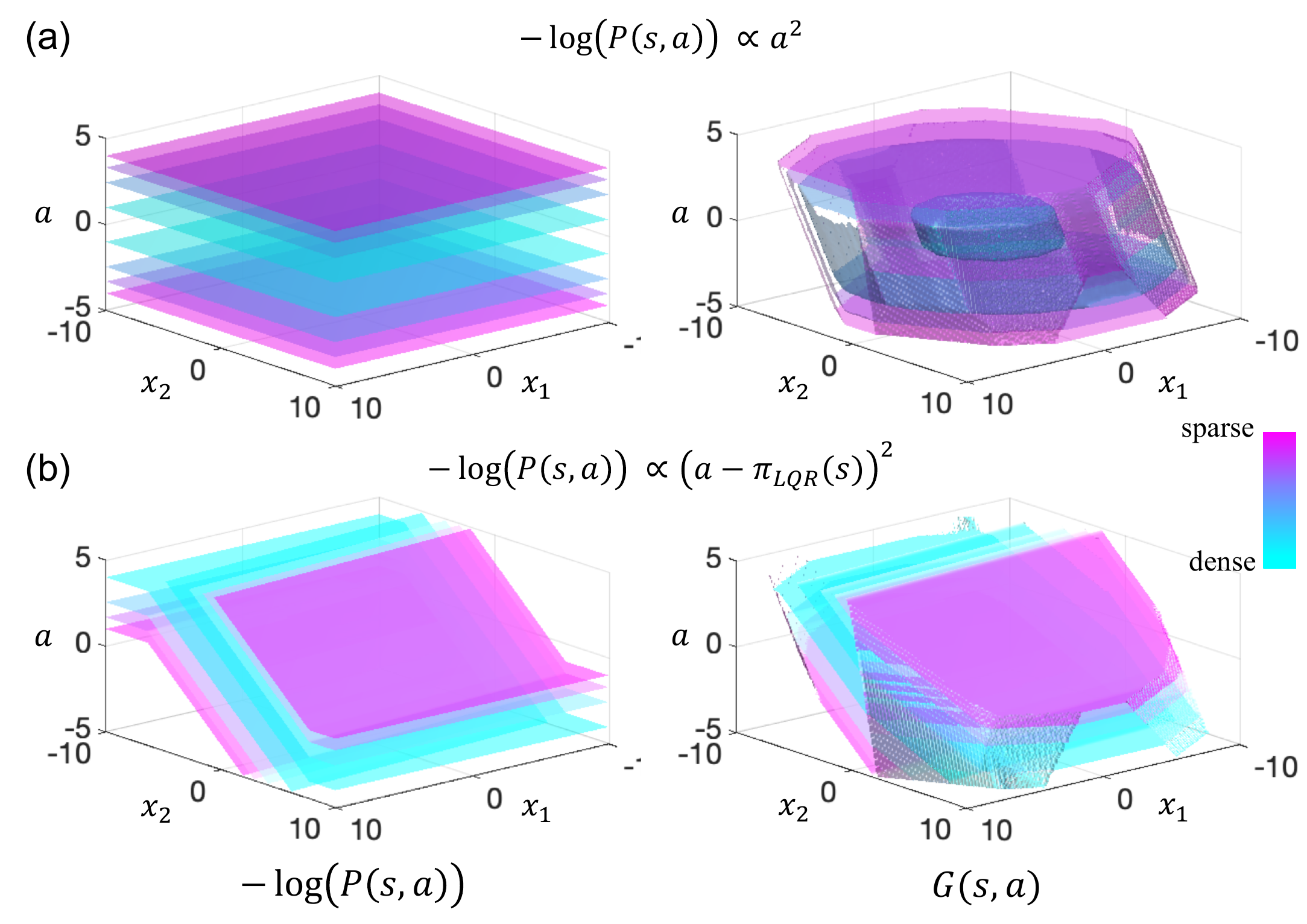}
\vspace{-1.5em}\caption{\footnotesize Level sets of various data distribution $P(s, a)$ (left) and the corresponding level sets of the maximal LDMs $G(s, a)$ (right) for the linear system example \eqref{eq:linear}. The color closer to cyan indicates denser region with lower $-\log{\left(P(s,a)\right)}$ or lower level of $G(s, a)$. (a) Gaussian policy with zero-mean and (b) Gaussian policy with mean $\pi_{LQR}(s)$ (stabilizing LQR controller).}\label{fig_linear_ldm}
\vspace{-1.5em}
\end{figure}

Figure \ref{fig_linear_ldm} shows example data distributions (left) and their associated maximal LDMs (right) for this system. 
% In Figure \ref{fig_linear}, level sets of the maximal LDMs $G(s, a)$ (on the right) that can be obtained for various distributions of the data density $P(s, a)$ (on the left) are visualized. The LDMs are computed on the grid of $(s, a)$ by using the method that will be described in the next section, so as to maximize the size of their state-action invariant sets. % The cyan color indicates denser region with lower $-\log{\left(P(s,a)\right)}$.
In particular, case (a) represents a scenario where we sample zero-mean Gaussian action uniformly across the bounded state space. % $s\in[-10, 10]\times[-10, 10]$. 
The data is dense near $a_t\!=\!0$, and sparse at higher action magnitudes. Since the natural dynamics is unstable, the system can only maintain a state that does not escape the high-density support near the origin. This results in the level set of $G(s, a)$ for a high density level (cyan) to be very small. % If we are allowed to explore lower-density region, then the corresponding invariant set can become much larger since now we are allowed to choose larger actions. 
In contrast, in case (b), the data distribution is collected by running a noisy feedback controller which stabilizes the system. %This represents a scenario where we have collected data from a controller that exhibits reasonably good behavior. 
In this case, the dense region of the data already inherits a stabilizing property. Therefore, it turns out that the invariant set for the high-density level maintains most volume of the original high-density region. Appendix \ref{appendix:sec4_example} provides more details and examples. % Finally, in case (c), a toric data distribution is given. Note that the natural dynamics of \eqref{eq:linear} induces symmetric rotations. The invariant sets on the left reveal that indeed it is possible to maintain the system to stay in a torus with an appropriate density level.

% To sum up, this example again highlights that to stay in-distribution, it is \textit{not} sufficient to maintain the density level only at the current time. Instead, it should be maintained to stay in a state-action control invariant set. Moreover, depending on the distribution of the data, the invariant set can be large or small, revealing the quality of the source of data. More details of this example including the policies associated with the LDMs are given in Appendix \ref{appendix:sec4_example}.

\vspace{-0.1in}
\section{Learning LDMs from Data}\label{sec:learning_ldms}

In order to make use of LDMs, we need a practical way of constructing them from data. In this section, we present an algorithm for learning a maximal LDM. 
First, to represent the distribution of our training data, we train a density model $E(\st, \at) = -\log(P(\st, \at))$. Next, we introduce the following LDM backup operator:
\begin{equation}\label{eq:bellman_op_density}
\mathcal{T}\ldm(s,a) = \max\{E(s, a), \min_{a' \in \mathcal{A}}  \gamma \ldm(f(s, a), a')\}    
\end{equation}
% This operator is derived from the definition of $G$ (provided in Appendix \ref{appendix:convergence}).
% This operator closely resembles the Bellman backup operator, except that instead of adding together the reward and the next ``value'' (in this case, $\ldm(f(s, a), a')$), it takes the maximum of the two. However, just like in Q-learning, the action $a'$ at the next time step is chosen to optimize the learned function, so the LDM $\ldm(s,a)$ plays a similar role to the Q-function in a Q-learning algorithm. Analogous to value iteration or Q-learning, where one can solve for the value function via dynamic programming by iteratively applying the Bellman backup, we can solve for the maximal LDM by iteratively applying the LDM backup. 
Analogous to value iteration or Q-learning (here, the LDM $\ldm(s,a)$ plays a similar role to the Q-function), where one can solve for the value function via dynamic programming by iteratively applying the Bellman backup, we can solve for the maximal LDM by iteratively applying the LDM backup. 
By initializing $\ldm(s, a)$ to $-\log(P(s, a))$ and performing the update rule $\ldm_{k+1} \leftarrow \mathcal{T}\ldm_k$, $\lim_{k \rightarrow \infty} \ldm_k$ will converge to the solution of Equation (\ref{mdp-formulation}) when $\gamma=1$ (a formal statement and proof are provided in Appendix \ref{appendix:convergence}).
While this method can easily be implemented without any discount, corresponding to $\gamma=1$, the use of a discount factor aids the theoretical analysis \footnote{Discounted iteration of Equation~\ref{eq:bellman_op_density} yields an LDM that satisfies $\ldm(s_0, a_0) = \min_{\{a_t\}_{t=1}^\infty} \max_{t \geq 0} - \gamma^t \log(P(s_t, a_t))$, which essentially counts densities far into the future as being lower than they really are, which we can intuitively treat as either as a heuristic variance reduction technique or a way to account for greater uncertainty far in the future. We additionally perform ablation experiments to analyze the effects of using a discount $\gamma < 1$ empirically (Appendix \ref{app:ablation_discount_factor}). }.

The above procedure makes use of the true dynamics $\stp = f(\st, \at)$. However, in reality we only have access to a dataset of transitions $D = \{(\st^i, \at^i, \stp^i)\}_{i=1}^N$, which may be used to learn a dynamics model or a policy for solving a downstream task. Thus, we would like to learn the LDM from this same dataset. We present a fitted version of LDM training, in which the learned LDM $\hat{G}_{k+1}$ is optimized using samples by iteratively applying:
\vspace{-0.5em}
\begin{align}\label{ldm_update}
\hat{G}_{k+1} \in \arg\min\limits_{G\in\mathcal{G}} \sum_{(s, a) \in D} \left(G(s, a) - \mathcal{T}\hat{G}_k(s, a)\right)^2
\vspace{-0.5em}
\end{align}
We analyze this procedure formally in Sec. \ref{sec:theory}.
% We summarize this abstract algorithm in Alg. \ref{ldm_alg_theoretical}, and analyze it formally in Sec. \ref{sec:theory}.
% \begin{algorithm}[H]
% \SetAlgoLined
%  input parameters: function class $\mathcal{G}$, \# of training steps $K$, discount parameter $\gamma$\\
%  train density model $E(s, a) = -\log P(s,a)$\\
%  set $\hat{G}_0(s, a) \leftarrow E(s,a)$ \\
%  \For{k=0, \ldots, K-1}{
%   $\hat{G}_{k+1} \in \arg\min\limits_{G\in\mathcal{G}} \errtext{\sum_{(s, a) \in D} \left(G(s, a) - \mathcal{T}\hat{G}_k(s, a)\right)^2}{$L_D(G, \hat{G}_k)$}$

%  }
%  return $\hat{G}_K$
%  \caption{LDM Training [abstract algorithm]}
% \label{ldm_alg_theoretical}
% \end{algorithm}

Finally, in order to use LDMs in high-dimensional and realistic domains, we present a practical algorithm for training LDMs with deep neural networks. We represent the density model $E_\theta(\st, \at) = -\log(P(\st, \at))$ with a neural network, and train it as a flow model \cite{durkan2019neural} (implementation details in Appendix \ref{appendix:ldm_training}). We also train a neural network model of the LDM $\ldm_{\phi}(\st, \at)$ in conjunction with a policy $\pi_{\psi}(\st)$. This algorithm is structurally similar to standard off-policy actor-critic RL methods, and is implemented in a similar way in practice by modifying standard deep RL implementations, replacing the standard Bellman backup with our \textcolor{byzantine}{LDM backup} (Eq. \ref{eq:bellman_op_density}). The policy is optimized to output the actions that minimize the LDM (Eq. \ref{policy_objective}), and the policy’s actions are used to perform the LDM backup (Eq. \ref{ldm_objective}), because exactly solving for the minimizing action at each step is difficult with continuous, high-dimensional action spaces:\footnote{In our theoretical analysis, we will still assume that the minimization of the LDM with respect to $a$ can be done exactly, since the policy has unrestricted access to $\mathcal{T}\hat{G}_{k}$.}
\begin{align}
\psi =& \argmin_\psi \mathbb{E}_{\st \sim p_D}[\ldm_{\phi}(\st, \pi_{\psi}(\st))] \label{policy_objective}\\
\phi =& \argmin_\phi \mathbb{E}_{\st, \at, \stp \sim p_D}[(\ldm_{\phi}(\st, \at) - \nonumber \\
&\textcolor{byzantine}{\max\{E_\theta(\st, \at), \gamma G_\phi(\stp, \pi_\psi(\stp))\}})^2] \label{ldm_objective}
\end{align}
%%SL.1.23: make sure to include all the nitty-gritty implementation details (e.g., what kind of flow model) in an appendix somewhere
% Following the implementation in SAC~\citep{haarnoja2018soft}, we also make use of a number of deep RL techniques to stabilize training. These include using double value networks $\phi_1, \phi_2$, target networks $\bar{\phi}_1, \bar{\phi}_2$, and an additional entropy term in the training objective with automatic tuning, $\alpha \mathcal{H}(\pi_\psi(.|\st))$. 
Following the implementation in soft-actor critic (SAC)~\citep{haarnoja2018soft}, we also make use of a number of deep RL techniques to stabilize training. These include using double value networks $\phi_1, \phi_2$, target networks $\bar{\phi}_1, \bar{\phi}_2$, and an additional entropy term in the training objective with automatic tuning, $\alpha \mathcal{H}(\pi_\psi(.|\st))$. Furthermore, because we are learning the LDM from an offline dataset, we follow the implementation in CQL\cite{kumar2020conservative} by adding a conservative regularization term, CQL($\mathcal{H}$), in the LDM update to regulate underestimation \footnote{Since Q-values are \emph{maximized} while LDMs are \emph{minimized}, our conservative term handles \emph{underestimation}, not \emph{overestimation}.} in the LDM on out-of-distribution actions.
%%SL.1.23: note that some readers might not realize why it's underestimation vs overestimation (ie miss the sign flip), perhaps it would be good to briefly explain this
% Our algorithm uses the loss functions: 
% \begin{align}
% \label{critic_update}
% L_\ldm(\phi) = &\mathbb{E}_{\st, \at, \stp \sim p_D}[(\ldm_{\phi}(\st, \at) - G_{\text{tar}})^2+\text{CQL}(\mathcal{H})] \nonumber\\
% G_{\text{tar}} = &\max\{E_\theta(\st, \at), \gamma G_{t+1}\}\nonumber\\
% G_{t+1} = & \max\{\ldm_{\bar{\phi}_1}(\stp, \pi_{\psi}(\stp)), \ldm_{\bar{\phi}_2}(\stp, \pi_{\psi}(\stp))\} \nonumber\\
% & - \alpha \mathcal{H}(\pi_\psi(.|\stp))] \nonumber\\
% L_\pi(\psi) =& \mathbb{E}_{\st \sim p_D}[\ldm_{\phi}(\st, \pi_{\psi}(\st))] - \alpha \mathcal{H}(\pi_\psi(.|\st)) \nonumber\\
% L(\alpha)=& \alpha (\mathcal{H}(\pi_\psi(.|\st)) - H_{\text{tar}}) \nonumber
% %\label{actor_update}
% \end{align}
A summary of our practical algorithm can be found in Alg. \ref{ldm_alg}. We make use of this algorithm in our experiments in Section \ref{sec:experiments}. More implementation details can be found in Appendix \ref{appendix:ldm_training}.

% \begin{algorithm*}%[H]
% \SetAlgoLined
%  Initialize parameter vectors $\theta$, $\phi_1$, $\phi_2$, $\bar{\phi}_1$, $\bar{\phi}_2$, $\psi$, and $\alpha$\\
%  Train flow model $E_\theta (\st, \at)$\\
%  \For{\text{num LDM training steps}}{
%   Update LDM networks: \\
%   $\text{ }\text{ }\text{ } \phi_i \gets \phi_i - \lambda_\ldm \nabla_{\phi_i} L_\ldm(\phi_i) $ for $i \in \{1, 2\}$\\
%   Update LDM target networks:\\
%   $\text{ }\text{ }\text{ } \bar{\phi}_i \gets \tau \phi_i + (1-\tau) \bar{\phi}_i$ for $i \in \{1, 2\}$\\
%   Update policy network:\\
%   $\text{ }\text{ }\text{ } \psi \gets \psi - \lambda_\pi \nabla_{\psi} L_\pi(\psi) $\\
%   Update entropy multiplier:\\
%   $\text{ }\text{ }\text{ } \alpha \gets \alpha - \lambda_\alpha \nabla_{\alpha} L(\alpha)$

%  }
\begin{algorithm*}%[H]
\SetAlgoLined
 Initialize parameter vectors $\theta$, $\phi_1$, $\phi_2$, $\bar{\phi}_1$, $\bar{\phi}_2$, $\psi$, and $\alpha$\\
 Train flow model $E_\theta (\st, \at)$\\
 \For{\text{num LDM training steps}}{
  Take gradient step on LDM networks: \\
  $\text{ }\text{ }\text{ }\text{ } \phi_i \gets \phi_i - \lambda_\ldm \nabla_{\phi_i} \Big(\mathbb{E}_{\st, \at, \stp \sim p_D}[(\ldm_{\phi_i}(\st, \at) - \max\{E_\theta(\st, \at), \gamma G_{t+1}\})^2+\text{CQL}(\mathcal{H})]\Big) $ for $i \in \{1, 2\}$\\
  $\text{ }\text{ }\text{ }\text{ }  \text{where } G_{t+1} = \max\{\ldm_{\bar{\phi}_1}(\stp, \pi_{\psi}(\stp)), \ldm_{\bar{\phi}_2}(\stp, \pi_{\psi}(\stp))\}$\\
  Update LDM target networks:\\
  $\text{ }\text{ }\text{ }\text{ } \bar{\phi}_i \gets \tau \phi_i + (1-\tau) \bar{\phi}_i$ for $i \in \{1, 2\}$\\
  Take gradient step on policy network:\\
  $\text{ }\text{ }\text{ }\text{ } \psi \gets \psi - \lambda_\pi \nabla_{\psi} \Big( \mathbb{E}_{\st \sim p_D}[\ldm_{\phi}(\st, \pi_{\psi}(\st))] - \alpha \mathcal{H}(\pi_\psi(.|\st)) \Big) $\\
  Take gradient step on entropy coefficient:\\
  $\text{ }\text{ }\text{ }\text{ } \alpha \gets \alpha - \lambda_\alpha \nabla_{\alpha} \Big(\alpha (\mathcal{H}(\pi_\psi(.|\st)) - H_{\text{tar}})\Big)$
 }
 \caption{LDM Training [practical algorithm]}
\label{ldm_alg}
\end{algorithm*}

\section{Control with LDMs}\label{sec:ldm_control}
In this section, we discuss how LDMs can be used with learned models to produce control strategies that remain in-distribution while accomplishing a task. We focus on applying LDMs to model-based reinforcement learning, a problem setting that is known to be particularly sensitive to distribution shift, though in principle LDMs could be used with other learned models, including model-free or imitation learning policies, all of which are susceptible to distributional shift. Model-based RL methods can use learned models to train policies, or use learned models directly for control via planning (e.g., via model-predictive control, MPC). We focus on the latter category of methods to provide a simple evaluation of LDMs, which is amenable to both theoretical and empirical analysis.

%%SL.1.27: merged some of the materials into the above para and generally shortened
%In reinforcement learning, we are given a dataset of system transitions \mbox{$D = \{(\st^i, \at^i, \stp^i)\}_{i=1}^N$} and a reward signal $r(\st, \at)$, and the goal is to devise a policy which maximizes the cumulative reward. Model-based algorithms are a class of RL algorithms that uses a learned model to derive a policy, where the specific mechanism for obtaining the policy vary greatly across different algorithms. In this work, we consider one of the most simple approaches: using the learned model for planning via model predictive control. We choose to study the effects of LDM on distribution shift under this framework because it is well studied in the controls literature and amendable to theoretical analysis. 

\subsection{MPC with an LDM Constraint}\label{sec:mpc_ldm}

In our analysis, we will apply the LDM to constrain a short-horizon MPC controller that uses a learned model, such that the controller stays close to the training distribution where the model is likely to be accurate. The model $f_{\xi}(\st, \at)$ is trained on the same dataset as the LDM, which we assume is sampled from $P(\st,\at)$, using empirical risk minimization on a sample-based estimate of the loss $L_f(\xi)$:%\vspace{-1em}
\begin{equation}
\label{eq:model_erm_loss}
L_f(\xi) = \mathbb{E}_{(\st, \at, \stp) \sim p_D}[(f_{\xi}(\st, \at) - \stp)^2].
\end{equation}
This model can then be used with MPC to select actions $\mathbf{a}_{1:H}$. The LDM, trained via Alg.~\ref{ldm_alg}, modifies the standard MPC procedure by imposing an additional constraint:
\begin{align}\label{mbrl_opt_prob}
{a}_{1:H}^* = &\argmax_{{a}_{1:H}} \sum_{t=1}^H r(\st, \at) \\
    \text{s.t.} \quad & \stp = f_{\xi}(\st, \at) \quad \forall 1 \leq t \leq H-1 \nonumber \\
    & \ldm_{\phi}(\st, \at) \leq -\log(c) \quad \forall 1 \leq t \leq H
    \label{ldm_constraint}
\end{align}
As in standard MPC, the controller executes the first action, and then replans. Theorem \ref{main-theorem} guarantees that, for a correct LDM, as long as the agent is in the sub-level set of the LDM at $-\log(c)$, it is able to satisfy $P(\st, \at) \geq c$ throughout its trajectory. Thus, under a correct LDM on Line (\ref{ldm_constraint}), this method of constraining MPC is guaranteed to produce trajectories in the $c$-thresholded support of the data.
% Further, from any starting state $s_0$ we can actually compute the maximal threshold $c_{\mathrm{max}}(s_0) \doteq e^{-\inf_{a_0} G(s_0,a_0)}$ such that there is a sequence of actions $\{a_t\}_{t\geq0}$ for which $P(s_t, a_t) \geq c_{\mathrm{max}}(s_0)$ for all $t\geq 0$. The usefulness of this quantity is twofold: (1) it can serve as a safety metric for the state space, (2) it tells us specifically which $c$ values ensure that the MPC optimization problem with constraint \ref{ldm_constraint} is feasible (i.e. has a solution).

Finally, since Algorithm \ref{ldm_alg} aims to learn the maximal LDM, it is the \emph{least restrictive} constraint for making the guarantee, giving the agent the most flexibility for solving the task. Of course, in practice the LDM itself might suffer from approximation errors; we analyze these further in Section~\ref{sec:theory}.

\subsection{MPC with a Density Model Constraint}
% In Example \ref{ex:mini_baseline_failure}, we discussed a setting where imposing a na\"{i}ve density constraint on the control for only the current step could lead to inevitable failure in future steps. However, one may wonder if imposing the density constraint for a longer horizon, i.e. solving the MPC optimization problem with 
One may wonder if imposing the density constraint for a longer horizon by planning with the constraint
\begin{equation}\label{density_constraint}
E_{\theta}(\st, \at) \leq -\log(c) \quad \forall 1 \leq t \leq H\end{equation} in place of Equation (\ref{ldm_constraint}) would alleviate the myopic behavior shown in Figure \ref{fig_density_example}. We find that the answer is \textit{no}. We construct an example scenario, extending the one in Figure \ref{fig_density_example}, for which planning with with a density model constraint fails even under perfect knowledge of the dynamics and data distribution, but planning with an LDM can succeed.
\begin{example}\label{ex:baseline_failure}Let $\mathcal{S} = \mathcal{A} = \mathbb{Z}$, $f(s,a) = s+a$, $r(s,a) = a$ and $s_0 = 0$. For any $H \geq 1, \epsilon > 0$, $\exists$ data distribution $\mathcal{D}_{H, \epsilon}$ for which planning with constraint (\ref{density_constraint}) will lead to $s_{H+1}$ s.t. $P(s_{H+1}, a) \leq \epsilon \; \forall a\in\mathcal{A}$. However, constraining via the LDM (\ref{ldm_constraint}) ensures we only reach states $s_t$ for which $\exists a_t$ with $P(s_t, a_t) \geq \frac{1}{2(H+1)}$.
\end{example}
Appendix~\ref{appendix:ldm_control} contains the explicit construction, a discussion of its interpretation, and the proof of the claims in Ex.~\ref{ex:baseline_failure}. The failure mode of (\ref{density_constraint}) highlighted above depends on the horizon of the planning problem. This is because larger $H$ enables MPC to plan further into the future, and hence satisfy the density lower bound over more time steps. However, in practice one cannot set $H$ to be too large because exceedingly large horizons are computationally expensive, lead to accumulated model error, and potentially increase variance~\citep{tassa2012,Deisenroth2015}. Current model-based RL methods that use MPC typically utilize horizons as low as $H=10$ or $H=20$~\citep{chua2018deep}, $150\times$ lower than the full task length.
%%SL.1.27: shortened this a bunch to make room
%For example, \citet{Deisenroth2015} address the inability of deterministic methods to cope with extra bias introduced by horizons that are not very short, PETS \cite{chua2018deep} is robust to this issue and can use less short planning horizons, but the longest considered is still $150\times$ smaller than the task-trajectory. Finally, when acting online in real-time, speed is of utmost importance and hence a short horizon is a practical necessity \cite{tassa2012}.

The main takeaway is that density thresholding fails even with the true dynamics $f$ and distribution $P$. % despite there being a clear ``correct'' solution.
% We see from the construction that the problem comes from taking an action that, while myopically good, takes us down a path which will become unstable/unknown. Again, we see that the LDM solves this issue as it filters out actions for which there is an unacceptable probability degradation at any time in the future (Thm.~\ref{thm:ldm_base_guarantee}). As a result, 
Using an LDM constraint, we can plan with any horizon and ensure that we will stay within some reasonably well-visited region (and hence we can support arbitrarily long task-horizon).

\section{Theoretical Analysis} \label{sec:theory}
So far our discussion has assumed that the LDM is learned perfectly, but this is not the case in practical settings due to approximation error. We analyze the effect of such errors on the LDM next. We aim to answer the following two questions: (1) Can we bound the error in the LDM when it is trained from data? (2) How does this error influence the ability of the LDM to keep MPC procedures (Eq. \ref{mbrl_opt_prob}) in-distribution? Our analysis will show that even with approximation error, the LDM retains provable guarantees for keeping MPC within high-density regions, in contrast to the na\"{i}ve constraint based on the density model. We conclude the section by returning to Example~\ref{ex:baseline_failure} and showing that the learned LDM (via Eq. \ref{ldm_update}) is still able to remedy the myopic tendencies of naive density thresholding.
%For both of these, we get positive results. In Sec.~\ref{subsec:ldm_learn} we derive the error bound of Alg.~\ref{ldm_alg_theoretical} 
%under a set of reasonable assumptions, and in Sec.~\ref{sec:mpc_theory} we show that using an approximate LDM as a constraint for MPC retains provable guarantees on keeping the agent in high-density regions long-term (and hence in regions where any data-based model is reliable). This is in contrast to a na\"{i}ve constraint based on only the density model.
\subsection{Error Bound of LDM Learning Procedure}\label{subsec:ldm_learn}

The goal of this section is to give a formal guarantee for repeatedly performing the update in Eq. \ref{ldm_update}. One cause for error in our updates is the lack of access to the true data distribution. What we will ultimately converge to is the fixed point of Eq. \ref{eq:bellman_op_density}, the $\gamma$-discounted LDM using the density model $E(s,a)$: $\hat{G}^\star \doteq \min\limits_{\{a_t\}_{t=1}^\infty} \max\limits_{t \geq 0} \gamma^t E(s_t, a_t)$. Hence, our main result will analyze convergence to this quantity. To do so, we first introduce a quantity that captures the maximal level of `recoverability' of a system, i.e., how many times larger the maximal reachable density from any given initial conditions can be.
\begin{definition}[Recoverability]Define the recoverability of the system to be\begin{equation*}
R\doteq \sup_{\substack{{s_0, T\geq 0, \{a_t\}_{t=0}^T}\\\text{s.t. }P(s_0, a_0) > 0}} \frac{P(s_T, a_T)}{P(s_0, a_0)} \;\; \text{s.t.}\;s_{t+1} = f(s_t, a_t).
\end{equation*}\label{recoverability_def}\end{definition}
Intuitively, a small $R$ would mean that once we reach a low density region, we cannot hope to go back to a high density region, while a large $R$ means that we can considerably multiply the density of any initial condition by acting appropriately. We define the $P$-norm of a function $\|g\|_{P} \doteq \mathbb{E}_{(s,a) \sim P}|g(s,a)|$ and its $\infty$-norm $\|g\|_{\infty} \doteq \sup_{s,a}|g(s,a)|$. The main result (proved in Appendix~\ref{appendix:ldm_control}) is stated below:
\begin{proposition}\label{prop:ldm_fqi} Let $\hat{G}_0 = E$. The result of applying the update in Eq. \ref{ldm_update} $K$ times satisfies:
\begin{equation}
\|\hat{G}_K - \hat{G}^\star\|_P \leq \dfrac{R \cdot \epsilon_{\mathrm{ls}}}{1-\gamma} + \gamma^K \cdot \|E - \hat{G}^\star\|_{\infty}, \label{eq:fqi_conv}
\end{equation}
where $\epsilon_{\mathrm{ls}} \doteq \max\limits_{t\in[K-1]}\|\hat{G}_{t+1} - \mathcal{T}\hat{G}_t\|_P$.
\end{proposition}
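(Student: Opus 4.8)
The plan is to mirror the standard error-propagation analysis for fitted value iteration (the Munos-style bounds for approximate dynamic programming), treating the LDM backup $\mathcal{T}$ as the analogue of the Bellman operator and $\hat{G}^\star$ as its fixed point. Concretely, I would write the per-iteration fitting error as $\epsilon_k \doteq \hat{G}_k - \mathcal{T}\hat{G}_{k-1}$, so that $\|\epsilon_k\|_P \leq \epsilon_{\mathrm{ls}}$ by definition, and then express the final error $\hat{G}_K - \hat{G}^\star$ as a $\gamma$-weighted, propagated accumulation of the $\epsilon_k$ plus a decaying contribution from the initialization error $\hat{G}_0 - \hat{G}^\star = E - \hat{G}^\star$. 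The $\frac{1}{1-\gamma}$ factor will come from summing a geometric series in $\gamma$, the $R$ factor from a change-of-measure step, and the $\gamma^K$ factor from propagating the initial error through $K$ backups.

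Two ingredients do most of the work. First, I would establish that $\mathcal{T}$ is a $\gamma$-contraction in the $\infty$-norm: since $x \mapsto \max\{E(s,a), x\}$ and $x \mapsto \min_{a'} x$ are both non-expansive, one has pointwise $|\mathcal{T}G_1(s,a) - \mathcal{T}G_2(s,a)| \leq \gamma \sup_{a'}|G_1(f(s,a),a') - G_2(f(s,a),a')| \leq \gamma\|G_1 - G_2\|_\infty$, with $\hat{G}^\star = \mathcal{T}\hat{G}^\star$ its fixed point (which I may assume from the convergence result referenced earlier). Second, I would record the sharper pointwise version of this inequality: for each pair of functions there is an action selection $a'(s,a)$ for which $|\mathcal{T}G_1(s,a) - \mathcal{T}G_2(s,a)| \leq \gamma\,|(G_1-G_2)(f(s,a), a'(s,a))|$. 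Reading this as $|\mathcal{T}G_1 - \mathcal{T}G_2| \leq \gamma\,\mathcal{P}\,|G_1-G_2|$, where $\mathcal{P}$ is the deterministic ``evaluate at the selected next state-action'' operator, lets me track errors along dynamics-induced trajectories rather than collapsing everything to the $\infty$-norm.

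With these in hand, the telescoping is routine: writing $\Delta_k \doteq \hat{G}_k - \hat{G}^\star$ and using $\Delta_k = \epsilon_k + (\mathcal{T}\hat{G}_{k-1} - \mathcal{T}\hat{G}^\star)$ together with the pointwise bound gives the recursion $|\Delta_k| \leq |\epsilon_k| + \gamma\,\mathcal{P}_{k-1}|\Delta_{k-1}|$, which unrolls to
\begin{equation*}
|\Delta_K| \leq \sum_{i=0}^{K-1} \gamma^{i}\,(\mathcal{P}_{K-1}\cdots\mathcal{P}_{K-i})\,|\epsilon_{K-i}| + \gamma^{K}\,(\mathcal{P}_{K-1}\cdots\mathcal{P}_0)\,|\Delta_0|.
\end{equation*}
Taking the $P$-norm, the initialization term is immediately controlled by $\gamma^K\|\Delta_0\|_\infty = \gamma^K\|E-\hat{G}^\star\|_\infty$, because each $\mathcal{P}$ is $\infty$-norm non-expansive and maps constants to themselves, so $\mathbb{E}_P$ of a quantity bounded everywhere by $\|\Delta_0\|_\infty$ is at most $\|\Delta_0\|_\infty$; this needs no recoverability factor. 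For each fitting-error term I must bound $\mathbb{E}_{(s,a)\sim P}\big[|\epsilon_{K-i}|(\text{point reached after } i \text{ dynamics steps})\big]$ by $R\,\|\epsilon_{K-i}\|_P \leq R\,\epsilon_{\mathrm{ls}}$; summing $\sum_{i\geq 0}\gamma^i = \frac{1}{1-\gamma}$ then yields the $\frac{R\,\epsilon_{\mathrm{ls}}}{1-\gamma}$ term.

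The crux — and the main obstacle — is exactly this change-of-measure step: showing that evaluating a fixed error function along the forward, action-selected dynamics starting from $(s,a)\sim P$ inflates its $P$-weighted magnitude by at most the recoverability constant $R$. This is where Definition~\ref{recoverability_def} enters: $R$ bounds the ratio $P(s_T,a_T)/P(s_0,a_0)$ along every admissible trajectory, so the density at any reachable endpoint is at most $R$ times the density at the start, which is the statement needed to dominate the pushforward of $P$ under the composed deterministic transition operators by $R\cdot P$. The delicate points I would have to handle carefully are (i) defining the action selections $a'(s,a)$ measurably and consistently, so that the operators $\mathcal{P}_k$ and the pointwise inequalities hold simultaneously, and (ii) making the pushforward argument rigorous for the deterministic map $f$, where absolute continuity and Jacobian factors require attention, so that the per-trajectory density-ratio bound supplied by $R$ genuinely converts the trajectory-evaluated $P$-expectation back into $R\,\|\epsilon\|_P$.
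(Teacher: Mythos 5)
Your proposal is correct and takes essentially the same route as the paper's proof: the same pointwise $\gamma$-contraction of $\mathcal{T}$ obtained by a careful action selection (the paper's policy $\pi_{12}$, your $a'(s,a)$), the same propagation of errors along dynamics-induced trajectories, the same change of measure via the recoverability constant $R$ applied only to the fitting-error terms, and the same $\infty$-norm treatment of the initialization term so that it needs no factor of $R$. The only difference is bookkeeping—you unroll a pointwise recursion and take the $P$-norm once at the end, whereas the paper telescopes $\|\mathcal{T}^{(t)}\hat{G}_{K-t} - \mathcal{T}^{(t+1)}\hat{G}_{K-t-1}\|_P$ terms by the triangle inequality—which is not a substantive distinction.
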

Proposition \ref{prop:ldm_fqi} quantifies the approximation error from executing the update (Eq. \ref{ldm_update}) and its dependence on the recoverability factor ($R$), discount ($\gamma$), \# of iterations ($K$), $\|E - \hat{G}^\star\|_{\infty}$, and the generalization error of the least squares fit ($\epsilon_{\mathrm{ls}}$). The quantity $\epsilon_{\mathrm{ls}}$ is simply the accuracy of fit under a given distribution, and it is standard to assume that this decreases as the dataset becomes larger. % To our knowledge, all prior RL analyses of fitted iteration rely on such a quantity. 
Bounds on this quantity fall under the study of supervised learning, and we state a standard bound in Appendix~\ref{appendix:ldm_control}, along with a corollary regarding its implication for Proposition~\ref{prop:ldm_fqi}. 

Below, we give two additional remarks which may capture the convergence behavior of the LDM learning procedure more tightly under certain situations. Firstly, in cases where $R$ is large, we can bound error in terms of the \textit{one-step} recoverability parameter $r \doteq \sup_{\st,\at, a_{t+1}} \{ \frac{P(f(\st,\at), \atp)}{P(\st, \at)} \text{ s.t. } P(\st,\at) > 0 \}$. This captures how much the density can decrease in a single step, which may be orders of magnitude smaller than $R$.

\begin{remark}\label{rmk:fqi_conv_2}
$\hat{G}_K$ also satisfies $\|\hat{G}_K - \hat{G}^\star\|_P \leq \frac{\epsilon_{\mathrm{ls}}}{1-r\gamma} + (r\gamma)^K \cdot \|E - \hat{G}^\star\|_{P}$ for $\gamma < r^{-1}$.
\end{remark}

% Secondly, since we have control over $\gamma$, depending on the tradeoff between $r$ and $R$, different approaches to choosing $\gamma$ may be better. We require $\gamma <1$ to ensure decay of $\|E - \hat{G}^\star\|_{\infty}$'s contribution. However, as we discuss further in Appendix~\ref{app:theory}, there are scenarios under which we may set $\gamma=1$:
Secondly, there are scenarios (such as when the system is stable) under which we may set $\gamma\!=\!1$.

\begin{remark}\label{rmk:gamma_1}
%If $\exists K_{\mathrm{fin}}$ such that repeated application of $f$ $\geq K_{\mathrm{fin}}$ times 
If $\exists K_{\mathrm{fin}}$ such that the system evolving more than $K_{\mathrm{fin}}$ times leads to states where the difference between the density model and the LDM value becomes negligible ($\leq \epsilon_{\mathrm{fin}}$), we can set $\gamma = 1$ and satisfy $\|\hat{G}_K - \hat{G}^\star\|_P \leqslant R\cdot K_{\mathrm{fin}}\cdot \epsilon_{\mathrm{ls}} + \epsilon_{\mathrm{fin}}$ where $\hat{G}^\star$ is the undiscounted LDM.
\end{remark}
% This can happen if our dataset consists of trajectories which terminate in different parts of the space (in fact, the construction in Example~\ref{ex:baseline_failure} satisfies this with $K_{\mathrm{fin}} = H+2$), or if the system is stable.
%%SL.5.13: I'm not *sure* about this, but I wonder if it could make sense to have some remark about how many of the tasks in our experiments *do* terminate in different parts of the space, as a kind of vague excuse for using gamma=1 in our experiments? Though maybe that's unnecessary... or even worse, unconvincing.
%%PG.5.26: Stable applies to simglucose and vent. Katie -- does the diff part thing apply to lunar lander & hopper? 

As a final note, the LDM as formulated above addresses fully observed settings, but in Appendix~\ref{app:partial} we provide the natural extension of the LDM to
non-Markovian/partial state observations. In fact, the \emph{same} procedure as
before still provides a distributional shift guarantee, but with an additional error term that accounts for the variability in the
observation for the same underlying state (Prop.~\ref{prop:partial}). 

\subsection{Error Sensitivity of the LDM Barrier for MPC}\label{sec:mpc_theory}

In this section we show that, even in the presence of approximation and discount error, a learned LDM constraint (\ref{ldm_constraint}) retains provable guarantees on staying in high density regions, which means that MPC with an LDM constraint will not query the model $\hat{f}$ on inputs where it was not trained.
%This implies that using an LDM for MPC successfully and realistically enables us to: 1) produce behavior which is not too divergent from the one in the reference dataset, and 2) only query the model $\hat{f}$ in regions with at least moderate density (which is essential for accuracy when $\hat{f}$ is also learned from data). %This is not immediately obvious, since although the true LDM clearly improves MPC, it is not necessarily clear that the errors in a learned LDM will still result in an overall improvement over the na\"{i}ve density constraint baseline.
There are three main sources of approximation error: error in the density model, error from training the LDM, and error from imposing a discount. %\footnote{The use of a discount in practice also means that, to ensure that the original undiscounted system stays in-distribution, we must slightly decrease the negative log threshold to compensate for this source of bias.}.
For cleaner presentation, we assume here the density model satisfies a pointwise error bound: $\exists \epsilon_p > 0$ s.t.  $|\log P(s,a) + E(s,a)| \leq \epsilon_p$. 
% This assumption simply says that the estimated (unconditional) densities are accurate on the majority of $\mathcal{S}\times \mathcal{A}$ -- a reasonable assumption if our density model trains and generalizes well. 
In Appendix~\ref{subsec:ldm_barrier_proofs}, all results are restated assuming a probably approximately correct error instead. %is weaker than a similar error bound on a dynamics model (for learned models we do not expect the error to be uniform across $\mathcal{S}\times\mathcal{A}$) since it 
Our result follows:
\begin{proposition}\label{prop:ldm_degradation}For any probability mass function $P$ and initial state $s_0$, taking action $a_0$ with $\hat{G}(s_0, a_0) \leq -\log{c}$ guarantees that $\forall t$, there is a sequence $a_{1:t}$ for which
\begin{equation*}
\log P(s_t, a_t) \geq \gamma^{-t} \log{c} - \frac{\gamma^{-t} R \, \epsilon_{\mathrm{ls}} \, \exp{\epsilon_p}}{c(1-\gamma)} - \epsilon_P
\end{equation*}
as $K\rightarrow \infty$. Further, if the assumption in Remark~\ref{rmk:gamma_1} holds, using $\gamma=1$ ensures
\begin{equation*}
\log P(s_t, a_t) \geq \log{c} - \frac{(R K_{\mathrm{fin}} \epsilon_{\mathrm{ls}} +\epsilon_{\mathrm{fin}})\exp{\epsilon_p}}{c} - \epsilon_P  
\end{equation*}
\end{proposition}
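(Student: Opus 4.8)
The plan is to transfer the guarantee that holds exactly for the idealized discounted LDM $\hat{G}^\star(s_0,a_0) = \min_{\{a_t\}_{t=1}^\infty}\max_{t\ge 0}\gamma^t E(s_t,a_t)$ over to the learned $\hat{G}$, paying for the transfer with the $P$-norm error bound of Proposition~\ref{prop:ldm_fqi}. The starting observation is that $\hat{G}^\star$ unrolls exactly: if $\hat{G}^\star(s_0,a_0)\le v$, then by its definition there is an action sequence $\{a_t\}_{t\ge 1}$ with $\gamma^t E(s_t,a_t)\le v$, hence $E(s_t,a_t)\le \gamma^{-t}v$, for every $t\ge 0$. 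Combined with the pointwise density error $-\log P(s_t,a_t)\le E(s_t,a_t)+\epsilon_p$, this already yields the claimed per-step lower bound on $\log P(s_t,a_t)$, so the whole proof reduces to establishing $\hat{G}^\star(s_0,a_0)\le -\log c + \frac{R\,\epsilon_{\mathrm{ls}}\,\exp\epsilon_p}{c(1-\gamma)}$.

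First I would convert the average ($P$-norm) error into a pointwise statement at the single pair $(s_0,a_0)$. Because $P$ is a probability mass function, every summand of $\|g\|_P=\sum_{s,a}P(s,a)|g(s,a)|$ is nonnegative, so $P(s_0,a_0)\,|g(s_0,a_0)|\le \|g\|_P$ for any $g$; applying this to $g=\hat{G}-\hat{G}^\star$ and invoking the triangle inequality gives $\hat{G}^\star(s_0,a_0)\le \hat{G}(s_0,a_0)+\|\hat{G}-\hat{G}^\star\|_P/P(s_0,a_0)\le -\log c + \|\hat{G}-\hat{G}^\star\|_P/P(s_0,a_0)$, where the last step uses the hypothesis $\hat{G}(s_0,a_0)\le -\log c$. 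I would then bound the numerator by taking $K\to\infty$ in Proposition~\ref{prop:ldm_fqi}, so that the $\gamma^K\|E-\hat{G}^\star\|_\infty$ term vanishes and $\|\hat{G}-\hat{G}^\star\|_P\le R\,\epsilon_{\mathrm{ls}}/(1-\gamma)$.

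The remaining ingredient is a lower bound on the denominator $P(s_0,a_0)$. Since the backup operator $\mathcal{T}$ takes a pointwise maximum with $E$, the LDM dominates the density model, $\hat{G}\ge E$, and with $E\ge -\log P-\epsilon_p$ the constraint $\hat{G}(s_0,a_0)\le -\log c$ forces $-\log P(s_0,a_0)-\epsilon_p\le -\log c$, i.e.\ $P(s_0,a_0)\ge c\exp(-\epsilon_p)$. Substituting this and the $P$-norm bound into the inequality from the previous paragraph produces exactly $\hat{G}^\star(s_0,a_0)\le -\log c + \frac{R\,\epsilon_{\mathrm{ls}}\,\exp\epsilon_p}{c(1-\gamma)}$; the monotone unrolling described above, with the $\gamma^t$ divided through, then gives the first claimed bound, the $\gamma^{-t}$ prefactor landing on both the $\log c$ term and the error term. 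The $\gamma=1$ statement follows by the identical argument, replacing Proposition~\ref{prop:ldm_fqi} by Remark~\ref{rmk:gamma_1}, whose bound $\|\hat{G}-\hat{G}^\star\|_P\le R\,K_{\mathrm{fin}}\epsilon_{\mathrm{ls}}+\epsilon_{\mathrm{fin}}$ removes the $1/(1-\gamma)$ factor and sets $\gamma^{-t}=1$.

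I expect the main obstacle to be the denominator step. The average-to-pointwise conversion is valid only because $P$ is discrete, so that the atom at $(s_0,a_0)$ carries strictly positive mass; and the lower bound $P(s_0,a_0)\ge c\exp(-\epsilon_p)$ rests on $\hat{G}\ge E$, which is exact for the fixed point of $\mathcal{T}$ but only approximate once the least-squares projection in Eq.~\ref{ldm_update} is accounted for. Making this rigorous, or absorbing the small violation into $\epsilon_{\mathrm{ls}}$ or $\epsilon_p$, is the delicate part; everything else is bookkeeping with the pointwise $\epsilon_p$ density bound and the unrolling of $\hat{G}^\star$.
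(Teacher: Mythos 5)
Your argument follows the same route as the paper's own proof (Proposition~\ref{generalized_degradation_bd} in Appendix~\ref{subsec:ldm_barrier_proofs}): unroll $\hat{G}^\star$ by its definition, convert the $P$-norm bound of Proposition~\ref{prop:ldm_fqi} into a pointwise bound at $(s_0,a_0)$ by dividing by $P(s_0,a_0)$, and lower-bound $P(s_0,a_0)$ by $c\,e^{-\epsilon_p}$ via the density-model error. However, the step you yourself flag as delicate --- that $\hat{G} \geq E$, hence $E(s_0,a_0) \leq -\log c$ --- is a genuine gap, and not a removable technicality: the fitted $\hat{G}_K$ from Eq.~\ref{ldm_update} can undershoot $E$ arbitrarily badly at a point of small mass while keeping $\|\hat{G}_K - \hat{G}^\star\|_P$ small, because the $P$-norm weights that violation by $P(s_0,a_0)$ itself. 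In such a case the constraint $\hat{G}_K(s_0,a_0)\leq -\log c$ can be satisfied at a point where $P(s_0,a_0)$ is arbitrarily small, the denominator bound fails, and the conclusion is already false at $t=0$. Your fallback of ``absorbing the violation into $\epsilon_{\mathrm{ls}}$ or $\epsilon_p$'' does not work either: trying to extract the lower bound on $P(s_0,a_0)$ from the $P$-norm error (using $E \leq \hat{G}^\star$ pointwise) yields the self-referential inequality $\log(c/p) \leq \epsilon_p + \|\hat{G}_K - \hat{G}^\star\|_P / p$ with $p = P(s_0,a_0)$, which is satisfied by arbitrarily small $p$ since the right-hand side blows up faster than the left; it therefore provides no lower bound at all.

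The paper closes exactly this gap with a small but essential modification: the constraint function is taken to be the clipped LDM $\hat{G}(s,a) = \max\{\hat{G}_K(s,a), E(s,a)\}$. Since $\hat{G}^\star \geq E$ pointwise (the maximum in its definition includes $t=0$), clipping never increases the pointwise distance to $\hat{G}^\star$, so the error bounds of Proposition~\ref{prop:ldm_fqi}, Remark~\ref{rmk:fqi_conv_2}, and Remark~\ref{rmk:gamma_1} carry over unchanged; and it makes $\hat{G} \geq E$ hold by construction, so $\hat{G}(s_0,a_0) \leq -\log c$ immediately gives $E(s_0,a_0) \leq -\log c$ and hence $P(s_0,a_0) \geq c\,e^{-\epsilon_p}$. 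With that one change, the rest of your derivation --- the average-to-pointwise conversion, the unrolling of $\hat{G}^\star$ with the $\gamma^{-t}$ prefactor, and the $\gamma=1$ case via Remark~\ref{rmk:gamma_1} --- goes through and matches the paper's proof essentially line by line.
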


The $\gamma^{-t}$ factor in the bound above suggests that distributional shift is harder to avoid further in the future. However, if we are instead interested in the difference between the sum of discounted rewards the agent planned for and the one the agent actually experience, then we get a guarantee which does not decay exponentially in time. Let us denote the `planned-for' discounted cumulative reward for a state-action sequence $\mathbf{s} = s_{0:T}, \mathbf{a} = a_{0:T}$ by $\hat{R}_T(\mathbf{s}, \mathbf{a})\doteq \sum_{t=0}^{T-1} \gamma^t r(\hat{f}(s_t, a_t), a_{t+1})$ and the `true' reward $R_T(\mathbf{s}, \mathbf{a}) \doteq \sum_{t=0}^{T-1} \gamma^t r(f(s_t, a_t), a_{t+1})$. Assuming that the composition of the reward with the model has error which satisfies $\sup_{a'} |r(f(s,a), a') - r(\hat{f}(s,a), a')| \leq \frac{\epsilon_r}{\sqrt{P(s,a)}}$ for some $\epsilon_r \geq 0$ (which encompasses a $\sim P(s,a)^{-0.5}$ model error together with the Lipschitz property of the reward function), we derive the following corollary from Proposition~\ref{prop:ldm_degradation}:

\begin{corollary}\label{prop:reward_ldm_degradation}For any starting state $s_0$, taking action $a_0$ with $\hat{G}(s_0, a_0) \leq -\log{c}$ with $c\geq \frac{(1-\gamma) + R \epsilon_{\mathrm{ls}}\exp{\epsilon_P}}{(1 - \gamma^{T-1} (\epsilon_P + 2\log{\epsilon_r})) (1-\gamma)}$ guarantees that there is a sequence $a_{1:T}$ that yields $s_{1:T}$ satisfying
\begin{align*}
|\hat{R}_T(\mathbf{s}, \mathbf{a}) - R_T(\mathbf{s}, \mathbf{a})| \leq & \;(1 + \epsilon_P + 2\log{\epsilon_r}) \cdot \frac{1 - \gamma^T}{1-\gamma} \\
&+ T \cdot \left(\log{\frac{1}{c}} + \frac{R \epsilon_{\mathrm{ls}} \exp{\epsilon_P}}{c(1-\gamma)}\right) 
\end{align*}
\end{corollary}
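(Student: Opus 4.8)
The plan is to reduce the reward gap to a sum of per-step model errors, control each term with the pointwise density guarantee of Proposition~\ref{prop:ldm_degradation}, and then convert the resulting $P^{-1/2}$ dependence into the logarithmic terms that appear in the bound. First I would expand the difference, apply the triangle inequality, and invoke the reward-model error assumption $\sup_{a'}|r(f(s,a),a') - r(\hat f(s,a),a')| \le \epsilon_r/\sqrt{P(s,a)}$, giving
\[
|\hat{R}_T(\mathbf{s},\mathbf{a}) - R_T(\mathbf{s},\mathbf{a})| \le \sum_{t=0}^{T-1} \gamma^t \frac{\epsilon_r}{\sqrt{P(s_t,a_t)}}.
\]
Here $a_{1:T}$ is taken to be the single prefix-consistent action sequence realizing the minimization in $\hat{G}^\star(s_0,a_0) = \min_{\{a_t\}}\max_t \gamma^t E(s_t,a_t)$; this is exactly the sequence underlying Proposition~\ref{prop:ldm_degradation}, so its pointwise density bound holds simultaneously for all $t \le T-1$ along this one trajectory.

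The key step is to rewrite Proposition~\ref{prop:ldm_degradation} in a ``$\gamma^t$-scaled'' form: multiplying the pointwise guarantee through by $\gamma^t > 0$ cancels the $\gamma^{-t}$ factors and yields the non-degrading inequality
\[
\gamma^t \log\tfrac{1}{P(s_t,a_t)} \le \log\tfrac1c + \frac{R\,\epsilon_{\mathrm{ls}}\exp{\epsilon_P}}{c(1-\gamma)} + \gamma^t \epsilon_P.
\]
Next I would apply an elementary inequality to the per-step error, bounding $\epsilon_r/\sqrt{P(s_t,a_t)} \le 1 + 2\log\epsilon_r + \log\frac1{P(s_t,a_t)}$ (equivalently $w \le 1 + 2\log w$ with $w = \epsilon_r P^{-1/2}$, using $2\log\epsilon_r + \log\frac1P = \log w^2$). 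Multiplying by $\gamma^t$ and substituting the scaled density bound collapses each summand to $\gamma^t(1 + \epsilon_P + 2\log\epsilon_r) + \log\frac1c + \frac{R\epsilon_{\mathrm{ls}}\exp{\epsilon_P}}{c(1-\gamma)}$. Summing over $t = 0,\dots,T-1$ separates cleanly: the $\gamma^t$-weighted constants form the geometric series $(1+\epsilon_P+2\log\epsilon_r)\frac{1-\gamma^T}{1-\gamma}$, while the $t$-independent remainder accumulates to $T\big(\log\frac1c + \frac{R\epsilon_{\mathrm{ls}}\exp{\epsilon_P}}{c(1-\gamma)}\big)$, reproducing the stated bound exactly.

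The hard part will be justifying the elementary step $w \le 1 + 2\log w$: since $\epsilon_r/\sqrt{P}$ scales like a negative power of the density while the target is only logarithmic in $1/P$, no such inequality can hold for all $w$, so its use must be confined to an admissible range of the density. This is precisely where the hypothesis on $c$ does its work. Through the scaled form above, the lower bound on $c$ forces $P(s_t,a_t)$ to remain large enough along the entire trajectory, with the binding constraint at the final step $t=T-1$ — which is exactly why the factor $\gamma^{T-1}$ and the combination $\epsilon_P + 2\log\epsilon_r$ surface in the condition. I would therefore first establish the $c$-dependent lower bound on $P(s_t,a_t)$, verify that it places every $w_t$ in the regime where the logarithmic inequality is valid, and only then perform the term-by-term substitution and summation; carefully checking this regime condition (and confirming the edge case $t=T-1$) is the part I expect to demand the most care.
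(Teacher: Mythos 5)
Your proposal reproduces the paper's own proof essentially step for step: the same reduction of the reward gap to $\sum_{t}\gamma^t\epsilon_r/\sqrt{P(s_t,a_t)}$, the same logarithmic surrogate $1/\sqrt{x}\leq 1-\log x$ (your $w\leq 1+2\log w$), the same $\gamma^t$-scaled invocation of Proposition~\ref{prop:ldm_degradation}, the same geometric-series/$T$-term split, and the same derivation of the threshold on $c$ via $\log x\geq 1-1/x$ with the binding case at $t=T-1$. Your remark that one fixed action sequence (the minimizer defining $\hat{G}^\star$) makes the density bounds hold simultaneously for all $t$ is correct, and is slightly cleaner than the paper, which quantifies the sequence per $t$.

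The one substantive caveat concerns exactly the step you flag as the hard part, and it cuts against the paper as much as against you. The surrogate $w\leq 1+2\log w$ is valid only on a \emph{bounded} interval: with $g(w)=1+2\log w-w$ one has $g(1)=0$, $g$ increasing on $(0,2)$, decreasing afterwards, and vanishing again at $w\approx 3.513$; so one needs $1\leq w_t\leq 3.513$, i.e.
\begin{equation*}
0.08104\,\epsilon_r^{2}\;\leq\;P(s_t,a_t)\;\leq\;\epsilon_r^{2}.
\end{equation*}
The lower bound on $c$ enforces only the left end (the small-$P$, large-$w$ failure you describe); nothing in the hypotheses enforces the right end, and for an accurate model on an in-distribution trajectory one expects precisely $P(s_t,a_t)\gg\epsilon_r^2$, where the inequality reverses (at $w=1/2$: $1/2 > 1-2\log 2$). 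So your planned verification that every $w_t$ lies in the valid regime cannot be completed from the stated hypotheses. The paper's proof has the identical gap --- it asserts $1/\sqrt{x}\leq 1-\log x$ for all $x\geq 0.08104$, which is false for $x>1$ --- and the gap is substantive: with $\epsilon_{\mathrm{ls}}\approx 0$, $c$ near $1$, and $\epsilon_r$ small, the stated bound is negative while the left-hand side is nonnegative. A natural patch is to use $1/\sqrt{x}\leq\max\{1,\,1-\log x\}$ instead, which amounts to replacing $\epsilon_P+2\log\epsilon_r$ in the final bound by its positive part $\max\{0,\,\epsilon_P+2\log\epsilon_r\}$. In short: your reconstruction matches the paper's argument, and your instinct to audit the regime condition is aimed at exactly the spot where both arguments need repair.
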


% The lower bound on $c$ is necessary to relate the bound on $\log{P(s_t, a_t)}$ with the $\sim \frac{1}{\sqrt{P(s_t, a_t)}}$ error scaling. The stated lower bound requirement is a simplified version of its full counterpart, deferred to the Appendix (Eq. \ref{full_lb_reward_theory}). Note that (1) the current lower bound is very small, going to $0$ as $\epsilon_r\rightarrow 0$ or $\gamma\rightarrow 1$, and (2) there is a slack due to the application of 2 inequalities -- in practice the stated guarantee holds for even smaller settings of $c$.

We see that the first term captures the accumulation of density error over the trajectory and the second term captures how close we can get to this upper bound as a function of the horizon $T$, the threshold $c$, the discount $\gamma$, and the model errors. For example, if we have $\epsilon_\mathrm{ls} \approx 0$ and $c\rightarrow 1$, the difference would be dominated by $\approx (1+\epsilon_p) \frac{1- \gamma^{T}}{1-\gamma}$. By bounding the deviation of the real world discounted reward from the discounted reward expected by the controller, Corollary~\ref{prop:reward_ldm_degradation} translates the LDM guarantee into a bound on how ``over optimistic'' the model-based trajectory optimizer can possibly be. Note also that the guarantee applies every time we commit to an action (for $T \gg H$ potentially), and is refreshed/extended under iterative replanning.

% \subsection{LDM vs. Density Models for MPC}
\subsection{LDM vs. Density Model for MPC}
In Example \ref{ex:baseline_failure}, we constructed a situation where constraining MPC with an oracle density model of the data distribution can lead to myopic behavior, whereas the true LDM is able to prevent it. A natural question is whether the learnt (hence imperfect) LDM is still capable of eliciting the desired behavior? The answer is \textit{yes}, made precise below:

\begin{example}[Continuation of Example~\ref{ex:baseline_failure}]\label{cor:ex_7.1_w_ldm_error}In the setting of Prop~\ref{ex:baseline_failure}, if MPC with LDM is feasible for $\hat{c} \geq (R(H+2) \epsilon_{\mathrm{ls}} + 2\epsilon_p) \exp{\epsilon_p} + \epsilon$, we are guaranteed to only reach states with $P(s_t, a_t) \geq \frac{1}{2(H+1)}$. For small $\epsilon_p$, we only need $\hat{c} \gtrapprox R(H+2) \epsilon_{\mathrm{ls}} + \epsilon$. \end{example}

Intuitively, if $\epsilon_p, \epsilon_{\mathrm{ls}}$ are sufficiently small, we can expect the approximate LDM to succeed in filtering out the path which leads to dangerously low density regions, and, in this example, stay in regions with probability even higher than the formal guarantee. Example \ref{cor:ex_7.1_w_ldm_error} is an exemplification of how Prop. \ref{prop:ldm_degradation} shows that the LDM always ensures bounded distributional shift, solving the underlying horizon-dependent vulnerability of the na\"{i}ve density constraint to situations where violation after $H$ is inevitable if the wrong action is taken at the current step. 

%Hence, even in the presence of training error, the LDM constraint would allow us to pick the correct path from the very beginning. 

%Going back to our main guarantee formulation, Proposition~\ref{prop:ldm_degradation} bounds the lowest probability state-action tuple that the agent will observe if it uses the LDM constraint, even taking into account errors in LDM training. Beyond the ability to bound the reward estimate deviation explored previously, another direct implication is that an approximate LDM constraint (if feasible) will ensure we \textit{never} reach a region that is completely absent from the reference dataset. A consequence of this guarantee is that we are guarded from the myopic failure mode of density thresholding. This can be illustrated by going back to the setting of Prop~\ref{ex:baseline_failure} and exemplifying how using the LDM constraint would allow us to pick the correct path from the very beginning:

For a more general comparison of the approximate LDM and the density model constraints, we derive an analogue to Props.~\ref{prop:ldm_degradation} \& \ref{prop:reward_ldm_degradation} for the density threshold within the horizon $H$. In contrast to the purely statistical LDM analysis, providing guarantees for density thresholding requires additional smoothness assumptions on $E$ and $f$. In Appendix \ref{subsec:ldm_barrier_proofs}, we instantiate these assumptions and give the corresponding guarantees for density model constrained MPC.

%An immediate obstacle is that the best that planning with a density constraint can guarantee for $\tau < H$ is that there exist a sequence of $a_t$ such that $E(\hat{s}_\tau, a_\tau) \leq -\log{c}$ where $\hat{s}_{t+1} = \hat{f}(\hat{s}_t, a_t)$, $\hat{s}_0 = s_0$. Hence, for $E(\hat{s}_\tau, a_\tau)$ to imply something about $P(s_\tau, a_\tau)$ one needs additional smoothness assumptions (Lipschitzness of $E$ and $f$, and data-based error in $\hat{f}$).  We instantiate these assumptions and explore corresponding attainable guarantees the density model constrained MPC in Appendix \ref{subsec:ldm_barrier_proofs}. This analysis is in contrast to the LDM approach which is purely statistical, requiring fewer assumptions. 

To conclude, our results bound the error from using an \textit{imperfect} LDM as an MPC constraint, showing that it will more effectively constrain short-horizon MPC (with small values of $H$) than a na\"{i}ve density constraint.

%\textcolor{red}{$gamma=1.0$ works in practice, and can make easy direct comparison therefore the LDM provides a significantly more reliable constraint than the density model alone. and then in practice we expect K >> H.} \textcolor{red}{even if not, can take $\gamma$ big so}

%To conclude, in this section we analyzed the approximation error of learned LDMs (Alg.~\ref{ldm_alg_theoretical}) and gave bounds on the probability guarantee degradation when using an \textit{imperfect} LDM as an MPC constraint. We showed that despite errors, a learned LDM is still able to give long-horizon guarantees, avoiding the failure mode of short/medium-horizon MPC, or correspondingly the computational demands of very long horizon MPC, when using a na\"{i}ve density model barrier. This theoretical benefit of the LDM is confirmed experimentally in the next section.

\section{Experiments}\label{sec:experiments}
In this section, we present an experimental evaluation of our method. Our experiments aim to compare LDMs with other techniques for avoiding distribution shift. Prior works in model-based RL that consider this problem generally utilize density models or other error estimation schemes, such as ensembles \citep{chua2018deep,kidambi2020morel,yu2020mopo}. To provide an apples-to-apples comparison, we will compare LDMs to a baseline that uses a density model as a constraint (\ref{density_constraint}), as well as one that uses the variance of an ensemble of dynamics models as a constraint. These baselines are broadly representative of ``greedy'' constraints that forbid actions that violate some ``local'' metric, as opposed to the LDM, which takes into account \emph{future} state probabilities. %A density model also provides a constraint that is agnostic to the form of the downstream controller, like the LDM, and unlike an ensemble which is specific to the particular model-based controller that is used. 
We also compare to a na\"{i}ve baseline that does not employ any constraint at all. In our experiments, a static dataset is used to train the density model, LDM, ensemble models, and the dynamics model used for downstream control. We perform MPC with the learned dynamics model, analogous to prior model-based RL techniques~\citep{chua2018deep}, and constrain the planning procedure with the LDM or the baseline constraint functions. Further details about our implementation are given in Appendix~\ref{appendix:sec_9}.

We conduct our evaluation on two RL benchmark environments, hopper and lunar lander \citep{openai}, and a medical application, SimGlucose \citep{simglucose}. The objectives for the hopper and lunar lander tasks are to control the agent to reach different target locations (x position for hopper, and landing pad for lunar lander), and the objective for SimGlucose is to maintain the patient's glucose level close to a target setpoint. To effectively solve these tasks, the learning-based policy must not only accurately model complex dynamics (e.g. to perform a delicate landing procedure without crashing for lunar lander), but also be flexible to different target goals (e.g. different target x positions for the hopper require different behavior, such as hopping forwards or backwards, so directly copying the actions in the data would not be effective). For more details on the datasets and tasks, see Appendix \ref{appendix:data}. 

We aim to answer the following questions: 1) how does the value of the threshold used for the MPC constraint influence the behavior of the policy? 2) how does the performance of a MPC controller with an LDM constraint compare to that of using a density model constraint, an ensemble constraint, and no constraint?

\begin{figure}
\centering
\vspace{-0.1in}
\includegraphics[width=0.5\textwidth, trim={0 0 0 0},clip]{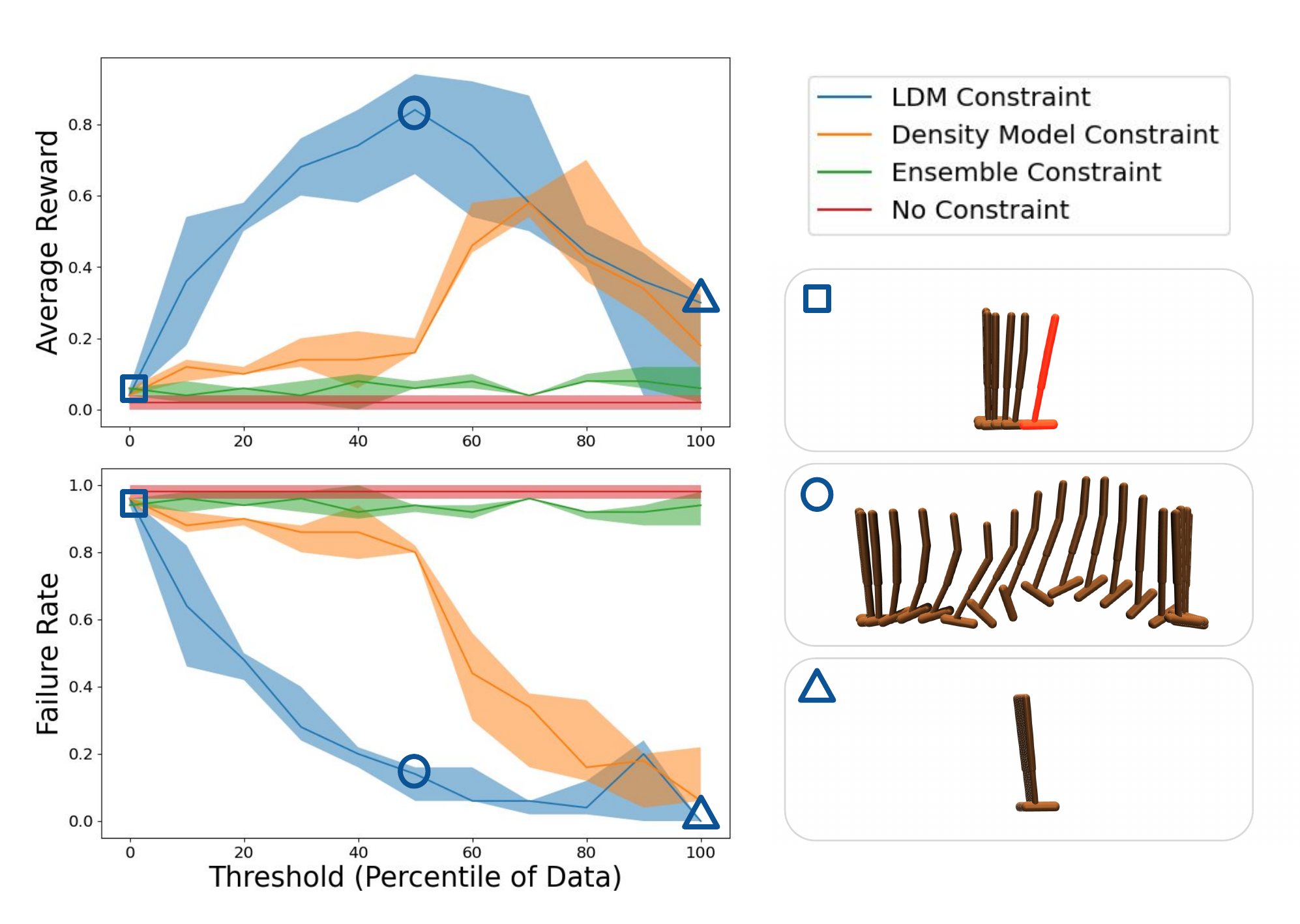}
\vspace{-0.3in}
\caption{\footnotesize Evaluation of average reward and failure rate over different threshold values for the hopper task (left), and example trajectories from an LDM-constrained policy with thresholds at 0, 50, and 100 (right). The x axis of the plots on the left represents the percent of the dataset whose value under the constraint function falls below the associated threshold $c$. More specifically, $x = \text{percentile}(\{W(\st^i, \at^i)\}_{i=1}^N, c)$, where $W$ is the constraint function. We use this representation of constraint values in order to make them comparable across different constraint functions. A hopper figure colored red represents termination due to failure. }
\vspace{-0.15in}
\label{thresholds_figure}
\end{figure}

To answer our first question on how the choice of the MPC constraint threshold influences the resulting policy, we aim to measure how the policy return and the degree to which the policy can keep the agent in-distribution changes as the threshold value changes. However, it is not completely clear how to objectively measure whether a generated trajectory is ``in-distribution.'' Our tasks involve a termination conditions that ends a trajectory when the agent enters a ``failure'' state (e.g., falling over for the hopper; becoming dangerously hypo/hyperglycemic for SimGlucose). Because the points in our dataset are concentrated in non-failing states, we use the rate of failure as a proxy for the fraction of trajectories that go out of distribution. 
% Note that our method does \emph{not} require the training dataset to be absent of failures. So long as there exist a portion of trajectories which do not terminate early, LDM-constrained MPC should encourage the agent to remain near those non-terminating trajectories, because the final state of a terminating trajectory will have low density, which a LDM-constrained policy is designed to avoid. 
Figure \ref{thresholds_figure} shows an example of the average return and failure rate across different thresholds for the hopper task (For the other tasks, see Appendix \ref{appendix:full_sweep}). We see that for low threshold values, the policies have low reward and high failure rate, as a consequence of excessive model exploitation (similar to the ``no constraint'' case). For high threshold values, the policies have low reward and low failure rate, because the over-conservative LDM constraint keeps the agent in-distribution but doesn’t leave enough flexibility for the agent to perform the desired task. For a concrete example, consider the figures of the hopper trajectories from the LDM constrained policy at different threshold values: the low threshold trajectory consists of the hopper falling over, the medium threshold trajectory has the hopper successfully hopping to the goal, and the high threshold trajectory consists of the hopper standing still, which effectively stays in-distribution but doesn’t accomplish the task. Thus, the threshold can be thought of as the user's knob for controlling the tradeoff between protecting against model error vs. flexibility for performing the desired task.

\begin{figure}
\centering
\vspace{-0.05in}
\includegraphics[width=0.48\textwidth, trim={0 0 0 0},clip]{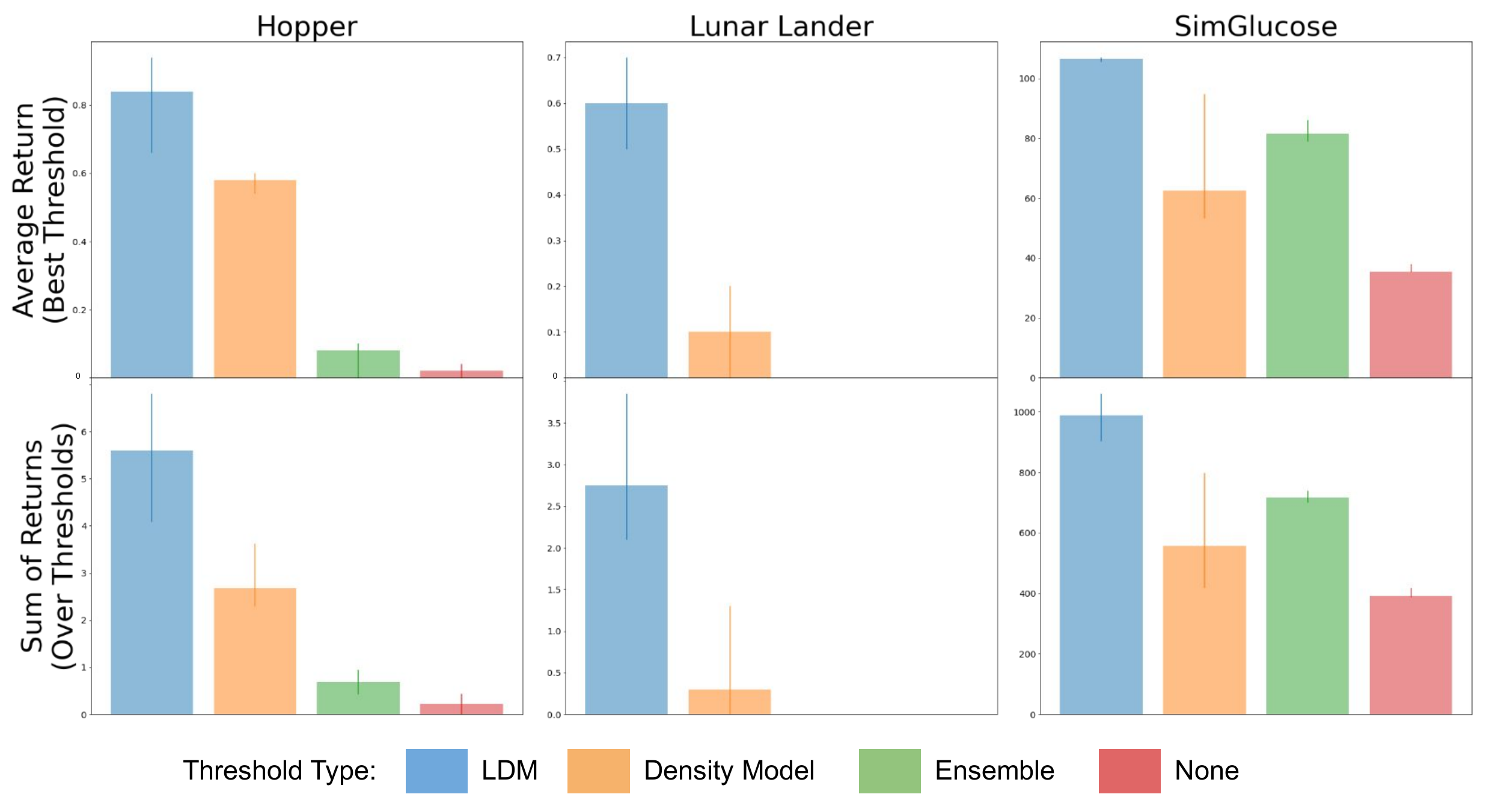}
\vspace{-0.2in}
\caption{\footnotesize Evaluation of the average reward attained by an MPC controller constrained with an LDM, a density model, an ensemble, and no constraint. The top row shows the best performance over thresholds for each method, and the bottom row shows the performance summed in aggregate over all thresholds. The bars show the median performance over random seeds, and the top and bottom of the error bars represent the 25/75 percentiles. We evaluated 5 random seeds for each task. }
\label{bar_figure}
\vspace{-0.15in}
\end{figure}

To answer our second question, we compare the performance of performing MPC with an LDM constraint, a density model constraint, an ensemble constraint, and no constraint for each task in Figure \ref{bar_figure}. The performance of the constrained MPC methods depends on the user-chosen threshold value for the constraints. For a fair comparison, we report the performance for the best threshold for each method in the top row. Furthermore, to assess each method's sensitivity to the choice of threshold, we also report the sum of the performances over a range of thresholds for each method (area underneath each curve in the average rewards plot of Fig. \ref{thresholds_figure}) in the bottom row. See Appendix \ref{appendix:full_sweep} for a full sweep of threshold values vs. performance for each task. We see that the unconstrained MPC policy was rarely successful in performing the task (red). In comparison, the density model (orange) and ensemble (green) constrained MPC methods yielded better performance by considering the data distribution while planning with a learned model. Finally, the LDM constrained MPC policy (blue) is able to most effectively perform the task for each tested environment, because it is able to reason about the data distribution in a dynamics aware fashion.

\vspace{0.15em}
\section{Discussion and Future Work}
We presented Lyapunov density models (LDMs), a tool that can ensure that an agent remains within the distribution of the training data. We provide a definition of the LDM, discuss its properties, and present a practical algorithm that learns LDMs from data. Furthermore, we provide a method of using an LDM in control, and present theoretical and empirical results showing its benefits to using density models. By making a formal connection between data distribution and control-invariance, we believe that our framework is a step towards developing learning-based control algorithms that resolve the issue of model unreliability due to distribution shifts. In this work, we focused on applying our framework in model-based RL; an exciting direction for future work would be to combine an LDM with other types of algorithms, such as model-free RL or imitation learning.

\newpage

% In the unusual situation where you want a paper to appear in the
% references without citing it in the main text, use \nocite
\nocite{langley00}

\bibliography{example_paper}
\bibliographystyle{icml2022}

%%%%%%%%%%%%%%%%%%%%%%%%%%%%%%%%%%%%%%%%%%%%%%%%%%%%%%%%%%%%%%%%%%%%%%%%%%%%%%%
%%%%%%%%%%%%%%%%%%%%%%%%%%%%%%%%%%%%%%%%%%%%%%%%%%%%%%%%%%%%%%%%%%%%%%%%%%%%%%%
% APPENDIX
%%%%%%%%%%%%%%%%%%%%%%%%%%%%%%%%%%%%%%%%%%%%%%%%%%%%%%%%%%%%%%%%%%%%%%%%%%%%%%%
%%%%%%%%%%%%%%%%%%%%%%%%%%%%%%%%%%%%%%%%%%%%%%%%%%%%%%%%%%%%%%%%%%%%%%%%%%%%%%%
\newpage
\appendix
\onecolumn
\section{Appendix to Section~\ref{sec4_ldms}}
\subsection{Additional details of 2D Linear System Example \label{appendix:sec4_example}
}
% To illustrate the key concepts of the LDMs, we introduce an example of a discrete-time linear system, given by
% \begin{equation}
% \label{eq:linear}
%     s_{t+1} = F s_{t} + G a_t,
% \end{equation}
% where $F=e^{A \Delta t}$, $G=A^{-1}(e^{A \Delta t}-I)B$, with
% $A = \begin{bmatrix} \beta & \omega \\ -\omega & \beta \end{bmatrix}, B = \begin{bmatrix} 0 \\ 1 \end{bmatrix}$,
% % \vspace{-0.5em}
% % \small
% % \begin{equation*}
% %     A = \begin{bmatrix} \beta & \omega \\ -\omega & \beta \end{bmatrix}, B = \begin{bmatrix} 0 \\ 1 \end{bmatrix},
% % \end{equation*}
% % \normalsize
% %
% % \vspace{-1em}
% $s:=\begin{bmatrix}x_1 \\ x_2\end{bmatrix}$ and $\beta, \omega > 0$. %, $\Delta t = 2\pi / 12$, $\omega=1$, and $\beta=0.15$.
% Intuitively, when no action is applied, for each timestep the system \eqref{eq:linear} undergoes a rotation of $\omega \Delta t$ in the clockwise direction and a scaling of $e^{\beta \Delta t}$ in the radial direction. Since $\beta>0$, the origin is an unstable focus and the state will spirally diverge. The action can shift the state in the $x_2$--direction, which can be used to stabilize the system. Finally, for linear systems like \eqref{eq:linear}, an LQR-based stabilizing controller can be designed by solving a discrete-time Riccati equation, which results in a feedback policy $\pi_{LQR}(s)=-K s$. The behavior of the system is illustrated in Fig. \ref{fig_linear_appendix1}.

\begin{figure}[H]
\centering
\includegraphics[width=0.5\columnwidth, trim={0 0 0 0},clip]{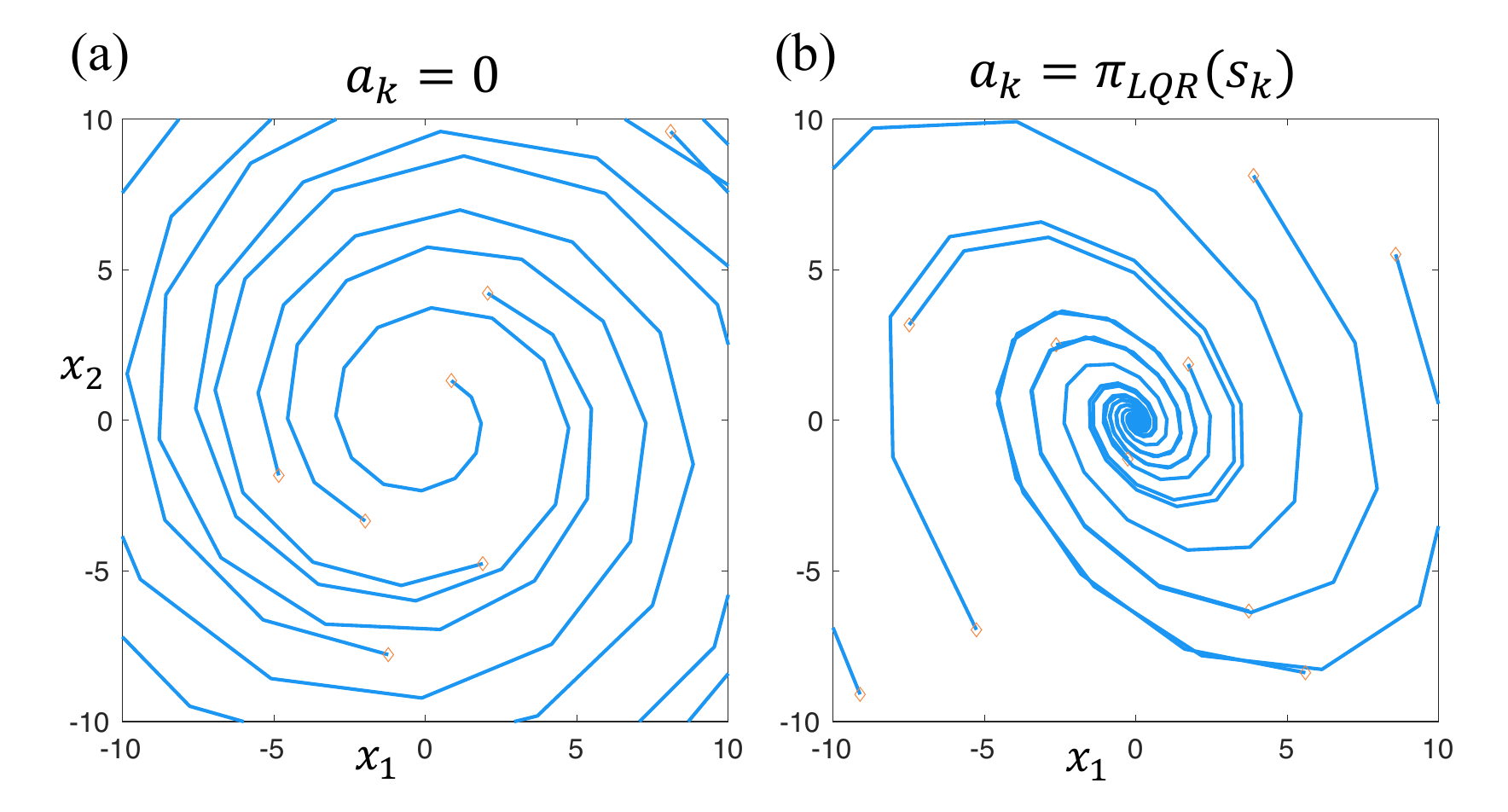}
\vspace{-2em}
\caption{Phase portraits of the linear system \eqref{eq:linear} under \textbf{(a)} no control ($a_k=0$) and \textbf{(b)} the LQR policy ($a_k=\pi_{LQR}(s_k)$), respectively. The uncontrolled dynamics of this system is unstable, therefore, there is no invariant region except for the origin. On the other hand, the LQR policy can stabilize the system to the origin, resulting in the trajectories to stay in a bounded region. Thus, the data distribution centered at this policy (case (b) in Fig. \ref{fig_linear_ldm}) will by its nature, have large state-action control invariant region.}
\label{fig_linear_appendix1}
\vspace{-1em}
\end{figure}

In Figure \ref{fig_linear_ldm} and \ref{fig_linear_appendix2}, level sets of the maximal LDMs $G(s, a)$ (on the right) that can be obtained for various distributions of the data density $P(s, a)$ (on the left) are visualized. The maximal LDMs \eqref{mdp-formulation} are computed on the grid of $(s, a)$ by applying the exact LDM backup \eqref{eq:bellman_op_density} described in Section \ref{sec:learning_ldms} with $\gamma=1$. The computation is done on the state-action grid of $[-10, 10]\!\times\![-10, 10]\!\times\![-5,5]$ with the size $(201,201,101)$, and it takes about a minute to obtain the converged $G(s,a)$ on a standard laptop. After obtaining the LDMs, from Theorem \ref{thm:ldm_base_guarantee}, the maximal state-control invariant sets which can maintain density levels $ P(\st, \at) \geq c$ are obtained by taking $\ldmlevelset := \{(s, a): \ldm(s, a) \leq -\log(c)\}$.

In addition to the two cases described in the main text, in Figure \ref{fig_linear_appendix2}, a toric data distribution is considered ((a) left). Note that the natural dynamics of \eqref{eq:linear} induces symmetric rotations. The invariant sets verified from the computed LDM ((a) right) reveal that indeed it is possible to maintain the system to stay in a torus with an appropriate density level. Also, we can obtain an policy for staying in this set by taking $\pi_{opt}(s) := \arg \min_{a\in\mathcal{A}}G(s, a)$. The resulting trajectory under this policy is visualized in (b).

\begin{figure}[H]
\includegraphics[width=\columnwidth, trim={0 0 0 0},clip]{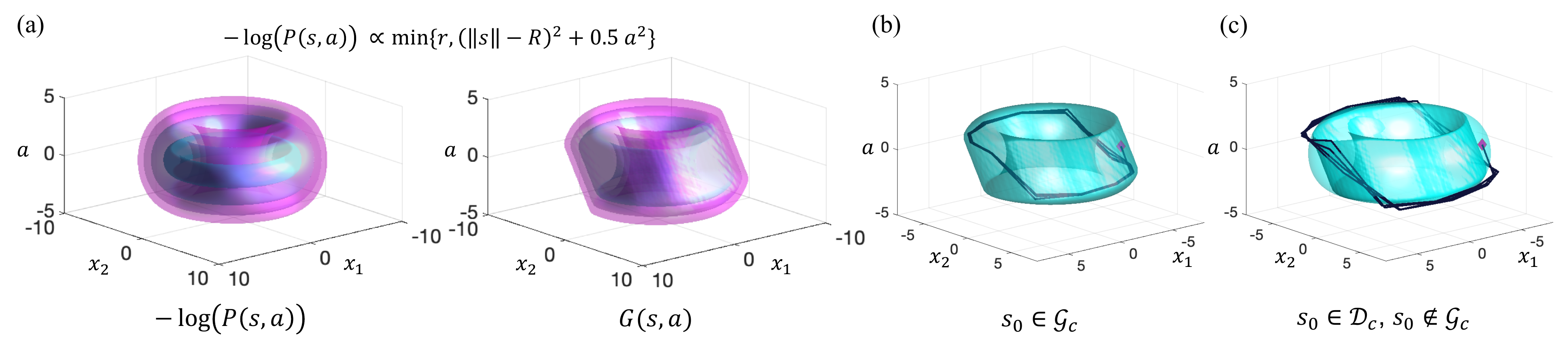}
\vspace{-2em}
\caption{The level sets of a toric data distribution for the linear system \eqref{eq:linear} are visualized in \textbf{(a)} left, and the level sets of the resulting maximal LDM are visualized in \textbf{(a)} right. In \textbf{(b)}, the smallest level set of $G(s,a)$ among the three sets in (a) are visualized in cyan again. The trajectory under the policy $\pi_{opt}(s) = \arg \min_{a\in\mathcal{A}}G(s, a)$, starting at an initial state $s_0$ (indicated by the diamond marker) that is contained in $\ldmlevelset = \{(s, a): \ldm(s, a) \leq -\log(c)\}$ is visualized in black. This shows that the obtained set $\ldmlevelset$ is indeed state-action control invariant which can be achieved by the policy $\pi_{opt}(s)$. In contrast, in \textbf{(c)}, an initial state that is not contained in $\ldmlevelset$ but still contained in $\mathcal{D}_c = \{(s, a) | P(s, a) \geq c\}$ (the bigger level set in cyan) is selected and the same optimal policy is tested out. Although the initial state is in-distribution, it is inevitable that the trajectory escapes $\mathcal{D}_c$.
}
\label{fig_linear_appendix2}
\vspace{-1em}
\end{figure}

\subsection{Proofs of Theorems \ref{thm:ldm_base_guarantee}, Proposition \ref{thm:max_ldm}}\label{appendix:sec4_proofs}
\begin{proof}[Proof of Theorem \ref{thm:ldm_base_guarantee}]
Suppose $(\st, \at) \in \mathcal{G}_p$. By Condition 1 in Definition \ref{ldm-definition}, we know that $\exists \atp \in \mathcal{A}$ such that $G(\st, \at) \geq G(f(\st, \at), \atp)$. Furthermore, since $\mathcal{G}_p$ is the sublevel-set of $G(\st, \at)$ at $-\log(p)$, we have $-\log(p) \geq G(\st, \at) \geq G(f(\st, \at), \atp) = G(\stp, \atp)$, so $(\stp, \atp) \in \mathcal{G}_p$. Now suppose $(\mathbf{s}_0, \mathbf{a}_0) \in \mathcal{G}_p$. By proof by induction, we know that $\exists \{\at\}_{t=1}^\infty$ such that $(\st, \at)\in \mathcal{G}_p \quad \forall t \geq 0$. Thus, $\mathcal{G}_p$ is a generalized control-invariant set. Finally, by Condition 2 in Definition \ref{ldm-definition}, we know that $G(\st, \at) \geq -\log(P(\st, \at))$, so $P(\st, \at) \geq p \quad \forall (\st, \at)\in \mathcal{G}_p$.
\end{proof}
\begin{proof}[Proof of Proposition \ref{thm:max_ldm}]
First, we show that $G(\state, \action)$, as defined by \ref{mdp-formulation}, is a Lyapunov density model. We prove each condition of an LDM individually:

\begin{itemize}
    % \item Continuity: $-\log(P(\st, \at))$ is continuous, and $G(\state_0, \action_0) = \min_{\{\at\}_{t=1}^\infty} \max_{t \geq 0} -\log(P(\st, \at))$ is the min and max of continuous functions, which is continuous.
    \item Condition 1: $\forall (\st, \at)$, $G(\st, \at) = \max\{-\log(P(\st, \at)), \min_{\atp}G(f(\st, \at), \atp)\}$ $\geq \min_{\atp}G(f(\st, \at), \atp)$. Thus, $\exists \atp$ such that $G(\st, \at) \geq G(f(\st, \at), \atp)$.
    \item Condition 2: $\forall (\st, \at)$, $G(\st, \at) = \max\{-\log(P(\st, \at)), \min_{\atp}G(f(\st, \at), \atp)\}$ $\geq -\log(P(\st, \at))$
\end{itemize}

Thus, by Theorem \ref{main-theorem}, each sub-level set of $G(\state, \action)$, $\mathcal{G}_p = \{(\st, \at) | G(\st, \at) \leq -\log(p)\}$, is a generalized control-invariant set such that $\forall (\st, \at) \in \mathcal{G}_p$, $P(\st, \at) \geq p$. To show that $G(\st, \at)$ is the \emph{maximal} LDM, we will next show that each $\mathcal{G}_p$ is the \emph{largest} generalized control invariant set contained inside $\mathcal{D}_p = \{(\st, \at) | P(\st, \at) \geq p\}$.

Suppose there exists a generalized control-invariant set $\mathcal{S}_p$ such that $\mathcal{G}_p \subset \mathcal{S}_p \subseteq \mathcal{D}_p$. Let $(\state_0, \action_0)$ be a point in $\mathcal{S}_p$ but not in $\mathcal{G}_p$. Because $\mathcal{S}_p$ is a generalized control-invariant set inside $\mathcal{D}_p$, for initial condition $(\state_0, \action_0)$, $\exists \{\at\}_{t=1}^\infty$ such that $\max_{t \geq 0} -\log(P(\st, \at)) \leq -\log(p)$. Because $(\state_0, \action_0)$ is not in $\mathcal{G}_p$, $G(\state_0, \action_0) > -\log(p)$. This is a contradiction, because $G(\st, \at)$ is by definition $\min_{\{\at\}_{t=1}^\infty} \max_{t \geq 0} -\log(P(\st, \at))$, so there cannot exist $\{\at\}_{t=1}^\infty$ such that $\max_{t \geq 0} -\log(P(\st, \at)) \leq -\log(p) < G(\state_0, \action_0)$. Thus, $\mathcal{G}_d$ must be the largest generalized control invariant set. 
\end{proof}
\subsection{LDM Connections to Density models and CLFs}
\label{appdx_connections_to_density_models}
There are two intuitive ways to view an LDM: 1) a Lyapunov function that ``stabilizes" towards a distribution rather than a single equilibrium point, or 2) a density model of the lowest data density over an agent's trajectory.LDMs are in fact a generalization of both Lyapunov functions and density models. For the special case of a static system $\stp = f(\st, \at) = \st$, $-\log(P(\st, \at))$ is a valid LDM for the data distribution $P(\st, \at)$. Note that this is just an example, not a necessary condition for the LDM to coincide with the density model.

On the other hand, an LDM can be viewed as a somewhat more expressive version of a Lyapunov function, because it is a function of states and actions, rather than just states, and is tied to a distribution rather than an equilibrium point. However, if the data distribution is peaked at an equilibrium point, then we can use the LDM to recover a CLF that stabilizes the system around that equilibrium point.

\begin{lemma}
Let $\mathbf{s}_e$ be an equilibrium point of the system, and $\mathbf{a}_e$ be the associated steady state action. If an LDM $\ldm(\st, \at)$ is radially unbounded and has a unique minimizer at $(\mathbf{s}_e, \mathbf{a}_e)$, then \mbox{$\clf(\st) = \min_{\at\in\mathcal{A}} \ldm(\st, \at) - \ldm(\mathbf{s}_e, \mathbf{a}_e)$} is a control Lyapunov function of the system.
\label{property2}
\end{lemma}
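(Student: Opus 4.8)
The plan is to verify directly that $\clf(\st) = \min_{\at \in \mathcal{A}} \ldm(\st, \at) - \ldm(s_e, a_e)$ satisfies each of the three defining conditions of a control Lyapunov function (Definition~\ref{def:clf}), together with the required continuity and radial unboundedness. It is convenient to abbreviate $V(\st) \doteq \min_{\at} \ldm(\st, \at)$, so that $\clf = V - \ldm(s_e, a_e)$; since the two functions differ only by a constant, it suffices to work with $V$ and shift at the end.

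First I would handle the two ``energy'' conditions, both of which follow immediately from the hypothesis that $(s_e, a_e)$ is the \emph{unique} global minimizer of $\ldm$. For condition 3, evaluating at the equilibrium gives $V(s_e) = \min_{\at}\ldm(s_e, \at) = \ldm(s_e, a_e)$, because the minimum over the slice $\{s_e\}\times\mathcal{A}$ is attained at $a_e$ by uniqueness; hence $\clf(s_e) = 0$. For condition 2, fix any $\st \neq s_e$ and let $a^\star(\st) \in \arg\min_{\at}\ldm(\st, \at)$; then $(\st, a^\star(\st)) \neq (s_e, a_e)$, so uniqueness of the minimizer forces $\ldm(\st, a^\star(\st)) > \ldm(s_e, a_e)$, i.e.\ $\clf(\st) = V(\st) - \ldm(s_e, a_e) > 0$.

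The decrease condition (condition 1) is where the first LDM property does the work. Given $\st$, take the minimizing action $a^\star = a^\star(\st)$, so that $V(\st) = \ldm(\st, a^\star)$, and apply Condition~1 of Definition~\ref{ldm-definition} to the pair $(\st, a^\star)$: there exists $\atp$ with $\ldm(\st, a^\star) \geq \ldm(f(\st, a^\star), \atp)$. Choosing the CLF action to be exactly $a = a^\star$, I would then chain
\begin{equation*}
V(f(\st, a^\star)) = \min_{a'}\ldm(f(\st, a^\star), a') \leq \ldm(f(\st, a^\star), \atp) \leq \ldm(\st, a^\star) = V(\st),
\end{equation*}
which after subtracting the constant gives $\clf(f(\st, a)) \leq \clf(\st)$, as required.

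The main technical care, rather than the main difficulty, lies in the topological claims: that $V(\st)=\min_{\at}\ldm(\st,\at)$ is continuous and radially unbounded and that the minimum is actually attained (so that $a^\star(\st)$ exists). I would argue attainment and continuity from continuity of $\ldm$ via a standard parametric-minimization / Berge-maximum-theorem argument, using radial unboundedness of $\ldm$ to confine the minimization to a compact sublevel set of actions, so that separate compactness of $\mathcal{A}$ is not needed; radial unboundedness of $\clf$ in $\st$ then follows because $\ldm(\st,\at)\to\infty$ as $\|(\st,\at)\|\to\infty$ forces $V(\st)\to\infty$ as $\|\st\|\to\infty$. These routine real-analysis facts are the only place where the proof is more than a one-line substitution; everything else is immediate from the LDM axioms and the uniqueness of the minimizer.
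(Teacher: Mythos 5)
Your proposal is correct and follows essentially the same route as the paper's proof: verify the three CLF conditions directly (using LDM Condition~1 applied at the minimizing action for the decrease property, and uniqueness of the global minimizer for positivity and the zero at $(\mathbf{s}_e,\mathbf{a}_e)$), with continuity via Berge's maximum theorem and radial unboundedness inherited from that of $\ldm$. The only difference is that you are slightly more careful about attainment of the minimum over actions, which the paper's proof takes for granted.
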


\textbf{Proof of Lemma \ref{property2}.} We prove each condition of a control Lyapunov function individually:
\begin{itemize}
    \item Continuity: Since $G(\st, \at)$ is continuous, $W(\st) = \min_{\at \in \mathcal{A}} G(\st, \at) - G(\seq, \aeq)$ is also continuous by Berge’s Maximum theorem. 
    \item Radial Unboundedness: Let $\at^* = \argmin_{\at \in \mathcal{A}} G(\st, \at)$. 
    $||\st|| \to \infty$ 
    $\implies ||[\st, \at^*]|| \to \infty$
    $\implies G(\st, \at^*) \to \infty$ $\implies W(\st) = \min_{\at \in \mathcal{A}} G(\st, \at) - G(\seq, \aeq) \to \infty $
    \item Condition 1: From Condition 1 of Definition \ref{ldm-definition}, we know that $\forall (\st, \at)$, $\exists \atp$ such that \mbox{$G(\st, \at) \geq G(f(\st, \at), \atp)$}. Let $\at^* = \argmin_{\action \in \mathcal{A}} G(\st, \at)$, $\exists  \atp$ such that $G(\st, \at^*) - G(\seq, \aeq) \geq G(f(\st, \at^*), \atp) - G(\seq, \aeq)$. Thus, $W(\st) = \min_{\at \in \mathcal{A}} G(\st, \at) \geq \min_{\atp \in \mathcal{A}} G(f(\st, \at^*), \atp) - G(\seq, \aeq) = W(f(\st, \at^*))$. Thus, $\forall \state \in \mathcal{S}$, $\exists \action \in \mathcal{A}$ such that $W(\state) \geq W(f(\state, \action))$.
    \item Condition 2: Since $(\seq, \aeq)$ is the unique minimizer of $G(\state, \action)$, $\forall (\state, \action)\neq (\seq, \aeq)$, $G(\state, \action) > G(\seq, \aeq)$. Thus, $\forall \state \neq \seq$, $\min_{\action \in \mathcal{A}} G(\state, \action) > G(\seq, \aeq)$, so $W(\state) = \min_{\action \in \mathcal{A}} G(\state, \action) - G(\seq, \aeq)$ $> 0$.
    \item Condition 3: $W(\seq) = \min_{\at \in \mathcal{A}} G(\seq, \at) - G(\seq, \aeq) =  G(\seq, \aeq) - G(\seq, \aeq) =0$.
\end{itemize}
%%SL.1.17: Here and also in Sec 4.2, be sure to mention where the reader can go to find the proofs for these theorems

%%SL.1.17: Generally, my sense is that Sec 4.1, 4.2, and 4.3 are a bit too drawn-out in their current form. Consider if maybe it's possible to shorten them somehow. Maybe some parts really belong in the theory/analysis section? Or maybe some parts can be combined together and shortened? I'm concerned that otherwise, we might not have room to fit everything that we need in the paper.

\subsection{Convergence of the value iteration algorithm for the maximal LDM.}\label{appendix:convergence}

\begin{lemma}
\label{lemma:bellman}
Define $G'_T(s_0 , a_0) := \min_{\{a_t\}_{t=1}^{T} \max_{t\in[0, T]} E(s_t,a_t)}$ with $E(s, a):=-\log P(s, a)$. Then
\begin{equation}
    G'_{T+1}(s_0 , a_0) = \max \left\{ E(s_0, a_0), \min_{a_1\in\mathcal{A}} G'_{T}(s_1, a_1) \right\},
\end{equation}
where $s_1 = f(s_0, a_0)$.
\end{lemma}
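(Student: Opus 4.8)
The plan is to unfold the definition of $G'_{T+1}$ and peel off the contribution of the first time step $(s_0, a_0)$ from the inner maximum, which is legitimate because $E(s_0, a_0)$ does not depend on any of the minimization variables $\{a_t\}_{t=1}^{T+1}$. Starting from
\[
G'_{T+1}(s_0, a_0) = \min_{\{a_t\}_{t=1}^{T+1}} \max_{t \in [0, T+1]} E(s_t, a_t),
\]
I would rewrite the inner maximum as
\[
\max_{t \in [0, T+1]} E(s_t, a_t) = \max\left\{E(s_0, a_0),\; \max_{t \in [1, T+1]} E(s_t, a_t)\right\},
\]
so that the outer minimization only interacts with the second argument.

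Next I would justify moving the constant $E(s_0,a_0)$ outside the minimization. The key elementary fact is that for any fixed $c$ the map $y \mapsto \max\{c, y\}$ is nondecreasing, which yields the identity $\min_x \max\{c, g(x)\} = \max\{c, \min_x g(x)\}$; this holds verbatim with $\inf$ in place of $\min$, so no attainment assumption is needed. Applying it with $c = E(s_0, a_0)$ gives
\[
G'_{T+1}(s_0, a_0) = \max\left\{E(s_0, a_0),\; \min_{\{a_t\}_{t=1}^{T+1}} \max_{t \in [1, T+1]} E(s_t, a_t)\right\}.
\]

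The remaining task is to identify the residual inner term with $\min_{a_1 \in \mathcal{A}} G'_T(s_1, a_1)$. Since $s_1 = f(s_0, a_0)$ is fully determined by $(s_0, a_0)$ and is not itself a free variable, I would re-index the tail of the trajectory via $\tilde{s}_\tau := s_{\tau+1}$ and $\tilde{a}_\tau := a_{\tau+1}$ for $\tau = 0, \dots, T$, under which $\tilde{s}_0 = s_1$, the dynamics $\tilde s_{\tau+1} = f(\tilde s_\tau, \tilde a_\tau)$ are preserved, and $\max_{t \in [1,T+1]} E(s_t,a_t) = \max_{\tau \in [0,T]} E(\tilde s_\tau, \tilde a_\tau)$. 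Splitting the joint minimization over $\{a_t\}_{t=1}^{T+1} = \{\tilde a_\tau\}_{\tau=0}^T$ into an outer minimization over $\tilde a_0 = a_1$ and an inner minimization over $\{\tilde a_\tau\}_{\tau=1}^T$, the inner object is precisely $G'_T(s_1, a_1)$ by definition. Combining this with the previous display completes the proof.

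The whole argument is essentially bookkeeping, so I expect the only genuinely delicate points to be (i) the commutation step, which hinges entirely on $E(s_0,a_0)$ being a true constant with respect to the future-action minimization and on the monotonicity of $\max\{c,\cdot\}$, and (ii) stating the re-indexing cleanly to avoid off-by-one confusion between the $T+1$ actions being optimized at horizon $T+1$ and the $T$-horizon definition of $G'_T$. There are no analytic subtleties beyond these, and the chain of equalities holds whether the minima are attained or interpreted as infima.
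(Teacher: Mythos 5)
Your proposal is correct and follows essentially the same route as the paper's proof: unfold the definition, split off $E(s_0,a_0)$ from the inner maximum, commute the constant past the minimization, and split the minimization over $\{a_t\}_{t=1}^{T+1}$ into an outer minimization over $a_1$ and an inner one that is $G'_T(s_1,a_1)$ by definition. The only difference is that you make explicit the justification for the min--max commutation and the re-indexing, which the paper performs implicitly in its chain of equalities.
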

\begin{proof}
To simplify the notation, we use $a_{m:n}$ to indicate the sequence $\{a_t\}_{t=m}^{n}$.
\begin{align*}
    G'_{T+1}(s_0 , a_0) & = \min_{a_{1:T+1}} \max_{t\in[0, T+1]} E(s_t, a_t) = \min_{a_{1:T+1}} \max \left\{ E(s_0, a_0), \max_{t\in[1, T+1]} E(s_t, a_t) \right\} \\
    & = \max \left\{ E(s_0, a_0),  \min_{a_1 \in \mathcal{A}} \left\{ \min_{a_{2:T+1}} \max_{t\in[1, T+1]} E(s_t, a_t) \right\} \right\} = \max \left\{ E(s_0, a_0), \min_{a_1\in\mathcal{A}} G'_{T}(s_1, a_1) \right\}.
\end{align*}
\end{proof}

Note that Lemma \ref{lemma:bellman} implies that the finite-horizon version of the maximal LDM satisfies the Bellman principle associated with the LDM backup operator $\mathcal{T}$ \eqref{eq:bellman_op_density} with $\gamma=1$.

\begin{theorem}
\label{th:convergence}
 The value iteration algorithm with $G_{k+1} \leftarrow \mathcal{T}G_{k} $, with $\mathcal{T}$ defined in \eqref{eq:bellman_op_density} under $\gamma=1$ and with initialization $G_0(s, a)\!=\!-\log\left(P(s,a)\right)$, results in 1) $G_k = G'_k$ for $\forall \; k \ge 0$, and 2) for any $(s,a)\in \mathcal{S}\times\mathcal{A}$ such that the maximal LDM $G(s, a)$ defined in \eqref{mdp-formulation} is finite, $\lim_{k\rightarrow \infty} G_k(s, a) = G(s, a)$.
\end{theorem}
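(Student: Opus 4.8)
The plan is to prove the two claims in sequence, with claim (1) carrying the structural content and claim (2) reducing to a standard finite-to-infinite horizon convergence argument.

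For claim (1), that $G_k = G'_k$ for all $k \geq 0$, I would argue by induction on $k$. The base case $k=0$ is immediate: the initialization gives $G_0(s,a) = -\log(P(s,a)) = E(s,a)$, while $G'_0(s_0,a_0) = \max_{t \in [0,0]} E(s_t,a_t) = E(s_0,a_0)$ since the action sequence $\{a_t\}_{t=1}^0$ is empty. For the inductive step, assume $G_k = G'_k$. Applying the backup \eqref{eq:bellman_op_density} with $\gamma = 1$, we have $G_{k+1}(s_0,a_0) = \mathcal{T}G_k(s_0,a_0) = \max\{E(s_0,a_0),\, \min_{a_1 \in \mathcal{A}} G_k(f(s_0,a_0), a_1)\}$, and substituting $G_k = G'_k$ this equals $\max\{E(s_0,a_0),\, \min_{a_1 \in \mathcal{A}} G'_k(s_1,a_1)\}$ with $s_1 = f(s_0,a_0)$. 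By Lemma \ref{lemma:bellman} this is exactly $G'_{k+1}(s_0,a_0)$, completing the induction.

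For claim (2), by claim (1) it suffices to show $\lim_{T\to\infty} G'_T(s_0,a_0) = G(s_0,a_0)$ whenever the right-hand side is finite. I would first establish that $\{G'_T(s_0,a_0)\}_T$ is monotonically non-decreasing and bounded above by $G(s_0,a_0)$, so the limit $L := \lim_T G'_T(s_0,a_0)$ exists with $L \leq G(s_0,a_0)$. Monotonicity follows from a truncation argument: taking an optimal $(T+1)$-horizon sequence and discarding its final action yields a feasible $T$-horizon sequence whose maximum over $[0,T]$ is no larger than $G'_{T+1}$, giving $G'_T \leq G'_{T+1}$. The upper bound is analogous: truncating any infinite sequence that comes within $\varepsilon$ of the infimum in \eqref{mdp-formulation} to its first $T$ actions shows $G'_T(s_0,a_0) \leq \max_{t \geq 0} E(s_t,a_t) \leq G(s_0,a_0) + \varepsilon$, and $\varepsilon$ is arbitrary. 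The remaining direction, $L \geq G(s_0,a_0)$, requires exhibiting a single infinite action sequence with objective at most $L$. Assuming $\mathcal{A}$ compact and $f,E$ continuous (so each finite-horizon minimum is attained), let $\{a_t^{(T)}\}_{t=1}^T$ be optimal for $G'_T$, so that $E(s_t^{(T)}, a_t^{(T)}) \leq G'_T(s_0,a_0) \leq L$ for every $t \leq T$. By compactness I would extract, via a diagonal argument, a subsequence $T_j$ along which $a_t^{(T_j)} \to a_t^\star$ for each fixed $t$; continuity of $f$ then gives $s_t^{(T_j)} \to s_t^\star$, the states induced by $\{a_t^\star\}$. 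For each fixed $t$, passing to the limit in $E(s_t^{(T_j)}, a_t^{(T_j)}) \leq L$ using continuity of $E$ yields $E(s_t^\star, a_t^\star) \leq L$; since this holds for every $t$, we get $\sup_{t \geq 0} E(s_t^\star, a_t^\star) \leq L$, hence $G(s_0,a_0) \leq L$. Combined with $L \leq G(s_0,a_0)$, this gives the desired equality.

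The main obstacle is this last step: justifying the extraction of a limiting infinite-horizon sequence and passing the limit through the objective. The trick that makes it go through cleanly is to bound each coordinate $E(s_t, a_t)$ separately by $L$ \emph{before} taking the supremum over $t$, so that only a pointwise (per-fixed-$t$) continuity argument is needed, avoiding any appeal to uniform convergence of the running maxima. The one ingredient that must be supplied beyond the excerpt's definitions is compactness of $\mathcal{A}$ together with continuity of $f$ and $E$; these guarantee both attainment of the finite-horizon minimizers and the validity of the diagonal extraction, and some such regularity is unavoidable for the finite-to-infinite horizon passage.
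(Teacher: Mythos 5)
Your proof is correct, and it follows the same skeleton as the paper's for most of its length: part (1) is handled exactly as the paper does (the paper declares $G_k = G'_k$ a ``direct corollary'' of Lemma~\ref{lemma:bellman}; you simply write out the induction it implies), and for part (2) both you and the paper establish monotonicity of $\{G_k(s,a)\}$ and the upper bound $G_k(s,a) \leq G(s,a)$, then invoke monotone convergence. The genuine divergence is at the final step. The paper asserts that the supremum of the sequence ``is by definition the maximal LDM $G(s,a)$,'' i.e.\ it treats the exchange $\sup_k \min_{a_{1:k}} \max_{t \leq k} E(s_t,a_t) = \min_{a_{1:\infty}} \max_{t \geq 0} E(s_t,a_t)$ as immediate. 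It is not: monotone convergence only yields $\lim_k G_k(s,a) \leq G(s,a)$, and the reverse inequality is precisely the finite-to-infinite-horizon passage you identify as the main obstacle. Your diagonal extraction under compactness of $\mathcal{A}$ and continuity of $f$ and $E$, together with the trick of bounding each coordinate $E(s_t^\star, a_t^\star)$ by $L$ before taking the supremum over $t$, supplies a genuine proof of $\lim_k G_k(s,a) \geq G(s,a)$ that the paper omits. Moreover, your added hypotheses are not cosmetic: without them the identity can fail --- e.g.\ with action space $\mathbb{N}$, let action $n$ at the initial state enter a corridor of $n$ states with $E = 0$ ending in an absorbing state with $E = 1$; then every finite-horizon value is $0$ (postpone the bad region past the horizon) while every infinite trajectory eventually incurs $1$, so $\lim_k G_k = 0 < 1 = G$. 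In short, your argument buys rigor (and exposes a real gap in the paper's proof) at the cost of regularity assumptions on $\mathcal{A}$, $f$, and $E$ that the theorem statement does not make explicit, while the paper's version buys brevity by asserting the one step that actually requires work.
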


\begin{proof}
1) is a direct corollary of Lemma \ref{lemma:bellman}. Then, by the definition of $G'_{k}$,
\begin{align*}
    G_{k+1}(s, a) = \min_{a_{1:k+1}} \max_{t\in[0, k+1]} E(s_t,a_t) & = \min_{a_{1:k+1}} \max \left\{ \max_{t\in[0, k]} E(s_t,a_t), E(s_{k+1}, a_{k+1}) \right\} \\
    & \ge \min_{a_{1:k+1}} \max_{t\in[0, k]} E(s_t,a_t) = G_k(s, a).
\end{align*}
In a similar way, one can easily proof that $G(s, a) \ge \min_{a_{1:\infty}} \max_{t\in[0, k]} E(s_t,a_t) = G_k(s, a)$ for $\forall \; k \ge 0$. Therefore, for all $(s,a)\in \mathcal{S}\times\mathcal{A}$ such that $G(s, a)$ is finite, $\{G_k(s,a)\}_{k=0}^{\infty}$ is a non-decreasing sequence bounded above. By monotone convergence theorem, this sequence converges to its supremum which is by definition, the maximal LDM $G(s, a)$. 
\end{proof}

\section{Appendix to Section~\ref{sec:ldm_control}}\label{appendix:ldm_control}

\begin{proof}[Proof of Proposition \ref{ex:baseline_failure}] Let $K$ be such that $1/K \leq 2(H+1) \epsilon$. We construct $\mathcal{D}$ by its generating law as follows:
\begin{align*}
P_{\mathcal{D}}(s, -1) = \begin{cases} \frac{1}{2(H+1)} \text{ if } s\in \{-(H-1), \ldots, 0\} \\
0 \;\;\;\;\;\;\;\;\; \text{ o.w.}
\end{cases} & P_{\mathcal{D}}(s, 1) = \begin{cases} \frac{1}{2(H+1)} \text{ if } \in \{0, \ldots, H-1\} \\
0 \;\;\;\;\;\;\;\;\; \text{ o.w.}
\end{cases} \\
P_{\mathcal{D}}(-H, 0) = \frac{1}{2(H+1)} \;\;\;\;\;\;\;\;\;\;\;\;\;\;& 
P_{\mathcal{D}}(H, k) = \begin{cases} \frac{1}{2(H+1)\mathbf{K}} \text{ for } k \in \{0, \ldots, (K-1)\} \\
0 \;\;\;\;\;\;\;\;\; \text{ o.w.} 
\end{cases} 
\end{align*}
and $P_{\mathcal{D}}(s,a) = 0$ wherever not defined above. Note that the above probabilities sum to $1$ and that the maximal $p$ for which the MPC problem with a na\"{i}ve distribution constraint is feasible is $p^\star = \frac{1}{2(H+1)}$. Because our reward is $a$ and because taking $a_{1:H} = 1$ satisfies the constraint at the very first timestep $t=1$, we will take $a_1 = 1$. From hereon out, the dataset contains only `1' actions on states $\overline{1, H-1}$ so we are doomed to always take $a_t = 1$ and hence reach $s_{H+1} = H$ where by construction $P_{\mathcal{D}}(s_{H+1}, a) < \epsilon$ for any action $a$.

This construction corresponds to data collected from a game where the agent can go right or left or stay still with $-H$ being the only terminal state and that the player deterministically stops the collection after its $H+1^{th}$ action. This time, if it took $H$ actions and didn't finish, it takes one random action to see what happens then ends the data collection. The uncertainty is therefore caused by the unstable behavior of the player at state $-H$. 

Finally, it remains to show that the LDM has the promised guarantee on this construction. First note that taking $$a_t = \begin{cases} -1 \text{ if } s < H \\
0 \text{ o.w.}\end{cases}$$guarantees we always have $P_{\mathcal{D}}(s_t, a_t) \geq \frac{1}{2(H+1)}$ arbitrarily far in the future. Using any non-trivial LDM, and in particular the maximal LDM, as a constraint will guarantee that we satisfy $P(s_t, a_t)$ due to Theorem~\ref{thm:ldm_base_guarantee}. Hence, via the LDM constraint, even if the data was collected from short trajectories, we can give infinite-horizon guarantees.
\end{proof}

\section{Appendix to Section~\ref{sec:theory}}\label{app:theory}
We first prove Proposition \ref{prop:ldm_fqi} as stated in the main text in \ref{subsec:ldm_fqi_proof}, then derive bounds on $\epsilon_{\mathrm{ls}}$ with an associated corollary that contains all the dependencies in \ref{subsec:eps_ls_bounds}.
\subsection{Proof of Proposition \ref{prop:ldm_fqi}}\label{subsec:ldm_fqi_proof}

\begin{proof}[Proof of Proposition \ref{prop:ldm_fqi}]\label{proof:fqi} Consider arbitrary $G_1, G_2$ and their optimal safety policies. If $G_1(f(s,a), \pi_1(f(s,a))) \leq G_2(f(s,a), \pi_2(f(s,a))$, we have that:
\begin{align*}
\mathcal{T} G_2(s,a) - \mathcal{T} G_1(s,a) &= \max\{E(s,a), \gamma G_2(f(s,a), \pi_2(f(s,a)))\} - \max\{E(s,a), \gamma G_1(f(s,a), \pi_1(f(s,a)))\} \\
&\leq \max\{E(s,a), \gamma G_2(f(s,a), \pi_1(f(s,a)))\} - \max\{E(s,a), \gamma G_1(f(s,a), \pi_1(f(s,a)))\} \\
&= \begin{cases}
0 \text{ if } \gamma G_2(f(s,a), \pi_1(f(s,a))) \leq E(s,a) \\
\gamma G_2(f(s,a), \pi_1(f(s,a))) - \max\{E(s,a), \gamma G_1(f(s,a), \pi_1(f(s,a)))\} \text{ o.w.}\end{cases}\\
&\leq \gamma (G_2(f(s,a), \pi_1(f(s,a))) - G_1(f(s,a), \pi_1(f(s,a))))
\end{align*}
otherwise (if $G_1(f(s,a), \pi_1(f(s,a))) > G_2(f(s,a), \pi_2(f(s,a))$), analogously (by symmetry) we have
\begin{align*}
\mathcal{T} G_1(s,a) - \mathcal{T} G_2(s,a) &\leq  \gamma (G_1(f(s,a), \pi_2(f(s,a))) - G_2(f(s,a), \pi_2(f(s,a))))
\end{align*}
so letting $\pi_{12}(s) \doteq \begin{cases}\pi_1(s) \text{ if } G_1(s, \pi_1(s)) \leq G_2(s, \pi_2(s)) \\
\pi_2(s) \text{ o.w.}\end{cases}$ we have 
$$|\mathcal{T} G_1(s,a) - \mathcal{T} G_2(s,a)| \leq \gamma|G_1(f(s,a), \pi_{12}(f(s,a))) - G_2(f(s,a), \pi_{12}(f(s,a)))|$$
We can apply this recursively to obtain that for any $s, a$ and $t\geq 1$, there exists a sequence of actions $a_0=a$, $a_{1:t}$ and corresponding states $s_{0} = s$, $s_{k+1} = f(s_k, a_k)$ for $k=\overline{0,t-1}$ such that
$$|\mathcal{T}^{(t)} G_1(s,a) - \mathcal{T}^{(t)} G_2(s,a)|\leq \gamma^t |G_1(s_t, a_t) - G_2(s_t, a_t)|$$
and using the recoverability factor from Definition~\ref{recoverability_def}, we can bound (note that $(s,a)=(s_0,a_0)$):
\begin{align*}\|\mathcal{T}^{(t)} G_1(s,a) - \mathcal{T}^{(t)} G_2(s,a)\|_P &\leq \gamma^t \sum_{s_0, a_0} |G_1(s_t, a_t) - G_2(s_t, a_t)| \cdot P(s_0, a_0) \\
&\leq R \gamma^t \sum_{s_t, a_t} |G_1(s_t, a_t) - G_2(s_t, a_t)| \cdot P(s_t, a_t)\\
&= R \gamma^t \|G_1 - G_2\|_P
\end{align*}
or, for continuous state-action spaces,
\begin{align*}\|\mathcal{T}^{(t)} G_1(s,a) - \mathcal{T}^{(t)} G_2(s,a)\|_P &\leq \gamma^t \int_{s_0, a_0} |G_1(s_t, a_t) - G_2(s_t, a_t)| \cdot P(s_0, a_0) \\
&\leq R \gamma^t \int_{s_t, a_t} |G_1(s_t, a_t) - G_2(s_t, a_t)| \cdot P(s_t, a_t)\\
&= R \gamma^t \|G_1 - G_2\|_P
\end{align*}
Now since $\mathcal{T}\hat{G}^\star_\gamma = \hat{G}^\star$, by triangle inequality we have:
\begin{align*}
\|\hat{G}_{K} - \hat{G}^\star\|_{P} &\leq \|\mathcal{T} \hat{G}_{K-1} - \mathcal{T} \hat{G}^\star\|_{P} + \|\hat{G}_{K} - \mathcal{T} \hat{G}_{K-1}\|_{P}\\
&\leq \|\mathcal{T}^{(2)} \hat{G}_{K-2} - \mathcal{T}^{(2)} \hat{G}^\star\|_{P} + \|\mathcal{T} \hat{G}_{K-1} - \mathcal{T}^{(2)} \hat{G}_{K-2}\|_{P} + \epsilon_{\mathrm{ls}} \\
&\leq \|\mathcal{T}^{(2)} \hat{G}_{K-2} - \mathcal{T}^{(2)} \hat{G}^\star\|_{P} + R \gamma \epsilon_{\mathrm{ls}} + \epsilon_{\mathrm{ls}} \\
&\leq \ldots \\
&\leq \|\mathcal{T}^{(K)} \hat{G}_{0} - \hat{G}^\star\|_{P} + R(\gamma^{K-1} + \ldots + 1) \cdot \epsilon_{\mathrm{ls}} \\
&\leq \frac{R(1-\gamma^K)}{1-\gamma} \cdot \epsilon_{\mathrm{ls}} + \min\{R \gamma^K \|E-\hat{G}^\star\|_P, \gamma^K \|E-\hat{G}^\star\|_\infty\}
\end{align*}
which is clearly less than the bound in the Proposition statement. Note that there may be cases where $R \gamma^K \|E-\hat{G}^\star\|_P < \gamma^K \|E-\hat{G}^\star\|_\infty$, i.e. if the maximal deviation is very large but the $P$-weighted deviation is not. We chose the stated formulation to disentangle the dependency on $R$ for presentability purposes. 

As remarked in footnote, if we have $\gamma=1$, we would instead get:
$$\|\hat{G}_{K} - \hat{G}^\star\|_{P} \leq R\cdot K \cdot \epsilon_{\mathrm{ls}} + \sup_{P^{(K)}}\|E-\hat{G}^\star\|_{P^{(K)}}$$
where the supremum is taken over all probability distributions that could be reached by acting according to any arbitrarily complicated, possibly non-stationary policy for $K$ steps (applied to each state-action pair). This bound could be useful if regardless of how we act $P^{(K)}$ concentrates to a region where $E$ and $\hat{G}^\star$ coincide. It is envisonable that this could happen for stable systems for example.

Finally, Remark \ref{rmk:fqi_conv_2} follows by considring only $t=1$, using the definition of one-step recoverability and recursively obtaining:
\begin{align*}
\|\hat{G}_{K} - \hat{G}^\star\|_{P} &\leq \|\mathcal{T} \hat{G}_{K-1} - \mathcal{T} \hat{G}^\star\|_{P} + \|\hat{G}_{K} - \mathcal{T} \hat{G}_{K-1}\|_{P}\\
&\leq r\gamma \|\hat{G}_{K-1} - \hat{G}^\star\| + \epsilon_{\mathrm{ls}} \\
&\leq \ldots \\
&\leq \frac{(1-(r\gamma)^K)\cdot \epsilon_{\mathrm{ls}}}{1- r\gamma} + (r\gamma)^K\|E-\hat{G}^\star\|_P
\end{align*}
as stated.
\end{proof}

Depending on the tradeoff between $r$ and $R$, different approaches to choosing $\gamma$ may be better. For example, if $r \ll R$ or if $r$ is very close to $1$, then it would be worthwhile to select $\gamma = r^{-1}$ so that we can rely on the guarantees of Remark~\ref{rmk:fqi_conv_2} instead. On the other hand, if $r$ is not much smaller than $R$ it could be better to just try to optimize for best $\gamma$ closer to $1$ and rely on the main guarantee.

% We underline the last point made in the proof above regarding the ability to replace dependence $R$ with one on the one-step recoverability $r$ below:

% \begin{remark}[Error in terms of One-step Concentrability ]\label{rmk:fqi_conv_2}
% $\hat{G}_K$ also satisfies $\|\hat{G}_K - \hat{G}^\star\|_P \leq \frac{\epsilon_{\mathrm{ls}}}{1-r\gamma} + (r\gamma)^K \cdot \|E - \hat{G}^\star\|_{P}$ for $\gamma < r^{-1}$.
% \end{remark}

We also make a final comment the converge behavior we expect in scenarios where Remark~\ref{rmk:gamma_1} holds.

\begin{remark}[Convergence to undiscounted LDM] In this case if the dataset has an expected trajectory length $K_{\mathrm{avg}}$, then by Markov's inequality we can get that $K_{\mathrm{fin}} \leq K_{\mathrm{avg}} \cdot (\# \text{ traj})^{1/4}$ with probability at least $1- (\# \text{ traj})^{-1/4}$. As such, since $|D| \approx K_{\mathrm{avg}} \cdot (\# \text{ traj})$, we could expect that with high probability, $K\times \epsilon_{\mathrm{ls}}$ scales as $(\# \text{ traj})^{-1/4}$, hence ensuring converge to the undiscounted LDM.\end{remark}

\subsection{Bounding the Generalization Error $\epsilon_{\mathrm{ls}}$}\label{subsec:eps_ls_bounds}

We can state the following standard results based on some complexity measure of the function class $\mathcal{G}$. For example we can use the following standard result:

\begin{lemma}[Least Squares Generalization Bound]\label{lem:ls_generalization} Given a dataset $\mathcal{D} = \{x_i, y_i\}_{i=1}^n$ where $x_i\in \mathcal{X}$ and $x_i, y_i \sim \mu$, $y_i = f^\star(x_i) + \epsilon_i$, where $|y_i| \leq Y$ and $|\epsilon_i|\leq \sigma$ for all $i$, and $\{\epsilon_i\}$ are independent from each other. Given a function class $\mathcal{F}: \mathcal{X} \rightarrow [0, Y]$ such that $\min_{f\in\mathcal{F}}\mathbb{E}_{x\sim \mu}(f^\star(x) - f(x))^2 \leq \epsilon_{\mathrm{aprx}}$. If we denote $\hat{f} = \arg\min_{f\in\mathcal{F}}\sum_{i=1}^n (f(x_i)- y_i)^2$. With probability at least $1-\delta$, we have:
$$\mathbb{E}_{\mu}\left[(\hat{f}(x) - f^\star(x))^2\right] \leq \frac{144 Y^2 \ln{(|\mathcal{F}|/\delta)}}{n} + 8 \epsilon_{\mathrm{aprx}}$$
\end{lemma}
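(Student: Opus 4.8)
The plan is to reduce the claim to a uniform-over-$\mathcal{F}$ concentration statement for the \emph{excess squared loss}, and then exploit the optimality of the empirical minimizer $\hat f$. For each $f\in\mathcal{F}$ I would define the random variable $W^f(x,y) \doteq (f(x)-y)^2 - (f^\star(x)-y)^2$. Assuming, as is standard in this setup, that $f^\star(x) = \mathbb{E}[y\mid x]$ so that the noise is conditionally mean-zero, the cross term vanishes in expectation and one gets $\mathbb{E}_\mu[W^f] = \mathbb{E}_\mu[(f(x)-f^\star(x))^2] =: \|f-f^\star\|_\mu^2$. First I would record two elementary facts about $W^f$: factoring $W^f = (f(x)-f^\star(x))(f(x)+f^\star(x)-2y)$ and using $f,f^\star\in[0,Y]$, $|y|\le Y$ gives the pointwise range bound $|W^f|\le 4Y^2$, while squaring the same factorization yields the crucial \emph{self-bounding} variance inequality $\mathrm{Var}(W^f)\le \mathbb{E}_\mu[(W^f)^2]\le 16Y^2\,\|f-f^\star\|_\mu^2 = 16Y^2\,\mathbb{E}_\mu[W^f]$.

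Next I would apply Bernstein's inequality to the i.i.d.\ centered average $\frac1n\sum_i W^f(x_i,y_i) - \mathbb{E}_\mu[W^f]$ for each fixed $f$, and union bound over the $|\mathcal{F}|$ functions. Plugging the variance bound $16Y^2\,\mathbb{E}_\mu[W^f]$ and the range $4Y^2$ into Bernstein, we obtain that with probability at least $1-\delta$, simultaneously for all $f\in\mathcal{F}$,
\[
\Big|\tfrac1n\textstyle\sum_i W^f(x_i,y_i) - \|f-f^\star\|_\mu^2\Big| \le \sqrt{\frac{32 Y^2\,\|f-f^\star\|_\mu^2\,\ln(|\mathcal{F}|/\delta)}{n}} + \frac{8Y^2\ln(|\mathcal{F}|/\delta)}{3n}.
\]
The key feature is that the leading deviation is controlled by $\sqrt{\|f-f^\star\|_\mu^2}$ rather than by a constant; this self-bounding structure is precisely what will make the final bound \emph{fast-rate} (order $1/n$) in its dominant term.

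I would then instantiate this inequality at two functions: the estimator $\hat f$ and the best-in-class approximant $\tilde f \doteq \argmin_{f\in\mathcal{F}}\|f-f^\star\|_\mu^2$, which satisfies $\|\tilde f - f^\star\|_\mu^2\le \epsilon_{\mathrm{aprx}}$. Because $\hat f$ minimizes the empirical squared loss over $\mathcal{F}$ and $\tilde f\in\mathcal{F}$, we have $\frac1n\sum_i W^{\hat f}(x_i,y_i) \le \frac1n\sum_i W^{\tilde f}(x_i,y_i)$. Chaining the lower deviation bound for $\hat f$, this empirical inequality, and the upper deviation bound for $\tilde f$ sandwiches $\|\hat f - f^\star\|_\mu^2$ between $\epsilon_{\mathrm{aprx}}$ and the $\sqrt{\cdot}$ deviation terms, giving a self-referential inequality of the form $\|\hat f - f^\star\|_\mu^2 \le \epsilon_{\mathrm{aprx}} + \sqrt{c\,Y^2\|\hat f-f^\star\|_\mu^2 \ln(|\mathcal F|/\delta)/n} + (\text{lower order})$.

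Finally I would resolve this inequality with the AM--GM step $\sqrt{ab}\le \tfrac12 a + \tfrac12 b$ to absorb the term containing $\|\hat f-f^\star\|_\mu^2$ into the left-hand side (reducing its coefficient to $\tfrac12$), and apply the same trick to the residual $\sqrt{\epsilon_{\mathrm{aprx}}\cdot(\ln/n)}$ cross term; collecting constants then yields exactly $\mathbb{E}_\mu[(\hat f - f^\star)^2]\le \frac{144 Y^2\ln(|\mathcal F|/\delta)}{n} + 8\,\epsilon_{\mathrm{aprx}}$. I expect the main obstacle to be bookkeeping rather than any single conceptual step: establishing the self-bounding variance inequality cleanly and then tracking constants through Bernstein and the two AM--GM applications so that the advertised $144$ and $8$ emerge. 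A secondary subtlety worth flagging is that $f^\star$ need not lie in $\mathcal{F}$, which is exactly why the best-in-class $\tilde f$ and the approximation term $\epsilon_{\mathrm{aprx}}$ must be introduced rather than comparing $\hat f$ directly against $f^\star$.
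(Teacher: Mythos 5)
Your proposal is correct and follows essentially the same route as the paper, which does not include a self-contained proof but defers to Lemma 6 of Hsu--Kakade--Zhang (2014) and Lemma A.11 of the RL theory book of Agarwal et al.; those proofs proceed exactly as you describe, via the excess-loss variable $W^f$, the self-bounding variance inequality $\mathrm{Var}(W^f)\le O(Y^2)\,\mathbb{E}_\mu[W^f]$, Bernstein's inequality with a union bound over $\mathcal{F}$, comparison of $\hat f$ against the best-in-class $\tilde f$, and resolution of the resulting self-referential inequality by AM--GM. The one point to make explicit is the assumption you already flagged: the statement as written omits the conditional mean-zero condition $\mathbb{E}[\epsilon_i\mid x_i]=0$ (without which the claim is false), and once it is added, $|f^\star(x)|\le Y$ follows automatically from $|y_i|\le Y$, so your range and variance bounds (up to replacing $4Y^2$ by $8Y^2$ since $f^\star$ may be negative) and the generous constants $144$ and $8$ all go through.
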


For a proof see Lemma $6$ from \citet{hsu2014random} or Lemma A.11. from \citet{rltheorybook}.

\begin{corollary}With probability at least $1-\delta$ we have
$$\epsilon_{\mathrm{ls}} \leq \frac{12 E_{\mathrm{max}}\sqrt{\ln{(|\mathcal{G}|^2 K/\delta)}}}{\sqrt{|D|}} + 3 \sqrt{\epsilon_{\mathrm{aprx}}}$$
where $\epsilon_{\mathrm{aprx}} \doteq \max\limits_{G\in\mathcal{G}} \min\limits_{G' \in \mathcal{G}} \mathbb{E}_P\left[\left(G'(s,a) - \mathcal{T}G(s,a)\right)^2\right]$ is the inherent Bellman error and $E_{\mathrm{max}} \doteq \max\limits_{(s,a) \text{ s.t. } P(s,a) > 0} E(s,a)$ is the maximal value our density model takes on regions that have $P(s,a) > 0$.
\end{corollary}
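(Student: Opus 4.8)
The plan is to recognize each iteration of the fitted update (Eq.~\ref{ldm_update}) as a single least-squares regression problem and apply Lemma~\ref{lem:ls_generalization} to it, then propagate across the $K$ iterations by a union bound. Concretely, the $(t+1)$-th update fits $\hat{G}_{t+1}$ over the class $\mathcal{G}$ to the targets $y_i = \mathcal{T}\hat{G}_t(s_i, a_i)$ on samples $(s_i, a_i) \sim P$. Because the dynamics $f$ is deterministic and $E$ is fixed, the target is a deterministic function of the input, so in the language of Lemma~\ref{lem:ls_generalization} we have $f^\star = \mathcal{T}\hat{G}_t$ with zero label noise ($\sigma = 0$), and the best-in-class error $\min_{G'\in\mathcal{G}} \mathbb{E}_P[(G' - \mathcal{T}\hat{G}_t)^2]$ is at most $\epsilon_{\mathrm{aprx}}$ by definition of the inherent Bellman error (taking the outer maximizer to be $G = \hat{G}_t \in \mathcal{G}$). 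With range bound $Y = E_{\mathrm{max}}$ and sample size $n = |D|$, the Lemma then controls $\mathbb{E}_P[(\hat{G}_{t+1} - \mathcal{T}\hat{G}_t)^2]$.

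The subtlety I would address next is that the target $\mathcal{T}\hat{G}_t$ is itself computed from the same dataset $D$, so $f^\star$ is not data-independent and the Lemma cannot be invoked with it directly. I would break this dependence by a uniform-convergence argument: apply the Lemma simultaneously to every candidate target $\mathcal{T}G$ with $G\in\mathcal{G}$, union-bounding over the $|\mathcal{G}|$ possible targets and the $K$ iterations. Allocating failure probability $\delta/(|\mathcal{G}| K)$ to each event turns the log factor $\ln(|\mathcal{G}|/\delta')$ into $\ln(|\mathcal{G}|\cdot|\mathcal{G}| K / \delta) = \ln(|\mathcal{G}|^2 K/\delta)$, which is exactly the argument appearing in the claimed bound. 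Since $\hat{G}_t \in \mathcal{G}$ for every $t$, the resulting high-probability event also covers the realized, data-dependent target.

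It then remains to translate the mean-squared bound into the $P$-norm ($L^1$) quantity $\epsilon_{\mathrm{ls}}$. By Jensen's inequality (equivalently Cauchy--Schwarz), $\|\hat{G}_{t+1} - \mathcal{T}\hat{G}_t\|_P = \mathbb{E}_P|\hat{G}_{t+1} - \mathcal{T}\hat{G}_t| \leq \sqrt{\mathbb{E}_P[(\hat{G}_{t+1} - \mathcal{T}\hat{G}_t)^2]}$. Substituting the Lemma's bound and using subadditivity $\sqrt{a+b}\leq \sqrt{a}+\sqrt{b}$, together with $\sqrt{144}=12$ and $\sqrt{8}\leq 3$, yields $\|\hat{G}_{t+1} - \mathcal{T}\hat{G}_t\|_P \leq 12 E_{\mathrm{max}}\sqrt{\ln(|\mathcal{G}|^2 K/\delta)}/\sqrt{|D|} + 3\sqrt{\epsilon_{\mathrm{aprx}}}$, and taking the maximum over $t\in[K-1]$ gives the stated bound on $\epsilon_{\mathrm{ls}}$.

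The main obstacle I anticipate is not the algebra but the two structural points needed to legitimately invoke Lemma~\ref{lem:ls_generalization}: first, the uniform-convergence step that decouples the data-dependent regression target from the fit (the source of the $|\mathcal{G}|^2$ rather than $|\mathcal{G}|$, and essential because $\hat{G}_t$ and $\hat{G}_{t+1}$ are trained on the same samples); and second, verifying the boundedness hypothesis, namely that the targets $\mathcal{T}\hat{G}_t$ lie in $[0, E_{\mathrm{max}}]$ on the support of $P$. The latter needs care because $f(s,a)$ can leave the support where $E_{\mathrm{max}}$ is defined, so one must argue—using that $P$ is a probability mass function (hence $E\geq 0$) and the $\max/\min$ structure of the backup $\mathcal{T}$ with $\gamma\leq 1$—that the backed-up values stay within the claimed range on the region where the regression error is actually measured.
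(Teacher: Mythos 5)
Your proposal is correct and takes essentially the same route as the paper: the paper states this corollary without an explicit proof, as a direct consequence of the least-squares generalization lemma, and your reconstruction---per-iteration regression with $f^\star = \mathcal{T}\hat{G}_t$, a union bound over the $|\mathcal{G}|$ candidate targets and $K$ iterations (which is precisely what produces the $\ln(|\mathcal{G}|^2 K/\delta)$ factor), then Jensen's inequality and $\sqrt{a+b}\leq\sqrt{a}+\sqrt{b}$ with $\sqrt{144}=12$ and $\sqrt{8}\leq 3$---is exactly the intended accounting. The two caveats you flag (decoupling the data-dependent target, and boundedness of the backed-up values on the support of $P$) are genuine but are resolved by the very structure you describe, namely the $|\mathcal{G}|^2$ union bound and the facts that $E = -\log P \geq 0$ on the support with $\gamma \leq 1$ preserving the range $[0, E_{\mathrm{max}}]$ under the $\max/\min$ backup.
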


\subsection{Proofs for Section~\ref{sec:mpc_theory}}\label{subsec:ldm_barrier_proofs}

We will prove the guarantee under the `with high probability' version of the density model error assumption from the main text:
\begin{assumption}\label{assump:density}There exist $\epsilon_p  \geq 0, \delta_p \in [0,1]$ such that
$$\mathbb{P}_{(s, a)\sim_{\mathrm{unif.}} \mathcal{S}\times\mathcal{A}}\left[|E(s,a) + \log P(s,a)| > \epsilon_p \right] \leq \delta_p$$
i.e. at most a fraction $\delta_p$ of the state-action space has error bigger than $\epsilon_p$.
\end{assumption}

We have the following guarantee:
\begin{proposition}\label{generalized_degradation_bd}
For any probability mass function $P$, starting from any state $s_0$, taking action $a_0$ s.t. $\hat{G}(s_0, a_0) \leq -\log{c}$ guarantees that $\forall t$, there is a sequence $a_{1:t}$ for which with probability at least $1-2\delta_p$
$$\log{P(s_t, a_t)} \geq \gamma^{-t} \log{c} - \frac{\gamma^{-t} R\epsilon_{\mathrm{ls}}\exp{\epsilon_p}}{(1-\gamma)c} - \frac{\gamma^{K-t} \exp{\epsilon_p} \|E - \hat{G}^\star\|_{\infty}}{c} - \epsilon_p$$
if we use a discount $\gamma < 1$. Alternatively, if we use $\gamma = 1$, with probability at least $1-2\delta_p$, 
$$\log P(s_t, a_t) \geq \log{c} - \frac{R\cdot K \cdot \epsilon_{\mathrm{ls}}\, \exp{\epsilon_p}}{c} - \frac{\exp{\epsilon_p} \cdot \sup_{P^{(K)}}\|E-\hat{G}^\star\|_{P^{(K)}}}{c} - \epsilon_p$$
where $P^{(K)}$ is any probability distributions that can be reached by some sequence of actions (see Proof~\ref{proof:fqi} for more).
\end{proposition}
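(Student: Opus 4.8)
The plan is to convert the averaged ($P$-norm) convergence guarantee of Proposition~\ref{prop:ldm_fqi} into a \emph{pointwise} statement at the queried pair $(s_0,a_0)$, and then to propagate that pointwise control along the recovery trajectory that is optimal for the idealized fixed point $\hat{G}^\star = \min_{\{a_t\}}\max_{t\geq 0}\gamma^t E(s_t,a_t)$. The first ingredient is a Markov-type single-point bound: since $P$ is a probability mass function, for any $g$ and any pair with $P(s_0,a_0)>0$ we have $|g(s_0,a_0)|\,P(s_0,a_0)\leq \sum_{s,a}|g(s,a)|P(s,a)=\|g\|_P$, hence $|g(s_0,a_0)|\leq \|g\|_P/P(s_0,a_0)$. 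Applying this with $g=\hat{G}_K-\hat{G}^\star$ and inserting Proposition~\ref{prop:ldm_fqi} gives $|\hat{G}_K(s_0,a_0)-\hat{G}^\star(s_0,a_0)|\leq \tfrac{1}{P(s_0,a_0)}\bigl(\tfrac{R\epsilon_{\mathrm{ls}}}{1-\gamma}+\gamma^K\|E-\hat{G}^\star\|_\infty\bigr)$.

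Next I would establish the lower bound $P(s_0,a_0)\geq c\,e^{-\epsilon_p}$ that produces the $e^{\epsilon_p}/c$ prefactor. Because the backup operator takes an outer $\max$ with $E$, it satisfies $\mathcal{T}G\geq E$ pointwise, so the (conservatively fit) iterate inherits $\hat{G}_K\geq E$; combining this with Assumption~\ref{assump:density}, which off an event of probability $\delta_p$ gives $E(s_0,a_0)\geq -\log P(s_0,a_0)-\epsilon_p$, and with the hypothesis $\hat{G}_K(s_0,a_0)\leq -\log c$, I obtain $-\log P(s_0,a_0)\leq -\log c + \epsilon_p$, i.e. $1/P(s_0,a_0)\leq e^{\epsilon_p}/c$. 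Substituting this into the single-point bound and using $\hat{G}_K(s_0,a_0)\leq -\log c$ once more yields $\hat{G}^\star(s_0,a_0)\leq -\log c + \tfrac{e^{\epsilon_p}}{c}\bigl(\tfrac{R\epsilon_{\mathrm{ls}}}{1-\gamma}+\gamma^K\|E-\hat{G}^\star\|_\infty\bigr)$.

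Then I would propagate along the optimizer. Let $\{a_t^\star\}_{t\geq 1}$ attain the minimum defining $\hat{G}^\star(s_0,a_0)$, with induced states $s_t$. By the $\max$, $\gamma^t E(s_t,a_t^\star)\leq \hat{G}^\star(s_0,a_0)$ for every $t$, so $E(s_t,a_t^\star)\leq \gamma^{-t}\hat{G}^\star(s_0,a_0)$. Invoking Assumption~\ref{assump:density} a second time at $(s_t,a_t^\star)$ (whence the total failure probability $2\delta_p$ by a union bound) gives $-\log P(s_t,a_t^\star)\leq E(s_t,a_t^\star)+\epsilon_p\leq \gamma^{-t}\hat{G}^\star(s_0,a_0)+\epsilon_p$. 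Plugging in the previous step's bound on $\hat{G}^\star(s_0,a_0)$ and negating produces exactly the claimed inequality, using $\gamma^{-t}\cdot\gamma^K=\gamma^{K-t}$. For the $\gamma=1$ case I would rerun the identical argument but replace Proposition~\ref{prop:ldm_fqi} by its undiscounted variant from Remark~\ref{rmk:gamma_1}, whose $P$-norm bound is $R\,K\,\epsilon_{\mathrm{ls}}+\sup_{P^{(K)}}\|E-\hat{G}^\star\|_{P^{(K)}}$; since all $\gamma^{-t}$ factors collapse to $1$, the propagation is unchanged and gives the second displayed bound.

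The main obstacle is the second step, i.e. obtaining the lower bound on $P(s_0,a_0)$ without circularity: naively one wants $\hat{G}^\star(s_0,a_0)\leq -\log c$, but the hypothesis only controls $\hat{G}_K$, and $\hat{G}^\star$ is bounded only through the very error term we are trying to divide by $P(s_0,a_0)$. The clean resolution is to route the bound through $\hat{G}_K\geq E$ (the monotonicity built into the $\max$-backup and reinforced by the conservative/CQL fitting) rather than through $\hat{G}^\star$, which decouples the $P(s_0,a_0)$ estimate from the convergence error and makes the $e^{\epsilon_p}/c$ factor emerge cleanly; care is also needed to track that only two density-error events (at $(s_0,a_0)$ and at $(s_t,a_t^\star)$) are charged, yielding the stated $1-2\delta_p$.
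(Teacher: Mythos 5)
Your overall route coincides with the paper's own proof: convert the $P$-norm guarantee of Proposition~\ref{prop:ldm_fqi} into a pointwise bound at $(s_0,a_0)$ using that $P$ is a probability mass function, obtain the lower bound $P(s_0,a_0)\geq c\,e^{-\epsilon_p}$ by routing the constraint through the density model $E$, propagate along the action sequence that is optimal for $\hat{G}^\star$ (whose definition as a $\min$-$\max$ gives $\gamma^t E(s_t,a_t)\leq \hat{G}^\star(s_0,a_0)$ for every $t$), and charge two density-error events via a union bound to get $1-2\delta_p$; the $\gamma=1$ case is handled in both by swapping in the undiscounted convergence bound. However, there is one genuine gap, exactly at the step you yourself flag as the main obstacle. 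You assert that the fitted iterate satisfies $\hat{G}_K \geq E$ pointwise because the backup operator takes an outer max with $E$, ``reinforced by the conservative/CQL fitting.'' This does not follow. The regression \emph{target} $\mathcal{T}\hat{G}_{K-1}$ does dominate $E$ pointwise, but $\hat{G}_K$ is the least-squares fit of that target over the class $\mathcal{G}$ (Eq.~\ref{ldm_update}), and a least-squares fit can undershoot its target at individual points; nothing in the update enforces pointwise domination. Moreover, the CQL regularizer you invoke is part of the practical deep-RL algorithm (Alg.~\ref{ldm_alg}), not of the theoretical update analyzed in this proposition, and even there it controls underestimation only through a penalty, not as a pointwise constraint.

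The paper closes this hole with an explicit construction rather than an inherited property: it \emph{redefines} the constraint function as $\hat{G}(s,a) = \max\{\hat{G}_K(s,a), E(s,a)\}$, a post-hoc refinement, and observes that this never increases the errors of Proposition~\ref{prop:ldm_fqi} because $\hat{G}^\star \geq E$ pointwise (the max over $t\geq 0$ in the definition of $\hat{G}^\star$ includes $t=0$), so taking the max with $E$ can only move $\hat{G}_K$ toward $\hat{G}^\star$. With this refinement, the hypothesis $\hat{G}(s_0,a_0)\leq -\log c$ \emph{tautologically} yields $E(s_0,a_0)\leq -\log c$, and the remainder of your argument then goes through verbatim and matches the paper's proof step for step. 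So the repair is small: replace the unjustified claim ``$\hat{G}_K \geq E$'' by this max construction together with the error-non-increase observation. But as written, your derivation of $1/P(s_0,a_0)\leq e^{\epsilon_p}/c$ rests on a property that the fitted iterate simply does not have.
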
 

Proposition~\ref{generalized_degradation_bd} clearly encompasses the first clause of Proposition~\ref{prop:ldm_degradation} by taking $\delta_p = 0$ and $K\rightarrow \infty$. The second clause from the main text also follows directly in the case that the scenario in Remark~\ref{rmk:gamma_1} occurs. As a minor side note, we can bound $\exp{\epsilon_p} \leq 1 + \epsilon_p + \epsilon_p^2$ for $\epsilon_p < 1.79$ in order to avoid having both logarithms and exponentials.

\begin{proof}[Proof of Proposition~\ref{generalized_degradation_bd}] First note that we can always replace the LDM with \mbox{$\hat{G}(s,a) = \max\{\hat{G}_K(s,a), E(s,a)\}$}, which never increases the errors in \ref{prop:ldm_fqi}/\ref{rmk:gamma_1}/\ref{rmk:fqi_conv_2} and can be seen as a refinement of avoidable fitting error.\footnote{This is a natural slight modification that we have deferred simply for presentation purposes.} This refinement ensures we have $\hat{G}(s_0, a_0) \leq -\log{c}$ implies that $E(s_0, a_0) \leq -\log{c}$. Due to Assumption~\ref{assump:density}, with probability at least $1-\delta_p$, $E(s_0, a_0) \geq -\log P(s_0, a_0) - \epsilon_p$ and hence by rearranging $P(s_0, a_0) \geq c e^{-\epsilon_p}$. Since for now we assume $P$ is a probability mass function, Proposition~\ref{prop:ldm_fqi} implies that \begin{align*}
\quad\quad &|\hat{G}(s_0, a_0) - \hat{G}^\star(s_0,a_0)| \leq \frac{R\epsilon_{\mathrm{ls}}e^{\epsilon_p}}{(1-\gamma)c } + \frac{\gamma^K e^{\epsilon_p} \|E - \hat{G}^\star\|_{\infty}}{c} \\    
\Rightarrow \quad & \hat{G}^\star(s_0, a_0) \leq \hat{G}(s_0, a_0) + \frac{R\epsilon_{\mathrm{ls}}e^{\epsilon_p}}{(1-\gamma)c } + \frac{\gamma^K e^{\epsilon_p} \|E - \hat{G}^\star\|_{\infty}}{c}
\end{align*}
by the definition of $\hat{G}^\star$ this implies that for any $t$ there is a sequence of actions $\{a_\tau\}_{\tau=1}^t$ for which starting from $s_0, a_0$ we can guarantee
$$\gamma^t E(s_t, a_t) \leq \hat{G}^\star(s_0, a_0)\leq - \log{c} + \frac{R\epsilon_{\mathrm{ls}}e^{\epsilon_p}}{(1-\gamma)c } + \frac{\gamma^K e^{\epsilon_p} \|E - \hat{G}^\star\|_{\infty}}{c}$$
Since the probability that Assumption~\ref{assump:density} breaks twice is bounded by $2\delta_p$ due to union bound, we have that with probability at least $1-2\delta_p$,
\begin{align*}
\quad \quad &\gamma^t (-\log{P(s_t, a_t)} -\epsilon_p) \leq - \log{c} + \frac{R\epsilon_{\mathrm{ls}}e^{\epsilon_p}}{(1-\gamma)c} + \frac{\gamma^K e^{\epsilon_p} \|E - \hat{G}^\star\|_{\infty}}{c} \\    
\Rightarrow \quad & \gamma^t (\log{P(s_t, a_t)} + \epsilon_p) \geq \log{c} - \frac{R\epsilon_{\mathrm{ls}}e^{\epsilon_p}}{(1-\gamma)c} - \frac{\gamma^K e^{\epsilon_p} \|E - \hat{G}^\star\|_{\infty}}{c} \\
\Rightarrow \quad & \log{P(s_t, a_t)} \geq \gamma^{-t} \log{c} - \frac{\gamma^{-t} R\epsilon_{\mathrm{ls}}e^{\epsilon_p}}{(1-\gamma)c} - \frac{\gamma^{K-t} e^{\epsilon_p} \|E - \hat{G}^\star\|_{\infty}}{c} - \epsilon_p
\end{align*}
The second statement follows analogously from the guarantee from Proof~\ref{proof:fqi} which implies
$$\hat{G}^\star(s_0, a_0) \leq \hat{G}(s_0, a_0) + \frac{R\cdot K \cdot \epsilon_{\mathrm{ls}}\, e^{\epsilon_p}}{c} + \frac{e^{\epsilon_p} \cdot \sup_{P^{(K)}}\|E-\hat{G}^\star\|_{P^{(K)}}}{c}$$
where $\hat{G}^\star$ is now undiscounted so we directly obtain
$$\log P(s_t, a_t) \geq \log{c} - \frac{R\cdot K \cdot \epsilon_{\mathrm{ls}}\, e^{\epsilon_p}}{c} - \frac{e^{\epsilon_p} \cdot \sup_{P^{(K)}}\|E-\hat{G}^\star\|_{P^{(K)}}}{c} - \epsilon_p$$
concluding the full statement of the bound.
\end{proof}

Now we plug the above in multiple times and do some manipulation to get the form presented in Corollary \ref{prop:reward_ldm_degradation}:

\begin{proof}[Proof of Corollary \ref{prop:reward_ldm_degradation}]
Using our assumption on the error scaling we have that:
\begin{align*}
|\hat{R}_T(\mathbf{s}, \mathbf{a}) - R_T(\mathbf{s}, \mathbf{a})| &= \sum_{t=0}^{T-1} \gamma^t\left(r(f(s_t, a_t), a_{t+1}) - r(\hat{f}(s_t, a_t), a_{t+1})\right) \\
&\leq \sum_{t=0}^{T-1} \frac{\gamma^t\cdot \epsilon_r}{\sqrt{P(s_t, a_t)}} \\
&= \sum_{t=0}^{T-1} \frac{\gamma^t }{\sqrt{\epsilon_r^{-2}P(s_t, a_t)}}
\end{align*}
For $\epsilon_r^{-2} P(s_t, a_t) \geq 0.08104$, we have that $\frac{1}{\sqrt{\epsilon_r^{-2} P(s_t, a_t)}} \leq 1 - \log{(\epsilon_r^{-2} P(s_t, a_t))} = 1 + 2\log{\epsilon_r)} - \log{ P(s_t, a_t)} $ which enables us to plug in the guarantee of Proposition~\ref{prop:ldm_degradation} multiplied by $\gamma^t$ iteratively. For now let's assume we picked $c$ such that $P(s_t, a_t) \geq \epsilon_r^2 \cdot 0.08104$ (note that for good enough model this is \underline{very} small) for $t=\overline{0,T-1}$ (we will derive the setting required at the end). In that case, we have:
\begin{align*}
|\hat{R}_T(\mathbf{s}, \mathbf{a}) - R_T(\mathbf{s}, \mathbf{a})| &= \sum_{t=0}^{T-1} \gamma^t \cdot (1 + 2\log{\epsilon_r} - \log P(s_t, a_t)) \\
&\leq  \sum_{t=0}^{T-1} \left[\gamma^t (1 + 2\log{\epsilon_r)}) -  \gamma^t \log P(s_t, a_t) \right]\\
&\leq \sum_{t=0}^{T-1} \left[ \gamma^t (1 + 2\log{\epsilon_r}) + \gamma^t \epsilon_P + \cdot \left( -\log{c} + \frac{R\epsilon_{\mathrm{ls}}\exp{\epsilon_P}}{c(1-\gamma)}\right)\right] \\
&= (1 + \epsilon_P + 2\log{\epsilon_r}) \cdot \frac{1 - \gamma^T}{1-\gamma} + T \cdot \left(\log{\frac{1}{c}} + \frac{R \epsilon_{\mathrm{ls}} \exp{\epsilon_P}}{c(1-\gamma)}\right)
\end{align*}
as desired. Finally we need to see what setting of $c$ ensures we can apply the $\frac{1}{\sqrt{x}} \leq 1 -\log{x}$ inequality throughout the trajectory. Using again our main guarantee it suffices to ensure
\begin{align*}
\log{c} - \frac{R \epsilon_{\mathrm{ls}}\exp{\epsilon_P}}{c(1-\gamma)}  &\geq \gamma^{T-1}(\epsilon_P + 2\log{\epsilon_r} + \log{0.08104})   
\end{align*}
Hence since $\log{x} \geq 1 -\frac{1}{x}$, it suffices to ensure
\begin{align*}
&1 - \frac{1}{c} - \frac{R \epsilon_{\mathrm{ls}}\exp{\epsilon_P}}{c(1-\gamma)}  \geq \gamma^{T-1}(\epsilon_P + 2\log{\epsilon_r} + \log{0.08104})   \\
\iff & 1 - \gamma^{T-1} (\epsilon_P + 2\log{\epsilon_r} + \log{0.08104}) \geq \frac{(1-\gamma) + R \epsilon_{\mathrm{ls}}\exp{\epsilon_P}}{c(1-\gamma)} \\
\iff & c \geq \frac{(1-\gamma) + R \epsilon_{\mathrm{ls}}\exp{\epsilon_P}}{(1 - \gamma^{T-1} (\epsilon_P + 2\log{\epsilon_r} + \log{0.08104})) (1-\gamma)} \label{full_lb_reward_theory}
\end{align*}
giving the tighter $c$ lower bound. For presentation we can drop the $\log{0.08104}$ (yielding a looser bound):
$$c \geq \frac{(1-\gamma) + R \epsilon_{\mathrm{ls}}\exp{\epsilon_P}}{(1 - \gamma^{T-1} (\epsilon_P + 2\log{\epsilon_r})) (1-\gamma)}$$
%Multiplying the guarantee of Proposition \ref{prop:ldm_degradation} by $\gamma^t$, we have that:
%$$\gamma^t \log{P}(s_t, a_t) \geq \log{c} - \frac{R \epsilon_{\mathrm{ls}}\exp{\epsilon_p}}{c(1-\gamma)} - \gamma^t \epsilon_p$$
%Since $P(s_t, a_t) \geq 0$, we have that $ \log{P}(s_t, a_t) \leq P(s_t, a_t) - 1$ implying that
%$$\gamma^t P(s_t, a_t) \geq (1-\epsilon_p) \gamma^t + \log{c} - \frac{R \epsilon_{\mathrm{ls}}\exp{\epsilon_p}}{c(1-\gamma)}$$
%Since $c > 0$ we also have that $\log{c} \geq 1- \frac{1}{c}$ so 
%$$\gamma^t P(s_t, a_t) \geq 1 + (1-\epsilon_p)\gamma^t - \frac{1 + (1-\gamma)^{-1}R \epsilon_{\mathrm{ls}}\exp{\epsilon_p} }{c}$$
%Summing until $T$, we have that
%$$\sum_{t=0}^T\gamma^t P(s_t, a_t) \geq (1-\epsilon_p) \cdot \frac{1 - \gamma^{T+1}}{1-\gamma} - T\cdot \left(\frac{1 + (1-\gamma)^{-1}R \epsilon_{\mathrm{ls}}\exp{\epsilon_p} }{c} - 1\right)$$
%denoting $\epsilon_\mathrm{dec} \doteq \frac{1 + (1-\gamma)^{-1} R \epsilon_{\mathrm{ls}} \exp{\epsilon_p}}{c} - 1$ yields the first stated result.
\end{proof}

The proof of Corollary~\ref{cor:ex_7.1_w_ldm_error} is a similarly fast consequence of Proposition~\ref{ex:baseline_failure}:
\begin{proof}[Proof of Corollary~\ref{cor:ex_7.1_w_ldm_error}]It is easy to check that the construction $D_{H, \epsilon}$ from Proposition~\ref{ex:baseline_failure} satisfies $\sup_{P^{(H+2)}}\|E-\hat{G}^\star\|_{P^{(H+2)}} \leq 2 \epsilon_p$ so by Proposition~\ref{generalized_degradation_bd}, with probability at least $1-2\delta_p$, 
$$\log P(s_t, a_t) \geq \log{\hat{c}} - \frac{(R\cdot (H+2) \cdot \epsilon_{\mathrm{ls}} + 2 \epsilon_p)\exp{\epsilon_p}}{\hat{c}}$$
for all $t$. Hence as long as $\log{\frac{\hat{c}}{\epsilon}} > \frac{(R\cdot (H+2) \cdot \epsilon_{\mathrm{ls}} + 2 \epsilon_p)\exp{\epsilon_p}}{\hat{c}}$ we are guaranteed to not exhibit the naive myopic behavior and instead ensure $P(s_t, a_t) \geq \frac{1}{2(H+1)}$ which is the best we could hope for. Since $\log x \geq 1 - x^{-1}$, it is enough to have $\hat{c}$ for which
$$\hat{c} \geq (R(H+2) \epsilon_{\mathrm{ls}} + 2\epsilon_p) \exp{\epsilon_p} + \epsilon$$
as desired.\end{proof}

We can also make the following remark about generalization to probability density functions.
\begin{remark}[Extension to Probability Density Functions]\label{rmk:pdf_ext}For $P$ being a p.d.f. (probability density function), we can extend the above simple approach via Markov's inequality which guarantees that $$P(|\hat{G}^\star(s,a) - \hat{G}(s,a)| \geq a) \leq \frac{R\epsilon_{\mathrm{ls}}}{(1-\gamma)a} +  \frac{\gamma^K \|E - \hat{G}^\star\|_{\infty}}{a}$$
This implies that the $\mathbb{R}^{d_s\times d_a}$ hypervolume of the set of unruly state-action pairs $\mathcal{U}_{a, c} = \{(s,a) \in \mathcal{S}\times\mathcal{A} \text{ s.t. } |\hat{G}^\star(s,a) - \hat{G}(s,a)| \geq a \text{ and } P(s, a) \geq c\}$ cannot be larger than $\frac{R\epsilon_{\mathrm{ls}}}{(1-\gamma)a} +  \frac{\gamma^K \|E - \hat{G}^\star\|_{\infty}}{c a}$, i.e. $$\text{Vol}(\mathcal{U}_{a, c}) \leq \frac{R\epsilon_{\mathrm{ls}}}{(1-\gamma)a} +  \frac{\gamma^K \|E - \hat{G}^\star\|_{\infty}}{c a}$$ and hence bound the size of the region where the previous approach does not work (of course additionally one needs to choose $a$ to get the best guarantee).\end{remark}

Finally, let us attempt to derive an analogue to Prop.~\ref{prop:ldm_degradation} for the density threshold within the horizon $H$ to better understand how the LDM compares to a standard density model. As discussed in the main text, we need to impose some regularity assumptions first.

\begin{assumption}\label{assum:lipschitzness}
Assume that $E$ is $L_p$-Lipschitz, that the true dynamics $f$ are $L_f$-Lipschitz in the state and that he model has error $|f(s,a) - \hat{f}(s,a)| \leq \frac{\epsilon_f}{P(s,a)}$.
\end{assumption}

\begin{proposition}\label{density_degradation}If Assumption~\ref{assum:lipschitzness} holds, from any state $s_0$, taking $a_0$ according to (\ref{density_constraint}) guarantees that $\forall t$, there is a sequence $a_{1:t}$ for which
$$\log P(s_t, a_t) \geq \begin{cases}
\log{c} - \frac{L_P \epsilon_f (1+\ldots+ L_f^{t-1}) \exp{\epsilon_p}}{c} - \epsilon_p \text{ if } t\leq H,\\
-\infty \text{    otherwise.}\end{cases}$$
\end{proposition}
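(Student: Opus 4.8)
The plan is to split on whether $t \leq H$ or $t > H$. The case $t > H$ is immediate: the density constraint (\ref{density_constraint}) is imposed only over $1 \leq t \leq H$, so beyond the planning horizon there is no control on the visited density whatsoever, $P(s_t, a_t)$ may be $0$, and the claimed bound degenerates to the vacuous $\log P(s_t, a_t) \geq -\infty$. All the substantive work is in the regime $t \leq H$, and the existence of the sequence $a_{1:t}$ there is guaranteed by feasibility of the MPC program.

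For $t \leq H$, let $\hat{s}_0 = s_0$ and $\hat{s}_{k+1} = \hat{f}(\hat{s}_k, a_k)$ denote the trajectory predicted by the learned model along which the constraint is actually enforced, and let $s_{k+1} = f(s_k, a_k)$ be the true trajectory. First I would convert the constraint on the predicted states into a pointwise density lower bound: from $E(\hat{s}_k, a_k) \leq -\log c$ together with the density error bound $|E + \log P| \leq \epsilon_p$, one obtains $P(\hat{s}_k, a_k) \geq c\, e^{-\epsilon_p}$ for every $k \leq H$. This lower bound is the crucial ingredient, because the model-error assumption $\|f(s,a) - \hat{f}(s,a)\| \leq \epsilon_f / P(s,a)$ from Assumption~\ref{assum:lipschitzness} degrades precisely in low-density regions; staying above $c\,e^{-\epsilon_p}$ along the predicted path caps the per-step model error at $\epsilon_f e^{\epsilon_p}/c$.

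The core step is then a Gr\"onwall-type recursion on the state discrepancy $\Delta_k \doteq \|s_k - \hat{s}_k\|$. Writing $s_k - \hat{s}_k = \big(f(s_{k-1},a_{k-1}) - f(\hat{s}_{k-1},a_{k-1})\big) + \big(f(\hat{s}_{k-1},a_{k-1}) - \hat{f}(\hat{s}_{k-1},a_{k-1})\big)$ and applying the triangle inequality, the first term is bounded by $L_f \Delta_{k-1}$ via $L_f$-Lipschitzness of $f$ in the state, and the second by the capped model error $\epsilon_f e^{\epsilon_p}/c$ from the previous paragraph. With $\Delta_0 = 0$, unrolling $\Delta_k \leq L_f \Delta_{k-1} + \epsilon_f e^{\epsilon_p}/c$ gives the geometric sum $\Delta_t \leq \frac{\epsilon_f e^{\epsilon_p}}{c}(1 + L_f + \cdots + L_f^{t-1})$, which is exactly the factor appearing in the statement. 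Finally I would transfer the constraint from the predicted to the true state using $L_p$-Lipschitzness of $E$, namely $E(s_t, a_t) \leq E(\hat{s}_t, a_t) + L_p \Delta_t \leq -\log c + L_p \Delta_t$, and apply $|E + \log P| \leq \epsilon_p$ once more to get $\log P(s_t, a_t) \geq \log c - L_p \Delta_t - \epsilon_p$; substituting the bound on $\Delta_t$ yields the claimed inequality (with $L_P = L_p$).

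The main obstacle is the recursion in the third paragraph — specifically, recognizing that the density lower bound extracted from the constraint in Step~1 is exactly what prevents the $\epsilon_f / P$ model error from blowing up, so that the discrepancy accumulates only geometrically in $L_f$ rather than uncontrollably. This is also where the contrast with the LDM becomes visible: the bound holds only within the horizon $H$ because the density constraint, unlike the LDM, offers no control past the point at which MPC stops planning. The high-probability variant under Assumption~\ref{assump:density} follows the same argument with a union bound over the $O(t)$ density-error events, exactly as in the proof of Proposition~\ref{generalized_degradation_bd}.
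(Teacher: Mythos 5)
Your proposal is correct and follows essentially the same route as the paper's proof: extract the pointwise bound $P(\hat{s}_k, a_k) \geq c\,e^{-\epsilon_p}$ on the planned states to cap the per-step model error at $\epsilon_f e^{\epsilon_p}/c$, unroll the Lipschitz recursion on $\|s_k - \hat{s}_k\|$ into the geometric sum $(1 + L_f + \cdots + L_f^{t-1})$, and transfer back to the true trajectory via $L_p$-Lipschitzness of $E$ plus one more application of the density-error bound. The only difference is cosmetic: you make the trivial $t > H$ case and the $\Delta_0 = 0$ base case explicit, which the paper leaves implicit.
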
  

\begin{proof}
Let $a'_{1:t}$ be the solution to the following problem to the $t\leq H$ horizon MPC with $E$ and $\hat{f}$ and $s_t$ be the sequence of states reached by playing $a'_{1:t}$ on the real model $f$. Under Assumption~\ref{assum:lipschitzness}, we have:

\begin{align*}
-\log{P(s_t, a_t')} &\leq E_\theta(s_t, a_t') + \epsilon_p \\
&\leq E_\theta(s_t', a_t') + L_p \|s_t' - s_t\| + \epsilon_p \\
&\leq -\log c + \epsilon_p + \dfrac{L_p\cdot \epsilon_f \cdot (1+ \ldots + L_f^{t-1})}{c \cdot e^{-\epsilon_p}}\\
&\leq -\log c + \epsilon_p + \dfrac{L_p\cdot \epsilon_f \cdot (1+ \ldots + L_f^{t-1}) \cdot \exp{\epsilon_p}}{c}
\end{align*}
where we used line follows by:
where the last part follows because
\begin{align*}
||s_t - s_t'|| &= ||f(s_{t-1}, a_{t-1}') - \hat{f}(s_{t-1}', a_{t-1}')||   \\
&\leq ||f(s_{t-1}, a_{t-1}') - f(s_{t-1}', a_{t-1}')|| + ||f(s_{t-1}', a_{t-1}') - \hat{f}(s_{t-1}', a_{t-1}')|| \\
&\leq L_f ||s_{t-1} - s_{t-1}'|| + \frac{\epsilon_f}{c e^{-\epsilon_p}} \\
&\leq L_f^2 ||s_{t-2} - s_{t-2}'|| + \frac{\epsilon_f}{c e^{-\epsilon_p}} (1 + L_f) \\
&\leq ... \\
&\leq 0 + \frac{\epsilon_f (1+\ldots + L_f^{t-1})}{c e^{-\epsilon_p}}
\end{align*}
\end{proof}

Proposition~\ref{density_degradation} shows a similar behavior to the LDM guarantee in the short-horizon. As suggested by \citet{SzepesvariFPI}, the constants in analyses on fitted Q iteration are interwined with Lipschitzness of the system and of the probability function. In our case this relation springs forth more naturally since we are considering a dynamical system rather than an MDP.

\section{Appendix to Section~\ref{sec:experiments}}
\label{appendix:sec_9}
\subsection{Data and Tasks}
\label{appendix:data}
Our experiments consist of a goal-directed hopper task, a goal-directed lunar lander task, and SimGlucose \cite{simglucose} , a blood glucose setpoint task. %, and \textcolor{red}{[ventillator task]}.
Here we present details about the task setup and training data for each domain. 

\subsubsection{Hopper}
The goal-directed hopper task involves controlling an agent with 12 state dimensions and 3 action dimensions. The agent is randomly initialized in a x position in $[-3, 3]$, and the goal of the task is to control the agent to go to a target x location in $\{-2, -1, 0, 1, 2\}$. In our experiments, we tested each method 10 times (each from a randomly initialized starting position) for each goal location. The specific reward function we used for planning is $r(s, a) = -|s_x - s_g|$, where $s_x$ is the x position of the agent, and $s_g$ is the goal position. 

The dataset we used to train our agents consists of stochastic expert behavior, where the experts were trained to go towards a random goal location within $[-4, 4]$ of the agent's initial position. The expert behaviors included in the dataset include hopping forwards, backwards, and stopping. In order to increase the coverage of the dataset, we added random noise to the expert's actions at each timestep. More specifically, for each trajectory, we randomly sample $Y \sim \text{Unif}(-1.5, 1.5)$, and at each timestep we add $Z \sim \text{Normal}(0, |Y|)$ to the expert's chosen actions. The size of the dataset is 8499511 transitions. 

\subsubsection{Lunar Lander}
\label{appendix:data_ll}
The goal-directed lunar lander task involves controlling an agent with 8 state dimensions and 2 action dimensions. The agent is always initialized in the middle-top of the screen, and the goal of the task is to control the aircraft to land on a target landing pad without crashing. The location of the landing pad is centered at either 4 or 6 on the horizontal axis. In our experiments, we tested each method 10 times for each landing pad location. The specific reward we used for planning is $r(s, a) = -\sqrt{(s_x - s_g)^2+s_y^2}$, where $s_x$ is the horizontal position of the agent, and $s_g$ is the landing pad position, and $s_y$ is the height of the agent.  

The dataset we used to train our agent consists of stochastic expert behavior, where the expert was trained to land in a random location between 1 and 10. Thus, the expert behavior in the dataset include going towards and landing in different landing pad locations. In order to increase the coverage of the dataset, we added random noise sampled from $Z \sim \text{Normal}(0, 0.9)$ to the expert's actions at each timestep. The size of the dataset is 1650022 transitions. For a visualization of the task, see Fig \ref{lunar_lander_visualization}.

\begin{figure}
\centering
\includegraphics[width=.4\columnwidth, trim={0 0 0 0},clip]{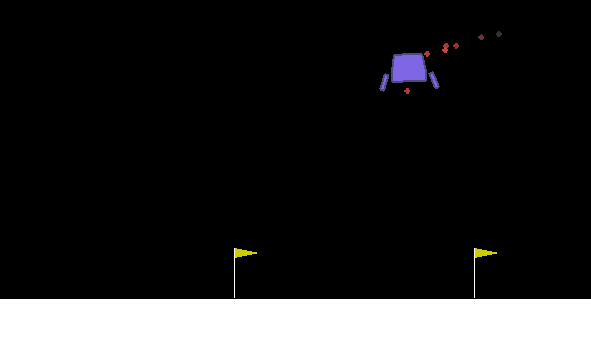}
\caption{Visualization of the lunar lander task.}
\label{lunar_lander_visualization}
\end{figure}

\subsubsection{SimGlucose}

The simglucose task dataset consists of a patient having their blood glucose level controller by a reference PID controller (P=0.001, I=0.00001, D=0.001) set with target $140$. The state dimension is 2 (representing current blood glucose and current caloric input) and the action dimension is 1 (representing insulin administered). In our experiments, we tested each method 100 times. In order to be consistent with the PID approach as well as with our other experiment, we use the reward $r(\mathrm{bg}) = -|\mathrm{bg} - 140|$ during planning. This reward serves to enforce staying close to the midpoint of the safety region $(100, 180)$, matching the reference PID's target setpoint, as well as allowing for a formulation that ties setpoint maintanence and goal reaching tasks together. Note that the controller must be able to neither under nor over react to being suddenly in a more dangerous range. This is a particularly well-suited task for the LDM approach since the goal is to leverage past data and learn to filter what actions to take to ensure not getting into a hypo/hyperglycemia alert, while also trying to get to the target blood glucose level as fast as possible. The size of the dataset is $1\mathrm{mil}$ transitions. 

%\subsubsection{\textcolor{red}{ventillator}}\textcolor{red}{[FILL]}

% The state dimension, action dimension, and number of transitions in both datasets are presented below. 

\newcommand{\specialcell}[2][c]{%
  \begin{tabular}[#1]{@{}c@{}}#2\end{tabular}}

% \begin{center}
% \begin{tabular}{|c|c|c|c|c|}
%  \hline
%  Task & \specialcell{State \\ dimension} & \specialcell{Action \\ dimension} & \specialcell{Num \\ transitions} \\ 
%   \hline
%  Hopper & 12 & 3 & 8954033 \\ 
%  \hline
%  Glucose  & 1 & 1 & 135500000 \\ 
%  \hline
% \end{tabular}
% \end{center}

\subsection{Flow Model Training}
For all density models, we made use of a neural spline flow \cite{durkan2019neural}. The training hyperparameters are presented below.

\begin{center}
\begin{tabular}{|c|c|c|c|c|c|c|c|c|c|}
 \hline
 task &  \specialcell{flow \\ layers} &  \specialcell{knots \\ K}& \specialcell{bounds \\ B} & coupling transform & MLP & \specialcell{learning\\ rate} & \specialcell{weight\\ decay} & optimizer & \specialcell{train\\ steps} \\ 
  \hline
 Hopper & 4 & 64 & 20&rational-quadratic & 256-256-256 &1e-4 &1e-5 & Adam & 150k \\ 
 \hline
 \specialcell{Lunar\\ Lander} & 4 & 64 & 20&rational-quadratic & 256-256-256 &1e-4 &1e-5 & Adam & 150k \\ 
 \hline
  Glucose & 2 & 64 & 20&rational-quadratic & 256-256-256 &1e-4 &1e-5 & Adam & 500k\\
    \hline
%\textcolor{red}{ Ventillator} & 2 & 64 & 20&rational-quadratic & 256-256-256 &1e-4 &1e-5 & Adam \\ \hline
\end{tabular}
\end{center}

We used the output of the flow models to label our datasets for LDM training. In order to make training more robust, we manually labeled the density of early termination points to have low density for the Hopper and Lunar Lander experiments. More specifically, we labeled them to be $\min(\{E_\theta(s_i, a_i)\}_{i=1}^N)-3*\text{std}(\{E_\theta(s_i, a_i)\}_{i=1}^N)$, where $\{E_\theta(s_i, a_i)\}_{i=1}^N$ denotes the dataset's labels from the flow model. 

\subsection{LDM Training}
\label{appendix:ldm_training}
In this section, we present the hyperparameters we used in our experiments to train an LDM, presented in Alg. \ref{ldm_alg}.

First, present the training parameters for the LDM function $G_\phi(s, a)$ and its associated policy $\pi_\psi(s)$. $G_\phi(s, a)$ outputs a scalar value, and $\pi_\psi(s)$ outputs a distribution over actions, parameterized by the mean and variance of a gaussian. 

\begin{center}
\begin{tabular}{|c|c|c|c|c|}
 \hline
 \multicolumn{5}{|c|}{$G_\phi(s, a)$ training parameters}\\
 \hline
 task & batch size & architecture &learning rate & optimizer \\ 
  \hline
 Hopper & 256 & 256-256 & 3e-4&Adam\\ 
   \hline
 Lunar Lander & 256 & 256-256 & 3e-4&Adam\\ 
 \hline
 Glucose & 256 & 256-256 & 3e-4&Adam\\ 
 \hline
 %\textcolor{red}{Ventilator} & 256 & 256-256 & 3e-4&Adam\\ \hline
\end{tabular}
\end{center}

\begin{center}
\begin{tabular}{|c|c|c|c|c|}
 \hline
 \multicolumn{5}{|c|}{$\pi_\psi(s)$ training parameters}\\
 \hline
 task & batch size & architecture &learning rate & optimizer \\ 
  \hline
 Hopper & 256 & 256-256 & 1e-4&Adam\\ 
 \hline
  Lunar Lander & 256 & 256-256 & 1e-4&Adam\\ 
 \hline
 Glucose & 256 & 256-256 & 1e-4 &Adam\\ 
 \hline
 %\textcolor{red}{Ventilator}  & 256 & 256-256 & 1e-4&Adam\\  \hline
\end{tabular}
\end{center}

Next, we discuss the details of the conservative regularization term CQL($\mathcal{H}$) \cite{kumar2020conservative} we use to enable more stable offline training. Our implementation of the regularizer involves importance sampling from $\pi_\psi(\st)$, $\pi_\psi(\stp)$, and a uniform distribution over the action space, more specifically:

\begin{align}
    &\text{CQL}(\mathcal{H}) \\
     = &-\beta \log \sum_{a \in \mathcal{A}} \exp G(s, a) \\
    \approx &- \beta \left( \frac{1}{3N_{CQL}} \sum_{a_i \sim \text{Unif}(a)}^{N_{CQL}}\left[\frac{\exp G(s, a_i)}{\text{Unif}(a)}\right]
    +\frac{1}{3N_{CQL}} \sum_{a_i \sim \pi_\psi(.|\st)}^{N_{CQL}}\left[\frac{\exp G(s, a_i)}{\pi_\psi(.|\st)}\right] \right. \\
    & \left. +\frac{1}{3N_{CQL}} \sum_{a_i \sim \pi_\psi(.|\stp)}^{N_{CQL}}\left[\frac{\exp G(s, a_i)}{\pi_\psi(.|\stp)}\right]\right)
\end{align}
Note that this is negative of the regularizer presented in the original work on CQL, because the authors there were concerned with regulating Q functions in the reinforcement learning setting, where higher values correspond to higher rewards. In our setting, however, lower values in the LDM correspond to higher densities (lower negative log probabilities), so this regularizer is inverted in our setting. The specific hyperparameters we use for this regularizer are:

\begin{center}
\begin{tabular}{|c|c|c|}
 \hline
 task & $\beta$ & $N_{CQL}$  \\ 
  \hline
 Hopper & 10 & 10 \\ 
 \hline
  Lunar Lander & 1 & 10 \\ 
 \hline
 Glucose & 1 & 10\\ 
 \hline
 % \textcolor{red}{Ventilator} & 5 & 10 \\ \hline
\end{tabular}
\end{center}

Finally, we present the algorithm hyperparameters we used:

\begin{center}
\begin{tabular}{|c|c|c|c|c|c|c|}
 \hline
 task & $\gamma$ & $\tau$ & target entropy & $\alpha$ learning rate & $\alpha$ optimizer & train steps  \\ 
  \hline
 Hopper & 1 & 5e-3 & 20 & 1e-4 & Adam & 3000000 \\ 
   \hline
 Lunar Lander & 1 & 5e-3 & 2 & 1e-4 & Adam & 500000 \\ 
 \hline
 Glucose & 0.9 & 5e-3 & 0 & 1e-4 & Adam & 200000\\ 
 \hline
 % \textcolor{red}{Ventilator} & 0.9 & 5e-3 & 0 & 1e-4 & Adam & 20000 \\ \hline

\end{tabular}
\end{center}

\subsection{Dynamics Model Training}
In this section, we present the hyperparameters we used to train the dynamics model:

\begin{center}
\begin{tabular}{|c|c|c|c|c|c|c|}
 \hline
 task & architecture & batch size & learning rate & weight decay & optimizer & train steps  \\ 
  \hline
 Hopper & 256-256 & 256 & 3e-4 & 1e-5 & Adam & 50000 \\ 
   \hline
 Lunar Lander & 256-256 & 256 & 3e-4 & 1e-5 & Adam & 50000 \\ 
 \hline
 Glucose & 256-256 &256 & 3e-4 & 1e-5 & Adam & 200000\\ 
 \hline
 % \textcolor{red}{Ventilator} & 256-256 &256 & 3e-4 & 1e-5 & Adam & 100000\\ \hline
\end{tabular}
\end{center}

Additionally, in order to effectively learn a dynamics model for hopper, the highest dimensional system of the 4 domains, we preprocessed the training data (subtracting mean and dividing by standard deviation), and trained the neural network to output the change in state rather than the next state itself (predicting $s_{t+1} -s_t$ where $s_t$ is the current state). For all other tasks, we directly trained on the (unnormalized) training data and predicted the next state, because this procedure was sufficient for learning a good dynamics model within the data distribution.  

\subsection{Planning}
\label{app:planning}

In the model-based reinforcement learning algorithm presented in Section \ref{sec:ldm_control}, we use the learned dynamics model to perform planning. In this section, we present details about the planning procedure used in our experiments. 

The action trajectory that maximizes the optimization problem in \eqref{mbrl_opt_prob} can be approximated with stochastic zeroth-order optimization. More specifically, we generate random action trajectories uniformly sampled from a sampling prior, and select the trajectory which maximizes the objective while satisfying a constraint that is dependent on the method. More details about the sampling prior, optimization objective and constraint are described below. If no randomly generated action trajectory satisfies the constraint, we execute the action from a learned policy which was trained to optimize the constraint function.

\subsubsection{MPC parameters}
In this section, we present the parameters we used to perform planning via MPC. 
\begin{center}
\begin{tabular}{|c|c|c|c|c|}
 \hline
 task & sampling prior & objective & num random actions & horizon   \\ 
  \hline
 Hopper & \specialcell{normal distribution fitted around \\action marginal of the data} & $-|s_x - s_g|$ & 1024 & 1 \\ 
  \hline
 Lunar Lander & uniform distribution over action space & $ -\sqrt{(s_x - s_g)^2+s_y^2}$ & 8000 & 1 \\ 
 \hline
  Glucose & uniform distribution over action space & $-|\mathrm{bg} - \mathrm{bg}^\star|$ & 1024 & 1\\ 
 \hline
  % \textcolor{red}{Ventilator} & uniform distribution over action space & $-|\mathrm{bg} - \mathrm{bg}^\star|$ & 1024 & 1\\ \hline
\end{tabular}
\end{center}

\subsubsection{Constraints used in MPC}
In addition to our method, with performs MPC with an LDM constraint, we evaluate our experiment on 2 baseline methods, one with a density model constraint, and one with a ensemble constraint. For the ensemble constraint, we train $n_{\text{ensemble}}$ independent dynamics models $\{f_\xi(s, a)_i\}_{i=0}^{n_{\text{ensemble}}}$ by minimizing the loss \eqref{eq:model_erm_loss}, and use the negative variance of the outputs of the ensemble models as the constraint function. In our experiments, we use $n_{\text{ensemble}}=5$. We summarize the exact equations we use to constrain MPC for each type of constraint in the table below.
\begin{center}
\begin{tabular}{|c|c|}
 \hline
 constraint type & constraint function \\ 
  \hline
 LDM & $Q_\phi(s, a)$  \\ 
 \hline
  density model &  $E_\theta(s, a)$\\ 
 \hline
 ensemble  &  $-\text{variance}(\{f_\xi(s, a)_i\}_{i=0}^{n_{\text{ensemble}}})$\\ 
 \hline
\end{tabular}
\end{center}

\subsubsection{Backup policy}
When performing constrained optimization with zeroth order optimization, it is possible for none of the sampled actions to pass the constraint. In this case, we execute the action outputted by a learned policy which is optimized to maximize the constraint used for each method. 

For the LDM constraint, the backup policy is $\pi_\psi(\st)$, the policy associated with the LDM, as described in Sec. \ref{sec:learning_ldms}.

For the backup policies for the density model and ensembles constraint, we train the backup policy in a similar fashion, by optimizing to output the action which will maximize its value under the constraint. More specifically, the loss function is $L_{\pi}(\psi) = \mathbb{E}_{\st \sim p_D}[W_{\phi}(\st, \pi_{\psi}(\st))] - \alpha \mathcal{H}(\pi_\psi(.|\st))$, where $W(s, a)$ represents the constraint function being used. The Algorithm for training and the hyperparameters we used for training are presented below. The main difference between this procedure and Alg. \ref{ldm_alg} is that the constraint functions here are held \emph{fixed}, whereas in Alg. \ref{ldm_alg}, the LDM function and policy were updated concurrently. 

\begin{algorithm}[H]
\SetAlgoLined
 initialize parameter vectors  $\psi$, and $\alpha$\\
 \For{\text{num backup policy training steps}}{
  $\psi \gets \psi - \lambda_\pi \nabla_{\psi} L_\pi(\psi) $\\
  $\alpha \gets \alpha - \lambda_\alpha \nabla_{\alpha} L(\alpha)$

 }
 \caption{Backup Policy Training}
\end{algorithm}

\begin{center}
\begin{tabular}{|c|c|c|c|c|c|c|c|}
 \hline
 task &batch size & architecture & $\psi$ learning rate & target entropy & $\alpha$ learning rate & optimizer & train steps  \\ 
  \hline
 Hopper & 256 & 256-256 & 3e-4 & 3 & 3e-4 & Adam & 500000 \\ 
 \hline
  Lunar Lander & 256 & 256-256 & 3e-4 & 2 & 3e-4 & Adam & 500000 \\ 
 \hline
 Glucose & 256 & 256-256 & 3e-4 & 1 & 3e-4 & Adam & 250000 \\ 
 \hline
 % \textcolor{red}{Ventilator} & 256 & 256-256 & 3e-4 & 1 & 3e-4 & Adam & 250000 \\ \hline
\end{tabular}
\end{center}

\subsection{Ablation Experiments on the Effect of Discount Factor $\gamma$}
\label{app:ablation_discount_factor}
We performed ablation experiments to investigate the effect of using a discount factor on LDM training for the lunar lander task, described in Section \ref{appendix:data_ll}. We tried discount factors of 1, 0.99, 0.95, and 0.9, and compare against using a density model constraint. In Fig. \ref{discount_factor_plots}, we show the average reward (left), and failure rate (right) of each method.

\begin{figure}
\centering
\includegraphics[width=.4\columnwidth, trim={0 0 0 40},clip]{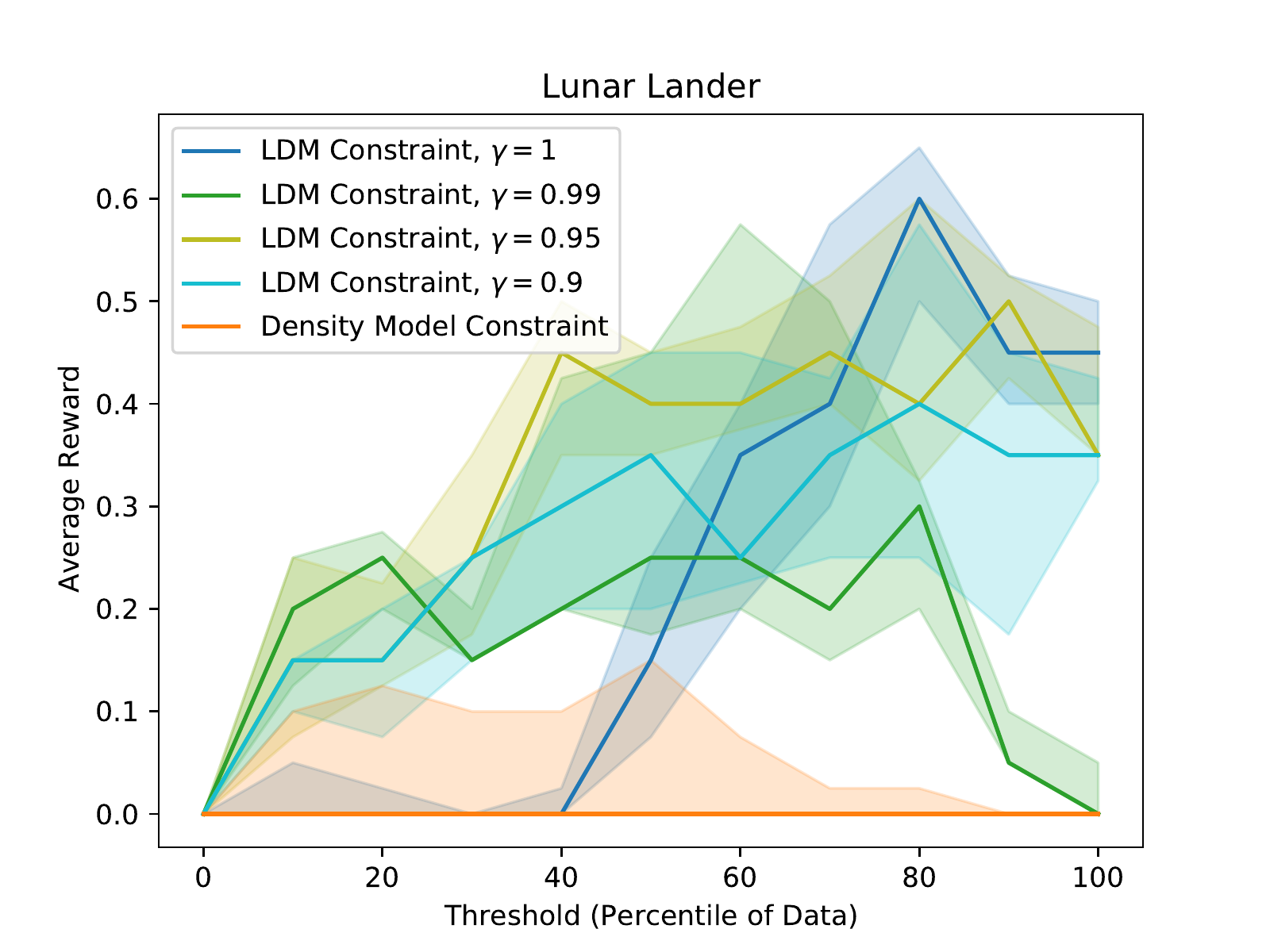}
\includegraphics[width=.4\columnwidth, trim={0 0 0 0},clip]{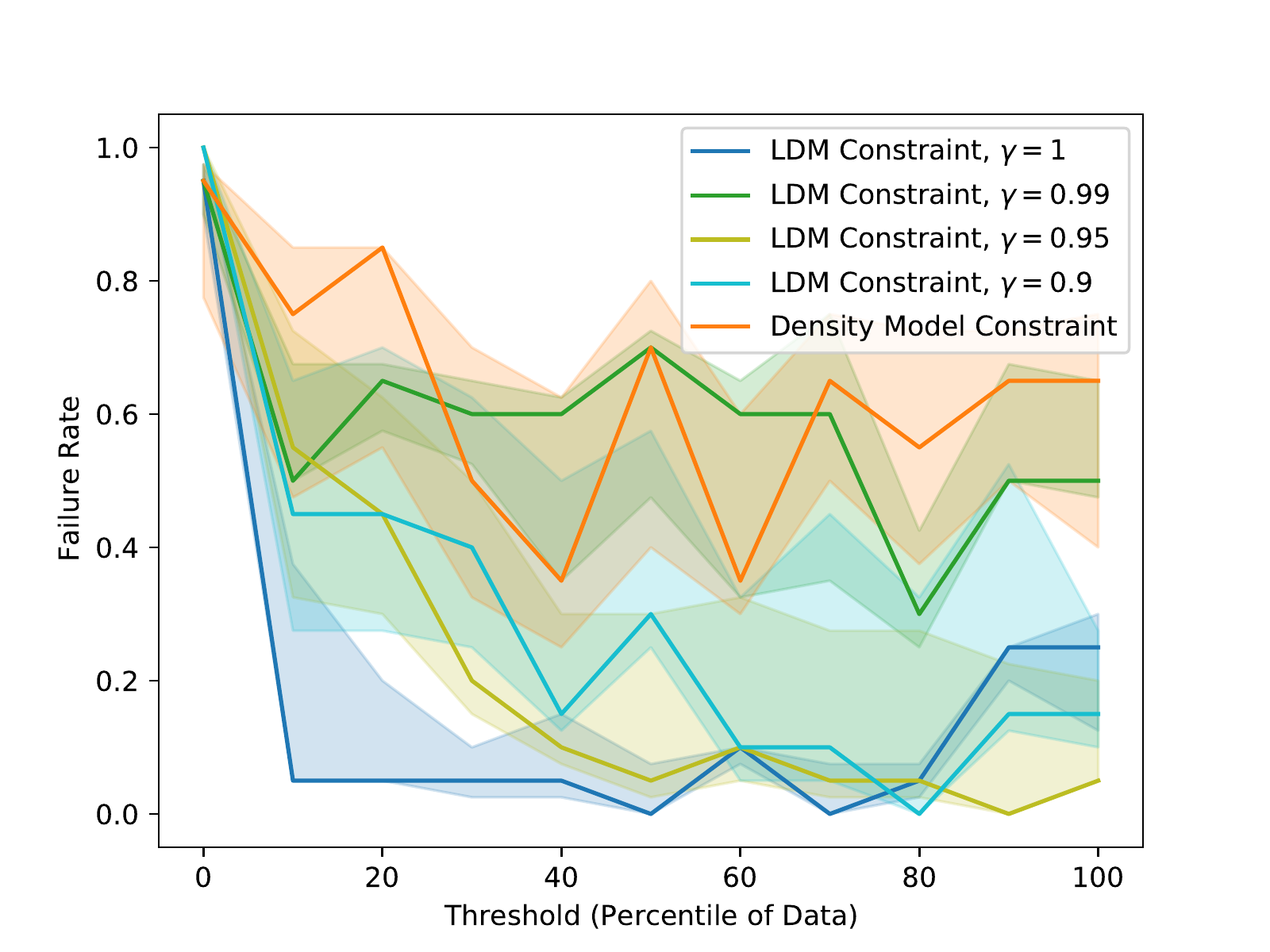}
\caption{Effect of discount factor on LDM performance for the lunar lander task. On the left, we show the average reward for each method, and on the right, we show the failure rate for each method. Each method was evaluated with 3 random seeds.}
\label{discount_factor_plots}
\vspace{-1em}
\end{figure}

\subsection{Full Experimental Results Across Threshold Sweep}
\label{appendix:full_sweep}
Due to space constraints, we were unable to include the full sweep of threshold values vs. average reward and failure rate for all the domains in the main paper. In Fig. \ref{full_plots}, we include the full sweeps for the lunar lander, SimGlucose, and ventilator tasks for a more comprehensive presentation of the performance of the different methods. 

\begin{figure}
\centering
\includegraphics[width=.4\columnwidth, trim={0 0 0 0},clip]{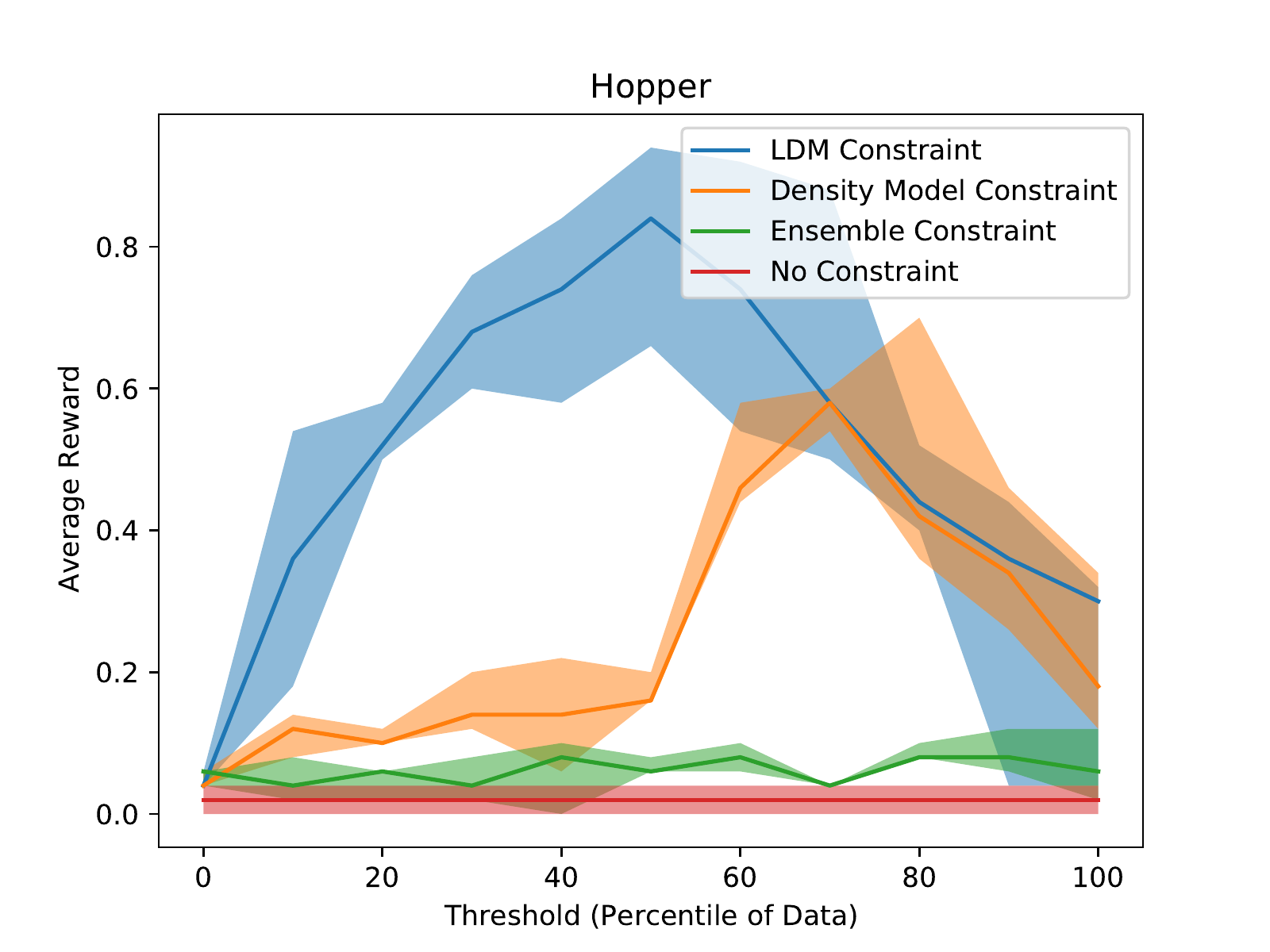}
\includegraphics[width=.4\columnwidth, trim={0 0 0 0},clip]{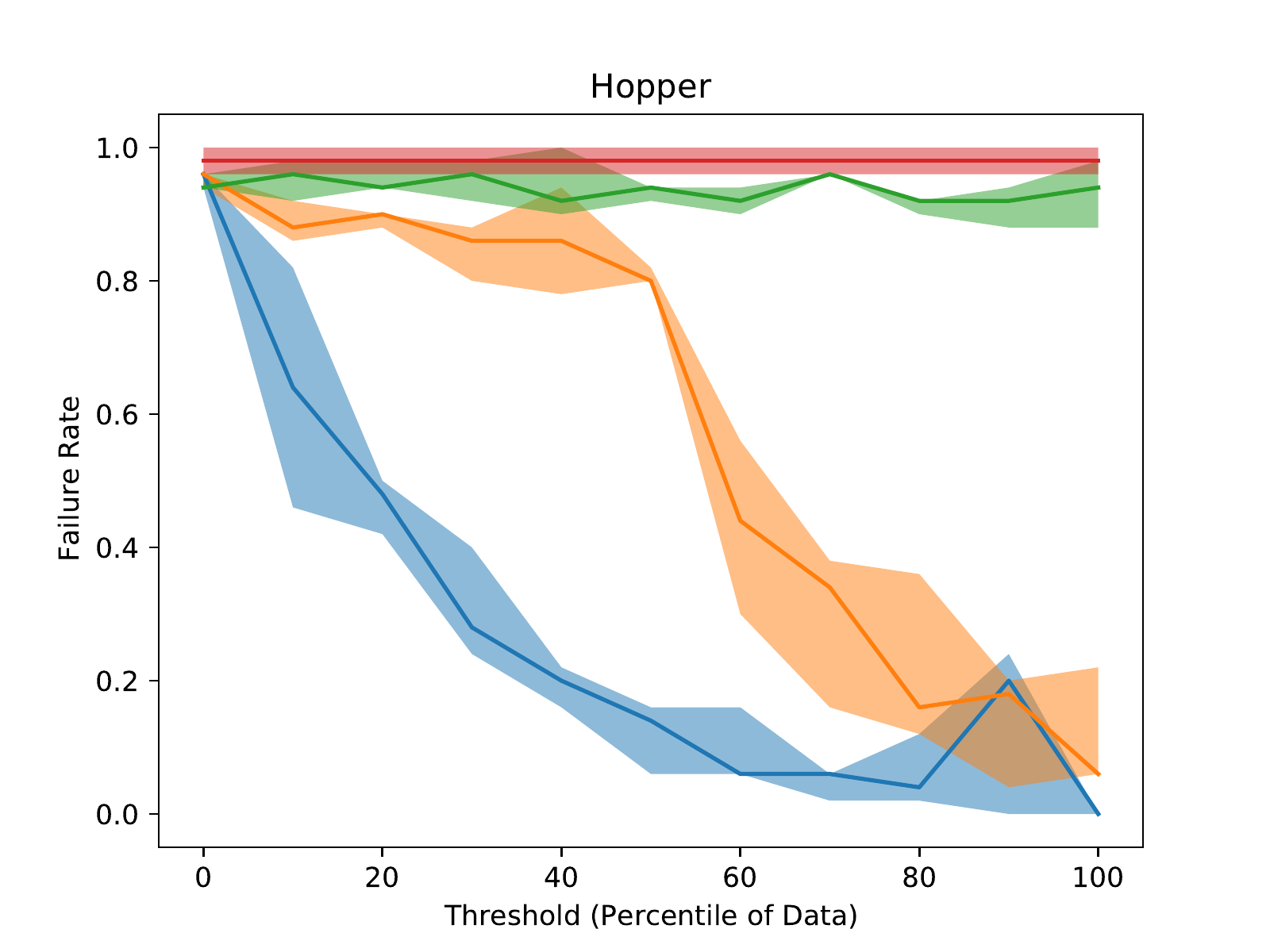} \\
\includegraphics[width=.4\columnwidth, trim={0 0 0 0},clip]{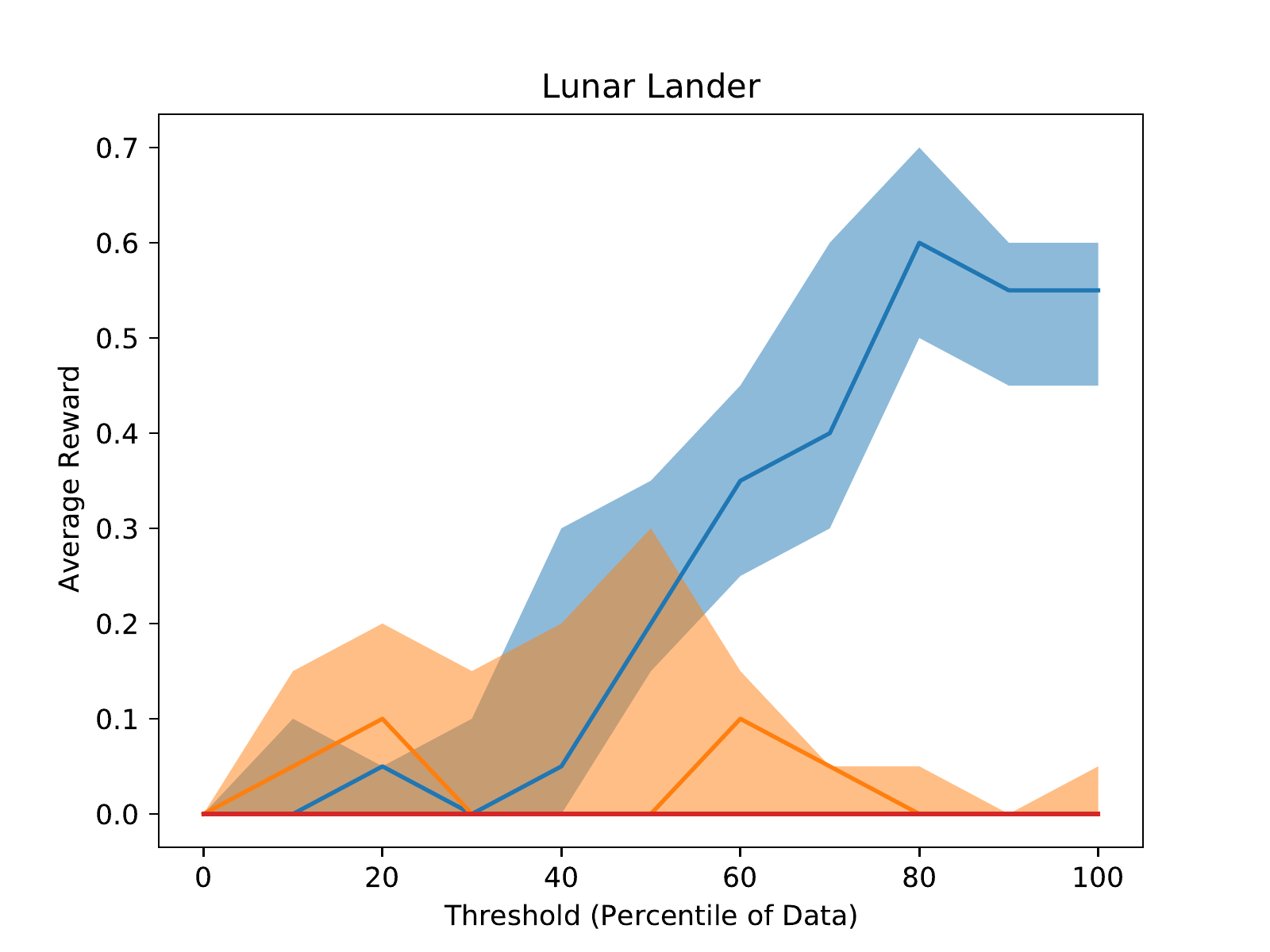}
\includegraphics[width=.4\columnwidth, trim={0 0 0 0},clip]{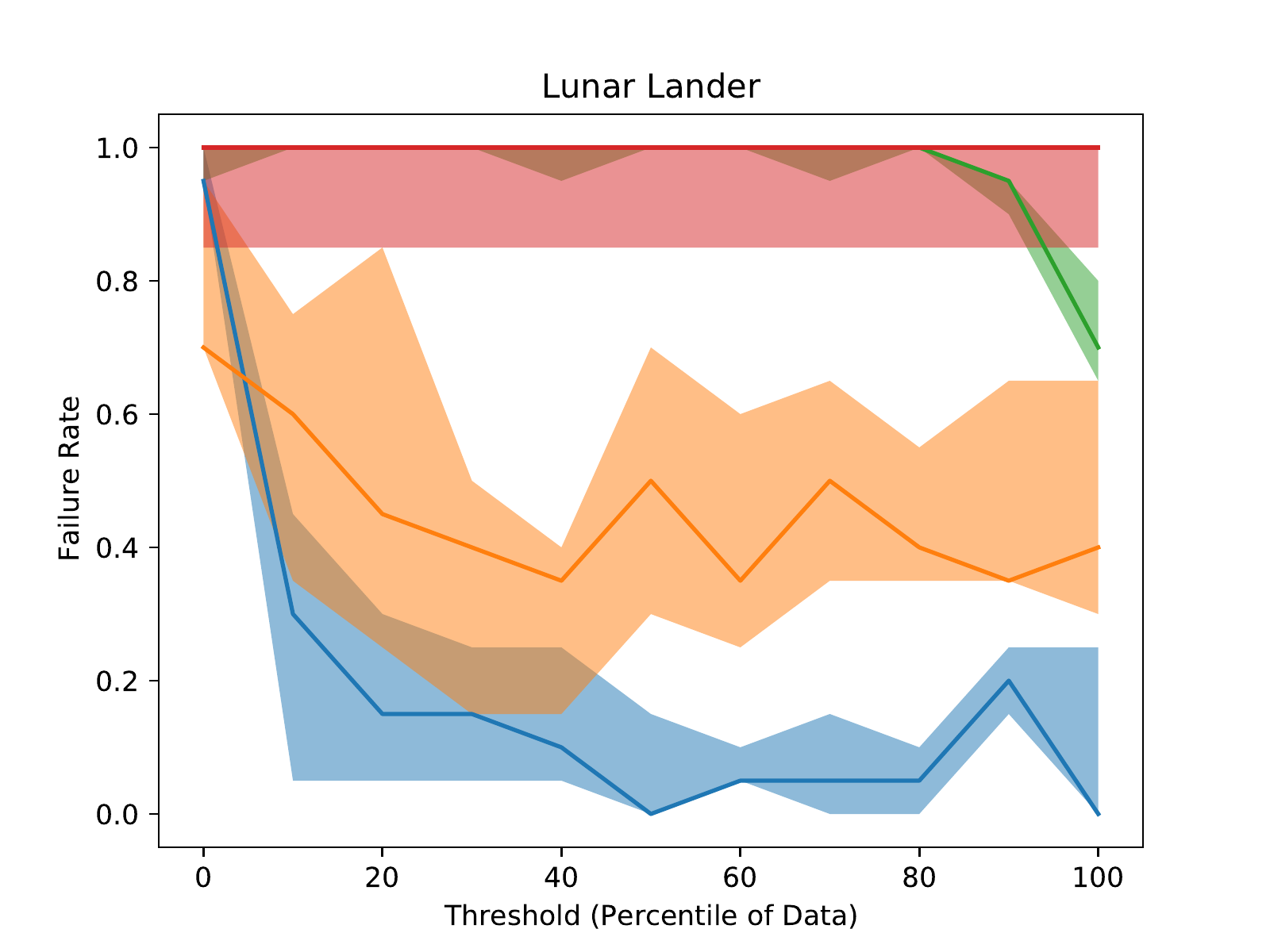} \\
\includegraphics[width=.4\columnwidth, trim={0 0 0 0},clip]{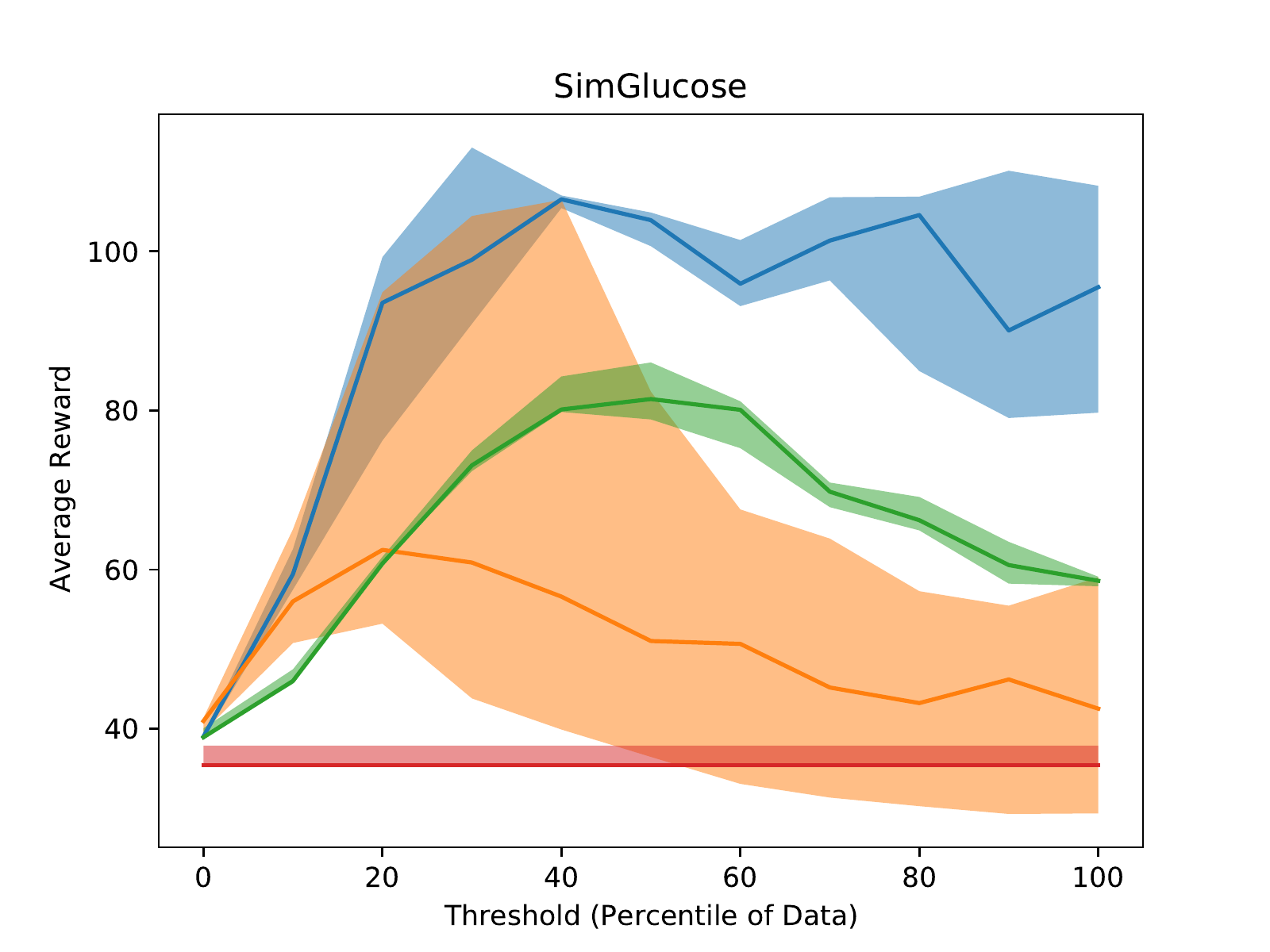}
\includegraphics[width=.4\columnwidth, trim={0 0 0 0},clip]{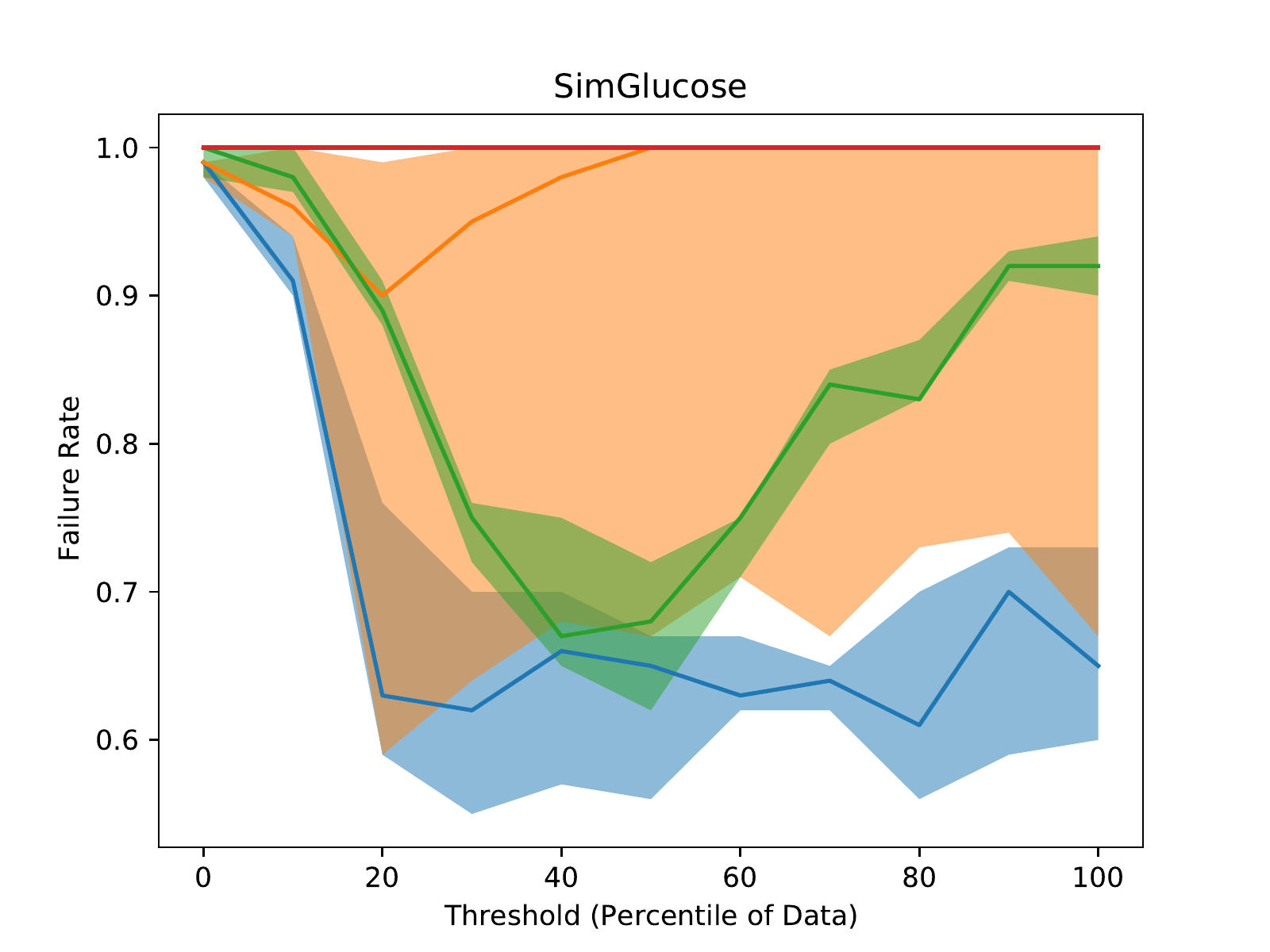} \\
\caption{Evaluation of average reward and failure rate over different threshold values for the hopper, lunar lander, and SimGlucose tasks. The x axis of the plots on the left represents the percent of the dataset whose value under the constraint function falls below the associated threshold $c$. More specifically, $x = \text{percentile}(\{W(\st^i, \at^i)\}_{i=1}^N, c)$, where $W$ is the constraint function. We use this representation of constraint values in order to make them comparable across different constraint functions. The solid line represents the median performance over random seeds, and the top and bottom of the semi-transparent region indicates the 25/75th percentiles. We ran 5 random seeds for each tasks. }
\label{full_plots}
\vspace{-1em}
\end{figure}

\section{Guarantees under Partial Observability}\label{app:partial}

The LDM as formulated above addresses fully observed tasks with Markovian state spaces. In many cases, the state observations may not be perfectly Markovian, and the system may be partially observed. We can extend the LDM to non-Markovian state observations, and retain appealing guarantees. Let $\mathcal{O}$ be the observation space and $h:\mathcal{S} \rightarrow \mathcal{O}$ the function that maps a state to an observation. For the remainder of the partially observed setting, we will slightly overload the notation $P$ to refer to the density on $\mathcal{O} \times \mathcal{A}$. We analyze the behavior of the same algorithm as before when deployed under partial observability. 

Intuitively, in the partially observed case, the LDMs can be seen as operating on a dynamical system that captures the \emph{expected} transitions in observation space. Formally, we define $f^\mathrm{obs}:\mathcal{O}\times\mathcal{A}\rightarrow \mathcal{O}$ as $f^\mathrm{obs}(o, a) = \mathbb{E}_{s|o} [h(f(s,a))]$, which we can see as a \textit{reduced} system capturing the expected trajectory progression, where the expectation is over all possible underlying states. More concretely, we are taking the entire distribution over possible states (which will only get narrower if we include history) and use the expected value under that distribution to define an associated dynamical system. Note that the maximal LDM for the reduced system constructed in this manner can be obtained with essentially the same algorithm as before. In a sense, we are able to maintain the exact same approach, and focus only what this existing algorithm guarantees under this setting. Formally, we will update the LDM using the empirical estimate of the full information backup operator $
\mathcal{T} G(o, a) = \max\{E(o,a), \min_{a'\in\mathcal{A}}\gamma G(f^\mathrm{obs}(o, a), a')\}$. The only difference is that, by sampling $(o, a, o')$, we may have $o'\neq f^\mathrm{obs}(o, a)$. As a result we cannot directly fit the above and instead go over each transition in the dataset and fit the following partial information back-up estimate $\widehat{\mathbb{E}\mathcal{T}}$,
the \textit{sampled} counterpart to Eq. \ref{eq:bellman_op_density} (averaging over possible next observations):
\begin{equation*}\label{eq:ldm_partial_backup_estimate}
\widehat{\mathbb{E}\mathcal{T}} G(o, a)= \frac{\sum_{\mathcal{O}_D(o,a)} \max\{E(o,a), \min_{a'\in\mathcal{A}}\gamma G(o', a')\} }{|\mathcal{O}_D(o,a)|} 
\end{equation*}
where $\mathcal{O}_D(o,a) = \{o'\in \mathcal{O} \text{ s.t. } (o, a, o') \in D\}$. Plugging this into update (\ref{ldm_update}) yields:
\begin{equation*}
\hat{G}_{k+1} \in \arg\min\limits_{G\in\mathcal{G}} \sum_{(o, a) \in D} \left(G(o, a) -\widehat{\mathbb{E}\mathcal{T}} G_k(o, a)\right)^2    
\end{equation*}
Given that the LDM has a natural extension to this setting, the main insight of this section is that the same procedure as before still provides a distributional shift guarantee, but with an additional error term that accounts for the variability in the observation for the same underlying state. This is captured in the following proposition:
\begin{proposition}\label{prop:partial} Let $\hat{G}_0 = E$. The result of applying the sample bellman update $K$ times satisfies:
\begin{equation*}
\|\hat{G}_K - \hat{G}^\star\|_P \leq \dfrac{R \cdot \epsilon_{\mathrm{ls, obs}}}{1-\gamma} + \gamma^K \cdot \|E - \hat{G}^\star\|_{\infty},
\end{equation*}
for $\epsilon_{\mathrm{ls,obs}} \doteq \max\limits_{t\in[K-1]}\|\hat{G}_{t+1} - \widehat{\mathbb{E}\mathcal{T}}\hat{G}_t\|_P + \|\mathcal{T}\hat{G}_t - \widehat{\mathbb{E}\mathcal{T}}\hat{G}_t\|_P$.
\end{proposition}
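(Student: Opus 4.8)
The plan is to mirror the proof of Proposition~\ref{prop:ldm_fqi} almost verbatim, but carried out on the reduced observation-space system $f^{\mathrm{obs}}(o,a) = \mathbb{E}_{s\mid o}[h(f(s,a))]$ in place of the true dynamics $f$. The only genuinely new ingredient is that the update no longer fits $\mathcal{T}\hat{G}_t$ but instead fits the \emph{sampled} backup $\widehat{\mathbb{E}\mathcal{T}}\hat{G}_t$, so the effective per-iteration residual must absorb the discrepancy between these two operators. First I would observe that $\mathcal{T}G(o,a)=\max\{E(o,a),\min_{a'}\gamma G(f^{\mathrm{obs}}(o,a),a')\}$ has exactly the same algebraic form as the operator in Eq.~\ref{eq:bellman_op_density}, with $f^{\mathrm{obs}}$ playing the role of $f$ and the density $P$ now living on $\mathcal{O}\times\mathcal{A}$. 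Consequently the contraction argument from the proof of Proposition~\ref{prop:ldm_fqi} applies without change: for any $G_1,G_2$ and any $t\geq 1$ one reconstructs a sequence of actions along $f^{\mathrm{obs}}$ and invokes the recoverability factor $R$ (now defined for the reduced system via Definition~\ref{recoverability_def}) to obtain $\|\mathcal{T}^{(t)}G_1-\mathcal{T}^{(t)}G_2\|_P\leq R\gamma^t\|G_1-G_2\|_P$. I would also reuse the fact that $\hat{G}^\star$ is the fixed point of $\mathcal{T}$, i.e. $\mathcal{T}\hat{G}^\star=\hat{G}^\star$.

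The key step is to rewrite the per-iteration error so that it exposes the partial-observability term. Because $\hat{G}_{t+1}$ is the least-squares fit to $\widehat{\mathbb{E}\mathcal{T}}\hat{G}_t$, inserting that quantity and applying the triangle inequality gives
\begin{equation*}
\|\hat{G}_{t+1}-\mathcal{T}\hat{G}_t\|_P \leq \|\hat{G}_{t+1}-\widehat{\mathbb{E}\mathcal{T}}\hat{G}_t\|_P + \|\widehat{\mathbb{E}\mathcal{T}}\hat{G}_t-\mathcal{T}\hat{G}_t\|_P \leq \epsilon_{\mathrm{ls,obs}},
\end{equation*}
which is precisely the definition of $\epsilon_{\mathrm{ls,obs}}$ after maximizing over $t\in[K-1]$. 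With this bound on the one-step residual in hand, I would run the identical telescoping as before: expand $\|\hat{G}_K-\hat{G}^\star\|_P$ by repeatedly inserting $\mathcal{T}^{(j)}\hat{G}_{K-j}$, bound each cross term $\|\mathcal{T}^{(j)}\hat{G}_{K-j}-\mathcal{T}^{(j+1)}\hat{G}_{K-j-1}\|_P\leq R\gamma^j\epsilon_{\mathrm{ls,obs}}$ via the contraction, sum the geometric series $R(1+\gamma+\cdots+\gamma^{K-1})\epsilon_{\mathrm{ls,obs}}\leq \frac{R}{1-\gamma}\epsilon_{\mathrm{ls,obs}}$, and retain the initialization term $\|\mathcal{T}^{(K)}\hat{G}_0-\hat{G}^\star\|_P$, which by the $\gamma$-contraction of $\mathcal{T}$ in the $\infty$-norm is at most $\gamma^K\|E-\hat{G}^\star\|_\infty$ (recalling $\hat{G}_0=E$). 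Collecting these yields exactly the stated bound.

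The main obstacle — or rather, the place that demands care rather than computation — is the term $\|\mathcal{T}\hat{G}_t-\widehat{\mathbb{E}\mathcal{T}}\hat{G}_t\|_P$. Unlike the fully observed case, this does \emph{not} vanish as the dataset grows: $\widehat{\mathbb{E}\mathcal{T}}$ averages $\max\{E,\min_{a'}\gamma G(o',a')\}$ over the sampled next observations $o'$, whereas $\mathcal{T}$ evaluates the same expression at the single expected observation $f^{\mathrm{obs}}(o,a)=\mathbb{E}[o']$; since the composition of the inner $\min$ with the outer $\max$ is neither convex nor concave, expectation and the backup do not commute and a genuine bias remains. The resolution, which is what keeps the proposition clean, is to not attempt to bound this bias away but to fold it into $\epsilon_{\mathrm{ls,obs}}$, where it measures exactly the irreducible error caused by the spread of next-observations for a fixed underlying state; the contraction machinery then propagates it through the iterations in the same geometric fashion as the ordinary fitting error.
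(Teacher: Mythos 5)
Your proposal is correct and follows essentially the same route as the paper: the paper treats Proposition~\ref{prop:partial} as a direct consequence of the argument for Proposition~\ref{prop:ldm_fqi} applied to the (deterministic) reduced system $f^{\mathrm{obs}}$, with the per-iteration residual $\|\hat{G}_{t+1}-\mathcal{T}\hat{G}_t\|_P$ bounded via the triangle inequality by the fitting error plus the sampling-bias term $\|\mathcal{T}\hat{G}_t-\widehat{\mathbb{E}\mathcal{T}}\hat{G}_t\|_P$, exactly as you do. Your closing observation that this bias is irreducible (since the $\max$/$\min$ backup does not commute with the expectation over next observations) and is simply absorbed into $\epsilon_{\mathrm{ls,obs}}$ matches the paper's own discussion following the proposition.
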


Observe that the difference from the guarantee in Prop.~\ref{prop:ldm_fqi} is the additional $\|\mathcal{T}\hat{G}_t - \widehat{\mathbb{E}\mathcal{T}}\hat{G}_t\|_P$ term which captures the errors due to fitting $\widehat{\mathbb{E}\mathcal{T}}$ rather than $\mathcal{T}$, since the latter cannot be computed due to lack of access to $f^{\mathrm{obs}}$. As a result, we are able to formalize and motivate the behavior of our algorithm on partially observed systems, showing that: (i) the LDM is useful for bounding the probability of staying in-distribution under the expected transitions, and (ii) the convergence rate depends on how well the sampled Bellman operator estimate matches the true Bellman operator under the $f^{\mathrm{obs}}$. Note that if there is no information loss, then the bounds match the previous derivations exactly.

Further, if we assume there is some some constant $M > 0$ such that the magintude of the second derivative of $\mathcal{T} \cdot G$ for any $G\in\mathcal{G}$ is bounded by $M$,
we can guarantee that $\epsilon_{\mathrm{ls, obs}} =\mathcal{O}\left(\sqrt{\frac{M^2 \sigma_{\mathrm{obs}}}{|D|}}\right)$, where $\sigma_{\mathrm{obs}} = \sup_{o, a}\text{Var}_{s|o, a}[h(f(s,a))]$ captures the maximum variance of observation transitions. Note that $\sigma_{\mathrm{obs}}\rightarrow 0$ is equivalent to the observation capturing more and more of the true state, and hence results in the partially observed guarantee converging to its fully observed counterpart as we would expect.

%%%%%%%%%%%%%%%%%%%%%%%%%%%%%%%%%%%%%%%%%%%%%%%%%%%%%%%%%%%%%%%%%%%%%%%%%%%%%%%
%%%%%%%%%%%%%%%%%%%%%%%%%%%%%%%%%%%%%%%%%%%%%%%%%%%%%%%%%%%%%%%%%%%%%%%%%%%%%%%

\end{document}